\documentclass[11pt,english]{article}
\usepackage[utf8]{inputenc}
\usepackage{amsmath}
\usepackage{amsthm}
\usepackage{amssymb}
\usepackage{amsfonts}
\usepackage{comment}
\usepackage{mathtools}
\usepackage{hyperref}
\usepackage{algorithm}
\usepackage{algorithmic}
\usepackage{graphicx}
\usepackage[font=footnotesize]{caption}
\usepackage{xcolor}
\usepackage{enumitem}
\usepackage{nicefrac}
\DeclareMathOperator*{\argmin}{arg\,min}

\usepackage{a4wide}
\usepackage{chngpage}
\usepackage{authblk}

\newtheorem{assumption}{Assumption}
\newtheorem{remark}{Remark}
\newtheorem{theorem}{Theorem}

\newtheorem{lemma}{Lemma}
\newtheorem{definition}{Definition}

\usepackage[T1]{fontenc}
\usepackage{babel}
\usepackage{threeparttable}
\usepackage{booktabs}
\usepackage{etoolbox}
\appto\TPTnoteSettings{\footnotesize}

\author[1]{Aleksandar Armacki}
\author[2]{Dragana Bajovic}
\author[3]{Dusan Jakovetic}
\author[1]{Soummya Kar}
\affil[1]{Carnegie Mellon University, Pittsburgh, PA, USA\\ \texttt{\{aarmacki,soummyak\}@andrew.cmu.edu }}
\affil[2]{Faculty of Technical Sciences, University of Novi Sad, Novi Sad\\ \texttt{dbajovic@uns.ac.rs}}
\affil[3]{Faculty of Sciences, University of Novi Sad, Novi Sad\\ \texttt{dusan.jakovetic@dmi.uns.ac.rs}}

\title{A One-shot Framework for Distributed Clustered Learning in Heterogeneous Environments}
\date{}

\begin{document}

\maketitle

\begin{abstract}
  The paper proposes a family of communication efficient methods for distributed learning in heterogeneous environments in which users obtain data from one of $K$ different distributions. In the proposed setup, the grouping of users (based on the data distributions they sample), as well as the underlying statistical properties of the distributions, are apriori unknown. A family of One-shot Distributed Clustered Learning methods (ODCL-$\mathcal{C}$) is proposed, parametrized by the set of admissible clustering algorithms $\mathcal{C}$, with the objective of learning the true model at each user. The admissible clustering methods include $K$-means (KM) and convex clustering (CC), giving rise to various one-shot methods within the proposed family, such as ODCL-KM and ODCL-CC. The proposed one-shot approach, based on local computations at the users and a clustering based aggregation step at the server is shown to provide strong learning guarantees. In particular, for strongly convex problems it is shown that, as long as the number of data points per user is above a threshold, the proposed approach achieves order-optimal mean-squared error (MSE) rates in terms of the sample size. An explicit characterization of the threshold is provided in terms of problem parameters. The trade-offs with respect to selecting various clustering methods (ODCL-CC, ODCL-KM) are discussed and significant improvements over state-of-the-art are demonstrated. Numerical experiments illustrate the findings and corroborate the performance of the proposed methods.
\end{abstract}

\section{Introduction}

Distributed learning (DL) is a paradigm where many users collaborate, with the aim of jointly training a model. Each user holds a private dataset and the training process is either coordinated by a central server~\cite{zhang-duchi,pmlr-v54-mcmahan17a}, or done in a peer-to-peer (i.e., decentralized) manner~\cite{kar-link_failure,kar-link_failure2,jakovetic-unification,jakovetic-fast}. While such an approach helps alleviate the storage and computation burden for any single user, it imposes significant communication costs on the system as a whole~\cite{comm-comp,konevcny2016federated,pmlr-v124-mishchenko20a,hong_distributed-features,poor-overview,comm-admm}. In this paper we focus on the setting where a central server coordinates the training and in the reminder of the paper we will refer to the client-server setup as distributed. A specific instance of DL that has attracted a lot of interest recently is federated learning (FL)~\cite{pmlr-v54-mcmahan17a}. FL differs from the standard DL approaches in that only a subset of users is active in each training round and users perform multiple local computation rounds before synchronizing, trading computation for communication. Another issue associated with FL comes from statistical heterogeneity, as users often contain datasets generated by different distributions, making the system as a whole highly heterogeneous. A global model can therefore be very bad for an individual user~\cite{wang2020tackling,yu2020salvaging,bedi2023fedbc}.

\begin{table*}[t]
\caption{Comparison of methods for distributed clustered learning. We use the order notation to hide problem related constants, instead highlighting the dependence on the number of available samples $n$, size of cluster $|C_k|$, total number of users and clusters, $m$ and $K$, and the separation $D$. The notation $|C_{(i)}|$ stands for the size of the $i$-th largest cluster. Methods prefixed by ``I'' allow for solving the local problems inexactly, up to precision $\varepsilon$. Acronyms: CR - Communication Rounds, SR - Sample Requirement (number of samples required for the guarantees to hold), IC - Initialization Conditions (sufficiently good initialization), CK - Cluster Knowledge (knowing the true cluster structure), $K$ - Knowledge of $K$, SS - Sufficient Separation ($D$ sufficiently large). We use $\surd$ to denote that the method does not require a condition, while $\times$ means that the method requires a condition (e.g., IFCA requires sufficient separation). }
\label{tab:conver}
\begin{adjustwidth}{-1in}{-1in} 
\begin{center}
\begin{threeparttable}
\begin{small}
\begin{sc}
\begin{tabular}{lccccccr}
\toprule
Method &  CR & SR & IC & CK & $K$ & SS \\
\midrule
IFCA~\cite{ieee_ghosh}    & $\mathcal{O}\left(\frac{m}{|C_{(K)}|}\log\left(\frac{D^2n|C_{(K)}|^5}{K^2m^4}\right)\right)$ & $\Omega\left(\max\left\{|C_{(1)}|,\frac{K}{D^4}\right\}\right)$ & $\times$ & $\surd$ & $\times$ & $\times$\\
ALL-for-ALL~\cite{even2022sample}    & $\Theta\left(\frac{nm}{K}\log\left(\frac{nm}{K}\right) \right)$  & $\Omega\left(\log\left(\frac{nm}{K} \right)\right)$ & $\surd$ & $\times$\tnote{*} & $\surd$ & $\surd$ \\
ODCL-KM (This paper)   & 1 & $\Omega\left(\max\left\{|C_{(1)}|, \frac{(|C_{(K)}| + \sqrt{m})^2}{|C_{(K)}|D^2} \right\} \right)$ & $\surd$ & $\surd$ & $\times$ & $\surd$ \\
ODCL-CC (This paper) & 1 & $\Omega\left(\max\left\{|C_{(1)}|, \frac{(m - |C_{(K)}|)^2}{|C_{(K)}|^2D^2} \right\} \right)$ & $\surd$ & $\surd$ & $\surd$ & $\surd$ \\
I-ODCL-KM (This paper)   & 1 & $\Omega\left(\max\left\{|C_{(1)}|, \left(\frac{|C_{(K)}|^2D^2}{(\sqrt{|C_{(K)}|} + \sqrt{m})^2} - \varepsilon\right)^{-1} \right\} \right)$ & $\surd$ & $\surd$ & $\times$ & $\surd$ \\
I-ODCL-CC (This paper) & 1 & $\Omega\left(\max\left\{|C_{(1)}|, \left(\frac{|C_{(K)}|^2D^2}{(m - |C_{(K)}|)^2} - \varepsilon\right)^{-1} \right\} \right)$ & $\surd$ & $\surd$ & $\surd$ & $\surd$ \\
\bottomrule
\end{tabular}
\end{sc}
\end{small}
\begin{tablenotes}
    \item[*] The authors provide a method for estimating cluster structure as a preprocessing step, that only applies to quadratic losses, under strong assumptions.
\end{tablenotes}
\end{threeparttable}
\end{center}
\vskip -0.1in
\end{adjustwidth}
\end{table*}

Many different approaches addressing the shortcomings of the global model have been proposed. One such approach is clustered learning (CL), that arises naturally in many applications. For example, consider a speech recognition problem. Speech recognition models are known to be sensitive to accents and dialects, e.g.,~\cite{speech1,speech2}, with software such as Alexa or Youtube's closed caption struggling with, e.g., Irish and Scottish accents. While English is spoken in both countries, the different accents necessitate different models. Another example is that of next word prediction considered in~\cite{sattler}, where users jointly train a language model for next-word prediction from text messages. The statistics of users' messages varies based on demographic factors, interests, etc. For example, messages composed by teenagers exhibit different statistics than those composed by elderly people. Both examples illustrate the need for \emph{personalized models}, catering to the needs of individual users. While desirable, complete personalization is not achievable in many applications, due to the sheer number of users and computational cost it incurs. CL offers \emph{partial} personalization, where users' models are not completely individualized, but the similarity of users belonging to the same group is exploited to create better models. The underlying assumption in CL is the presence of $K$ different data distributions $\mathcal{D}_k$, with each user sampling their data from only one of them. This leads to a natural clustering of users, given by $C_k = \{i \in [m]: \: \text{user $i$'s data follows distribution $\mathcal{D}_k$} \}$. Since each user samples data from only one distribution, it follows that $\cup_{k \in [K]}C_k = [m]$ and $C_k \cap C_j = \emptyset$. In such a scenario the goal is to learn $K$ models associated with the underlying clusters, so that users belonging to the same clusters have the same models. Allowing for $1 \leq K \leq m$, CL can again be seen as an intermediary between the global and local learning, with $K=1$ resulting in a global model, while $K = m$ resulting in purely local models. A key aspect of this approach is computational efficiency compared to fully personalized learning, as it can significantly decrease the number of models produced when $K \ll m$. Many works assume a clustered structure among users, e.g.,~\cite{ieee_ghosh,even2022sample,mansour2020three,ghosh2019robust,Sattler_clustered,armacki2022personalized}.  

The problem this work aims to address has multiple layers. Firstly, note that the grouping of users is not known apriori, hence, is it possible to simultaneously learn the models and the grouping of the users? Secondly, modern DL systems often consist of millions of devices, communicating millions of parameters,~\cite{fed_mobile_edge}. Can learning be facilitated in such scenarios, where more than a few communication rounds is impossible? Thirdly, is it possible to guarantee good performance of a method that works under all of the said constraints? We answer these questions positively in Theorem~\ref{thm:main}, Section~\ref{subsec:main}, and give the exact conditions, in the form of requirements on the sample size, under which this is possible. Finally, is it possible to trade the sample requirements, established in Theorem~\ref{thm:main}, for computation cost or MSE accuracy? This question is answered positively, in Theorems~\ref{thm:inexact_erm} and~\ref{thm:inexact_clust} (Sections~\ref{subsec:inexact} and~\ref{subsec:inexact_clust}).

To address the many layers of the problem considered in this work, we develop a family of one-shot methods, dubbed ODCL-$\mathcal{C}$, that requires a single communication round. The main advantage of the proposed family is the immense communication saving achieved, while offering personalization and maintaining order-optimal generalization guarantees. Note that the conventional iterative methods, such as FedAvg~\cite{pmlr-v54-mcmahan17a}, have very poor performance guarantees when limited to a few communication rounds. Therefore, the proposed family offers a prime candidate in scenarios of extreme communication constraints, as it is guaranteed to both require a single communication round and to generalize well, given a sufficient number of local samples. Our framework offers personalization to users, by producing multiple models based on an inferred grouping of the users. Another advantage is the ease of implementation of the method, as discussed in Section~\ref{sec:method} ahead. Moreover, due to the flexibility of choice of the clustering algorithm, we provide a family of easy to implement personalization algorithms with enormous communication savings and strong theoretical guarantees.

In Table~\ref{tab:conver} we compare different methods for CL along several dimensions. Compared to existing methods~\cite{ieee_ghosh,even2022sample}, the ODCL-$\mathcal{C}$ family reduces communication cost by a factor of $\mathcal{O}\left(\frac{\kappa m}{|C_{(K)}|}\log\left(\frac{2D}{\varepsilon}\right)\right)$, where $\kappa = \frac{L}{\mu}$ is the condition number, $m$ is the total number of users, $|C_{(K)}|$ is the size of the smallest cluster, $D$ is the minimal distance between different population optimal models (see Assumption~\ref{asmpt:clusters} ahead), while $\varepsilon$ is the final accuracy threshold (see Section~\ref{subsec:comparison} ahead). Even though ODCL-$\mathcal{C}$ provides significant communication savings, it still achieves the order-optimal MSE and has comparable sample requirements to the state-of-the-art methods in~\cite{ieee_ghosh,even2022sample}. Comparing the additional requirements, we see that ODCL-CC is the most flexible algorithm, requiring no prior knowledge, strong separation or good initialization. ODCL-KM requires only knowledge of the true number of clusters $K$, same requirement made by IFCA~\cite{ieee_ghosh} and a weaker requirement compared to the true underlying clustering structure, made by~\cite{even2022sample}. Additionally, the proposed family requires no initialization conditions, unlike IFCA, that requires sufficiently good initialization for the guarantees to hold. A more detailed comparison with order-optimal methods~\cite{ieee_ghosh} and~\cite{even2022sample} is presented in Section~\ref{subsec:comparison}, while a more detailed literature review is provided after the contributions.

\textbf{Contributions.} The main contributions of the paper are:

$1)$ We propose a family of one-shot methods for distributed CL under the presence of statistical heterogeneity, termed ODCL-$\mathcal{C}$. The family is parametrized by a class of admissible clustering algorithms $\mathcal{C}$, allowing for adaptability to different scenarios (e.g., whether the true number of clusters $K$ is available). The choice of different clustering algorithms gives rise to new methods, e.g., using $K$-means, convex clustering and $K$-means++ gives rise to ODCL-KM, ODCL-CC and ODCL-KM++, respectively. Allowing for the local problems to be solved inexactly gives rise to inexact methods, such as I-ODCL-KM, I-ODCL-CC, I-ODCL-KM++. Due to working in a client-server setup, ODCL-$\mathcal{C}$ is applicable to FL.

$2)$ For strongly convex costs, ODCL-$\mathcal{C}$ is shown to achieve order-optimal MSE guarantees in terms of sample size, i.e., matches the order-optimal MSE rates of centralized learning, provided that each user has a number of data points above a threshold. This establishes regimes in which a single communication round suffices for order-optimality. Moreover, the proposed methods remove strong requirements made by state-of-the-art methods, such as initialization sufficiently close to the population optima and strong population separation, made by~\cite{ieee_ghosh}, and knowledge of the cluster structure, made by~\cite{even2022sample}. Our family is applicable even when the number of clusters is unknown, via ODCL-CC.

$3)$ We develop a new and unified probabilistic analysis of clustering methods, showing that the true clusters are recovered with high probability. To do so, we leverage the interplay of the quality of locally produced models and (deterministic) recovery guarantees of clustering algorithms. This requires tying in tail probability bounds for generalization to recovery of clusters, which is of independent interest.

$4)$ We explicitly derive the expression for the threshold on the number of data points for the users to achieve order-optimal MSE rate and show how it depends on various system parameters, e.g., size of clusters, difficulty of the clustering problem. We specialize the sample requirement for ODCL-CC and ODCL-KM, highlighting how different methods from the family result in different sample requirements.

$5)$ Finally, we verify our theoretical findings via numerical experiments on both synthetic and real data. On synthetic data we show that the proposed methods achieve order-optimal MSE rate and match the performance of oracle methods that know the true cluster membership. On real data we show that our methods achieve test accuracy comparable to oracle methods, regardless of initialization, whereas the performance of~\cite{ieee_ghosh} strongly depends on the quality of initialization.

\textbf{Literature review.} We next review the related literature, in particular, one-shot and CL methods, as well as a closely related learning problem, namely that of distributed estimation. 

\emph{One-shot methods} have been studied in~\cite{zhang-duchi,guha2019one,zhou2020distilled,one-shot_fl,one_shot_clust}. The work~\cite{zhang-duchi} studies one-shot averaging, where each user trains a model locally and the final model is produced by averaging all the local models. The method achieves the same MSE guarantees as centralized learning, i.e., order-optimal rates in terms of sample size, provided that the number of samples available to each user is above a threshold. The work~\cite{guha2019one} proposes to train $K $ ensemble based methods for supervised and semi-supervised problems. The work~\cite{zhou2020distilled} proposes a one-shot distillation method. The work~\cite{one-shot_fl} studies one-shot methods in distributed settings under constraints on the communication budget and shows order-optimality under certain regimes. The work~\cite{one_shot_clust} introduces a one-shot FL method designed specifically for data clustering. The methods in~\cite{zhang-duchi,one-shot_fl} provide strong theoretical guarantees, however, they assume that the data across all users follows a single distribution. Note that our use of term \emph{one-shot} refers to the number of \emph{communication rounds}, which is somewhat different from the use in \emph{few-shot learning} where the goal is to learn a model with a limited \emph{number of samples}, e.g.,~\cite{Tao_2022,WANG2021107510,10.1145/3386252}. In modern DL systems, particularly in FL, data across different users comes from different distributions, hence violating the single distribution assumptions made in prior works. \emph{To the best of our knowledge, no theoretical results for one-shot methods are established under the presence of statistical heterogeneity, i.e., multiple data distributions}\footnote{\cite{one_shot_clust} provide theoretical analysis under heterogeneity, but the method is designed only for data clustering and not general learning.}. 

\emph{CL} has been studied in~\cite{ieee_ghosh,even2022sample,mansour2020three,ghosh2019robust,Sattler_clustered,armacki2022personalized}. Works~\cite{ieee_ghosh,mansour2020three} propose similar methods, that iteratively estimate cluster membership and perform model updates, with order-optimal rates in terms of sample size. The method in~\cite{ghosh2019robust} is a robust algorithm for clustered learning, under the presence of adversarial users. The method in~\cite{Sattler_clustered} is based on successive bi-partitioning of the users. The method in~\cite{armacki2022personalized} aims to simultaneously infer the clustering of users and train models, without requiring knowledge of the number of clusters. Finally,~\cite{even2022sample} study oracle complexity lower-bounds and propose an algorithm that matches the lower bound. All of the methods require many rounds of communication and model training, among which, the best is achieved by~\cite{ieee_ghosh}, requiring $\mathcal{O}\left(\frac{\kappa m}{|C_{(K)}|}\log\left( \frac{2D}{\varepsilon}\right) \right)$ communication rounds.

\emph{Distributed estimation} (DE) is a well-studied problem, where the goal is to estimate a common global model parameter based on data held at different users. It is studied both in decentralized networks, e.g.,~\cite{robust_LMS,diffusion_LMS,diffusion_LMS1,diffusion_LMS2,sensor_attacks,kar_estimaton}, and distributed ones, e.g.,~\cite{distributed-estimation}. Work~\cite{robust_LMS} proposes an algorithm for robust least-mean squares (LMS) estimation over a network, while~\cite{diffusion_LMS,diffusion_LMS1,diffusion_LMS2} study diffusion algorithms for LMS over adaptive networks. Work~\cite{sensor_attacks} studies DE under sensor attacks, while~\cite{kar_estimaton} studies DE under non-linear observations and imperfect communication. Finally,~\cite{distributed-estimation} studies DE with reduced-dimensionality sensor observations. The classical linear DE problem with squared norm loss, i.e., $\ell(\theta;x_i) = \frac{1}{2}\|A_i\theta - b_i \|^2$, where $x_i = (A_i,b_i)$ is the data available to user $i$, is a special case of the more general learning problem considered in this paper, with $\ell$ being a general convex function (see Assumption~\ref{asmpt:loss_function} ahead; see also~\cite{vlaski_et_al} for a discussion along these lines). Moreover, the classical DE problem assumes \emph{homogeneity}, i.e., each user samples from the same distribution and the goal is to estimate a single global model, while we assume \emph{heterogeneity}, with the goal of estimating $K$ models corresponding to $K$ distributions. This entails inferring from which distribution each user samples their data, making the problem more complex.
 
\textbf{Paper Organization.} The rest of the paper is organized as follows. Section~\ref{sec:problem} formally defines the problem. Section~\ref{sec:method} describes the proposed family. Section~\ref{sec:theory} presents the main results of the paper. Section~\ref{sec:numerical} presents numerical results. Finally, Section~\ref{sec:conclusion} concludes the paper. Appendix contains the proofs ommited from the main body. The reminder of the section introduces the notation used throughout the paper.

\textbf{Notation.} The $d$-dimensional vector space of real numbers is denoted by $\mathbb{R}^d$. $\mathbb{N}$ denotes the set of positive integers. $\langle \cdot, \cdot \rangle$ denotes the Euclidean inner product and $\| \cdot \|$ denotes the induced norm. In a slight abuse of notation, we will also use $\| \cdot \|$ to denote the induced matrix norm, while $\| \cdot\|_F$ denotes the Frobenius norm. $[m]$ denotes the set of positive integers up to and including $m$, i.e., $[m] = \{1,2,\ldots,m \}$. $\{S_k\}_{k \in [K]}$ denotes a collection of sets indexed by $[K]$, i.e., $\{S_k\}_{k\in [K]} = \{S_k: \: k \in [K]\}$. For a collection of sets $\{S_k\}_{k \in [K]}$, $S_{(k)}$ denotes the $k$-th largest set. For a set $A$, $\text{int}(A)$ denotes its interior. $\mathcal{O}(\cdot)$, $\Omega(\cdot)$ are the standard ``big O'' and ``big Omega'', i.e., for two non-negative sequences $\{a_n\}_{n \in \mathbb{N}}$ and  $\{b_n\}_{n \in \mathbb{N}}$, the relation $a_n = \mathcal{O}(b_n)$ ($a_n = \Omega(b_n)$) implies the existence of a global constant $C_1 > 0$ and $n_1 \in \mathbb{N}$, such that $a_n \leq C_1 b_n$ ($a_n \geq C_1b_n$), for all $n \geq n_1$.

\section{Problem formulation and preliminaries}\label{sec:problem}

\noindent Consider $m$ users that participate in a DL system. Each user $i \in [m]$ contains a local dataset $x_{ij} \in \mathcal{X}$, $j  \in [n_i]$, distributed according to an unknown local distribution $\mathcal{D}_i$. Here, $n_i \in \mathbb{N}$ is the number of samples available to user $i$ and $\mathcal{X} \subset \mathbb{R}^{d^\prime}$ is the data\footnote{The space $\mathcal{X}$ is also referred to as \emph{feature} space in the literature.} space. Given a loss function $\ell: \Theta \times \mathcal{X} \mapsto \mathbb{R}$, each user forms the local \emph{empirical loss} $f_i: \Theta \mapsto \mathbb{R}$, given by
\begin{equation}\label{eq:local_loss}
    f_i(\theta) = \frac{1}{n_i}\sum_{j = 1}^{n_i}\ell(\theta;x_{ij}).
\end{equation} The local empirical loss represents an estimate of the \emph{population loss} $F_i: \Theta \mapsto \mathbb{R}$, given by
\begin{equation}\label{eq:population_loss}
    F_i(\theta) = \mathbb{E}_{X_i \sim \mathcal{D}_i}[\ell(\theta;X_i)],
\end{equation} where $X_i$ is the data generating random variable distributed according to $\mathcal{D}_i$. The goal of each user is to train a model that performs well with respect to the population loss~\eqref{eq:population_loss}, using only the local empirical loss~\eqref{eq:local_loss} and leveraging the empirical losses of similar users. To make this notion more precise and formally introduce the problem we want to solve, we now state the assumption used throughout the paper. 

\begin{assumption}\label{asmpt:clusters}
There exist $K$ different data distributions in the system, with $1 < K \leq m$, such that each user samples independent, identically distributed (IID) data from only one of the distributions. Let $\theta_k^\star \coloneqq \argmin_{\theta_k \in \Theta}F_k(\theta_k)$ denote population optimal models of each cluster $k \in [K]$ and let $D = \min_{k \neq l}\| \theta_k^\star - \theta_l^\star \|$. We then assume $D > 0$.
\end{assumption}

\begin{remark}
    Assumption~\ref{asmpt:clusters} defines the problem setup we consider. It is standard in clustered learning~\cite{ieee_ghosh,even2022sample,ghosh2019robust}.
\end{remark}

Assumption~\ref{asmpt:clusters} provides a natural partitioning of the set of users $[m]$, given by $\{C_k\}_{k \in [K]}$, where $C_k$'s are mutually disjoint and their union is the whole set $[m]$. Under Assumption~\ref{asmpt:clusters}, the models offering optimal statistical guarantees can be trained by solving
\begin{equation}\label{eq:emp_clust2}
    \argmin_{\theta_1,\ldots,\theta_K \in \Theta}\sum_{k = 1}^K \frac{|C_k|}{m} g_k(\theta_k),
\end{equation} where $g_k: \Theta \mapsto \mathbb{R}$ is the cluster-wise loss, $g_k(\theta) = \frac{1}{|C_k|}\sum_{i \in C_k}f_i(\theta)$. To see why such an approach is optimal recall that, for a user $i \in C_k$, the optimal population model is given by $\theta_k^\star = \argmin_{\theta_k \in \Theta}F_k(\theta_k)$. If we denote the empirical risk minimizers (ERMs) as $\widehat{\theta}_i = \argmin_{\theta_i \in \Theta}f_i(\theta_i)$, applying results from, e.g.,~\cite{Shalev-Shwartz_sco}, gives $\|\theta^\star_k - \widehat{\theta}_i\|^2 = \mathcal{O}\left(\frac{1}{n_i}\right)$ (with high probability). Let  $\widehat{\theta}_k$ be minimizer of the empirical cluster-wise cost, i.e., $\widehat{\theta}_k = \argmin_{\theta_k \in \Theta}g_k(\theta_k)$. We then have $\|\theta^\star_k - \widehat{\theta}_k\|^2 = \mathcal{O}\left(\frac{1}{n_k}\right)$, where $n_k = \sum_{i \in C_k}n_i$ is the total number of available samples from $\mathcal{D}_k$, showing the benefits of CL. Actually, the $\mathcal{O}\left( \frac{1}{n_k}\right)$ decay is the best achievable in our setting, see, e.g., the Hajek-Le Cam minimax theorem (Theorem 8.11 in \cite{vaart_1998}) for a formal account of the statement; see also~\cite{zhang-duchi}. On the other hand, by Assumption~\ref{asmpt:clusters} and cluster design, the benefits of further merging (and/or modifying) the clusters, in terms of sample size, can potentially be significantly outweighted by the distribution skew between two different clusters (see Section~\ref{subsec:inexact_clust} ahead). We now state the rest of the assumptions used in the paper.

\begin{assumption}\label{asmpt:param_space}
The parameter space $\Theta \subset \mathbb{R}^d$ is a compact convex set, with $\theta^\star_k \in \text{int}(\Theta)$, for all $k \in [K]$. Let $R > 0$ be such that $\|\theta\| \leq R$, for all $\theta \in \Theta$.
\end{assumption}

\begin{remark}
Assumption~\ref{asmpt:param_space} is a standard assumption on the parameter space in statistical learning, e.g.,~\cite{zhang-duchi,Shalev-Shwartz_sco,erm-sco}.
\end{remark}

\begin{assumption}\label{asmpt:loss_function}
For any fixed $x \in \mathcal{X}$ and any $\theta, \theta^\prime \in \Theta$, the loss function $\ell$ is: 1) nonnegative, i.e., $\ell(\theta;x) \geq 0$; 2) convex, i.e., $\ell(\theta^\prime;x) \geq \ell(\theta;x) + \langle \nabla \ell(\theta;x), \theta^\prime - \theta \rangle$; 3) $L$-smooth, i.e., $\ell(\theta^\prime;x) \leq \ell(\theta;x) + \left\langle \nabla \ell(\theta;x), \theta^\prime - \theta \right\rangle + \frac{L}{2}\|\theta^\prime - \theta\|^2$.
\end{assumption}

\begin{remark}\label{rem:smooth_loss}
Note that Assumption~\ref{asmpt:loss_function} implies non-negativity, convexity and  $L$-smoothness of all of $F_k$, $g_k$, $f_i$, $k \in [K]$, $i \in [m]$. Moreover, under convexity of $\ell$, the smoothness condition is equivalent to Lipschitz continuous gradient of $\ell$, i.e., for any $x \in \mathcal{X}$ and any $\theta, \theta^\prime \in \Theta$, $\| \nabla \ell(\theta;x) - \nabla \ell(\theta^\prime;x)\| \leq L\| \theta - \theta^\prime \|$.
\end{remark}

\begin{assumption}\label{asmpt:pop_loss}
For each $k \in [K]$, the population loss $F_k(\theta) = \mathbb{E}_{X_k \sim \mathcal{D}_k}[\ell(\theta;X_k)]$ is $\mu_{F_k}$-strongly convex, i.e., for all $\theta, \theta^\prime \in \Theta$, $F_k(\theta^\prime) \geq F_k(\theta) + \langle \nabla F_k(\theta), \theta^\prime - \theta \rangle + \frac{\mu_{F_k}}{2}\|\theta - \theta^\prime \|^2$. Denote by $\mu_{F} = \min_{k \in [K]}\mu_{F_k}$.
\end{assumption}

\begin{remark}
    Assumption~\ref{asmpt:pop_loss} ensures that the population loss is well-behaved and allows the study of MSE of the models. If Assumption~\ref{asmpt:pop_loss} is relaxed to allow for convex, or non-convex population loss, it would no longer be possible to analyze the MSE rate. Different measures of convergence (in expectation) would have to be considered , such as the optimality gap, i.e., $\mathbb{E}\left[ F(\theta) - F(\theta^\star)\right]$, or the gradient norm-squared, i.e., $\mathbb{E}\|\nabla F(\theta)\|^2$, for convex and non-convex population loss, respectively. Relaxing the strong convexity assumption on the population loss is an interesting direction for future work.
\end{remark}

\begin{assumption}\label{asmpt:loc_behaviour}
For each $k \in [K]$, there exists a neighborhood $U_k = \{\theta \in \Theta: \|\theta - \theta^\star_k\| \leq \rho_k \}$, where $\rho_k > 0$ and constants $H_k > 0$ such that, for all $x \in \mathcal{X}$ and any $\theta, \theta^\prime \in U_k$, the loss $\ell$ satisfies $\| \nabla^2 \ell(\theta;x) - \nabla^2 \ell(\theta^\prime;x)\| \leq H_k\|\theta - \theta^\prime\|$.
\end{assumption}

\begin{remark}
Assumption~\ref{asmpt:loc_behaviour} requires the loss $\ell$ to be well-behaved in neighborhoods of the population optimal models. Akin to Assumption 3 in~\cite{zhang-duchi}, this assumption is necessary for averaging to work. As our family of methods uses model averaging, such an assumption is unavoidable. We refer the reader to~\cite{zhang-duchi} for an elaborate discussion on this requirement.
\end{remark}

\begin{assumption}\label{asmpt:unif_bdd_ptws}
    There exist constants $N_k > 0$, $k \in [K]$, such that, for all $x \in \mathcal{X}$, $\|\nabla_{\theta}\ell(\theta_k^\star,x)\| \leq N_k$.
\end{assumption}

\begin{remark}
    Assumption~\ref{asmpt:unif_bdd_ptws} requires the gradients of $\ell$ to be uniformly bounded, in only a finite number of points (the population optimas $\theta^\star_k$, $k \in [K]$). This is a mild assumption on the behaviour of the gradients, also made in~\cite{erm-sco}.
\end{remark}

\begin{remark}
    Assumption~\ref{asmpt:unif_bdd_ptws} can be compared to the subgaussian noise assumption in~\cite{even2022sample}. While Assumption~\ref{asmpt:unif_bdd_ptws} implies uniformly bounded noise for a finite number of points, it does not make any assumptions on the behaviour of the noise on the rest of the parameter space. On the other hand, while subgaussian noise is less strict than assuming uniform boundedness, it is imposed on the whole parameter space.
\end{remark}

From Remark~\ref{rem:smooth_loss} and compactness of $\Theta$, we can see that all of $F_k$'s have bounded gradients. Denote the bounds by $G_{F_k} = \max_{\theta \in \Theta}\|\nabla F_k(\theta)\|$, $k \in [K]$. Appealing to the mean value theorem, we can see that $F_k$'s are Lipschitz continuous, with constants $G_{F_k}$. From the $L$-Lipschitz continuous gradients of $\ell$ and the continuity of the Hessian of $\ell$ on $U_k$, we can conclude that the Hessian of $\ell$ is uniformly bounded on $U_k, \: k \in [K]$, with the bounding constant being equal to $L$, i.e., $L = \max_{x \in \mathcal{X},\theta \in U_k}\|\nabla^2 \ell(\theta;x)\|$, for all $k \in [K]$. 

Note that the formulation~\eqref{eq:emp_clust2} implicitly assumes the knowledge of the true clustering structure. In reality, the distributions and their associated clustering structures are not known. Moreover, even the exact number of different distributions, $K$, is typically not available. Therefore, the formulation~\eqref{eq:emp_clust2} is impossible to obtain and solve in practice. In what follows, we propose a family of methods that is able to deal with these issues, by correctly identifying the true clusters and producing models that offer the same order-optimal MSE guarantees, as the models with knowledge of the true clustering structure, e.g., ones trained using~\eqref{eq:emp_clust2}.

\section{The ODCL-$\mathcal{C}$ family of methods}\label{sec:method}

\noindent In this section, we outline the family of one-shot methods, ODLC-$\mathcal{C}$. To that end, we first define a general cluster separability condition and the notion of admissible clustering algorithms used in our framework.

Let $\{a_i\}_{i \in [n]} \subset \mathbb{R}^d$ be a set of $n$ datapoints and $\{C_k\}_{k \in [K]}$ be a disjoint clustering of the datapoints into $K$ clusters, with $C_k = \{i \in [n]: a_i \text{ belongs to cluster } k\}$. Let  $\mu_k = \frac{1}{|C_k|}\sum_{i \in C_k}a_i$, $k \in [K]$, denote the cluster means.

\begin{definition}\label{def:sep_cond}
     We say that the dataset $\{a_i\}_{i \in [n]}$ is \emph{separable} with respect to the clustering $\{C_k \}_{k \in [K]}$, if for some $\alpha > 1$ we have
    \begin{equation}\label{eq:recovery_condition}
        \alpha\max_{i \in C_k, k \in [K]}\|\mu_k - a_i\| < \min_{l \neq k}\|\mu_k - \mu_l \|.
    \end{equation} 
\end{definition}

\begin{remark}\label{rmk:alpha}
    Equation~\eqref{eq:recovery_condition} has a simple interpretation. The left-hand side of~\eqref{eq:recovery_condition} is the largest cluster radius, with the right-hand side being the minimal distance between different cluster centers. Therefore,~\eqref{eq:recovery_condition} requires that the clusters be compact and sufficiently separated. From this perspective, the parameter $\alpha$ can be seen as a measure of separation of the clusters, with larger $\alpha$ implying more separated clusters. In Remark~\ref{rmk:dual_alpha} ahead we highlight another interpretation of $\alpha$, as a measure of efficiency of a clustering algorithm.
\end{remark}

In what follows, we will refer to a dataset $\{a_i\}_{i \in [n]}$ as separable if there exists some underlying clustering $\{C_k\}_{k \in [K]}$, such that the condition~\eqref{eq:recovery_condition} is satisfied for some $\alpha > 1$. Next, let $\Xi(\cdot)$ be a clustering algorithm that takes as input some hyperparameter(s) $\eta$. Some examples of clustering algorithms and hyperparameters are presented next.

1) \emph{Convex clustering} - choosing $\Xi$ to be the convex clustering algorithm, the hyperparameter $\eta$ is the convex clustering penalty parameter, $\lambda > 0$. A formal description of the algorithm is provided in Appendix~\ref{app:special_algos}.

2) \emph{$K$-means} - choosing $\Xi$ to be the $K$-means's algorithm, the hyperparameter $\eta$ is the number of clusters, $K \in \mathbb{N}$. A formal description of the algorithm is provided in Appendix~\ref{app:special_algos}.

3) \emph{Gradient clustering} - choosing $\Xi$ to be the gradient clustering algorithm, the hyperparameters $\eta$ are the number of clusters $K \in \mathbb{N}$ and the step-size $\alpha > 0$. For further details, the reader is referred to~\cite{armacki2022gradient}.

From the examples above we can see that different clustering methods exhibit different requirements with respect to apriori knowledge of parameters. For example, both $K$-means and gradient clustering require knowledge of the true number of clusters $K$, while convex clustering requires a specific choice of the penalty parameter $\lambda$ in order to recover the true clustering. Next, we define the class of \emph{admissible} clustering algorithms, used in our framework.  

\begin{definition}
    We say that a clustering algorithm $\Xi$ is \emph{admissible} if for any separable dataset $\{a_i \}_{i \in [n]}$, there exist hyperparameter(s) $\eta$, such that $\Xi(\eta)$ exactly recovers the underlying clustering. Denote the class of admissible clustering algorithms as $\mathcal{C}$.
\end{definition}

In Section~\ref{subsec:cvx_km} we show that both $K$-means and convex clustering belong to the class of admissible algorithms. In order for each user to obtain an approximation of its population loss minimizer in the absence of knowing the true clustering structure, and while pursuing a single communication round in a server-client setting, we propose a method that is summarized in Algorithm~\ref{alg:one-shot}. Initially, each user obtains a model by minimizing the local empirical loss and sends the model to the server. This can be achieved by implementing any standard learning algorithm, e.g., gradient descent. Next, the server receives selects a clustering algorithm and performs clustering on the received models, obtaining clusters of the users' local models. Finally, the server averages the local models and sends the averaged models to the users, according to cluster membership. As such, the algorithm is straightforward to implement in practice. A diagram illustrating the workflow of the proposed method on a toy example is presented in Figure~\ref{fig:algorithm}. Some practical guidelines on selecting hyperparameter(s) $\eta$ for different clustering algorithms can be found in Appendix~\ref{app:practical}.

\begin{algorithm}[tb]
   \caption{ODCL-$\mathcal{C}$}
   \label{alg:one-shot}
\begin{algorithmic}
   \STATE {\bfseries 1. Each user $i \in [m]$:} obtain a local model $\widehat{\theta}_i = \argmin_{\theta_i \in \Theta}f_i(\theta_i)$ and send it to the server.
   \STATE {\bfseries 2. The server:} (i) Receive models $\{\widehat{\theta}_i\}_{i \in [m]}$, choose a clustering algorithm $\Xi \in \mathcal{C}$ and hyperparameter(s) $\eta$.
   \STATE (ii) Run the clustering algorithm $\Xi(\eta)$ on $\{\widehat{\theta}_i \}_{i \in [m]}$, resulting in $K^\prime$ clusters $\{C_k^\prime \}_{k \in [K^\prime]}$.
   \STATE (iii) Average local models according to the resulting clusters, i.e., $\widetilde{\theta}_k = \frac{1}{|C_k^\prime|}\sum_{i \in C_k^\prime}\widehat{\theta}_i$, for all $k \in [K^\prime]$.
   \STATE (iv) Send the models according to cluster assignment, i.e., send $\widetilde{\theta}_k$ to each $i \in C_k^\prime$, for all $k \in [K^\prime]$. 
\end{algorithmic}
\end{algorithm}

\begin{figure}[!t]
    \centering
    \includegraphics[width=0.7\columnwidth]{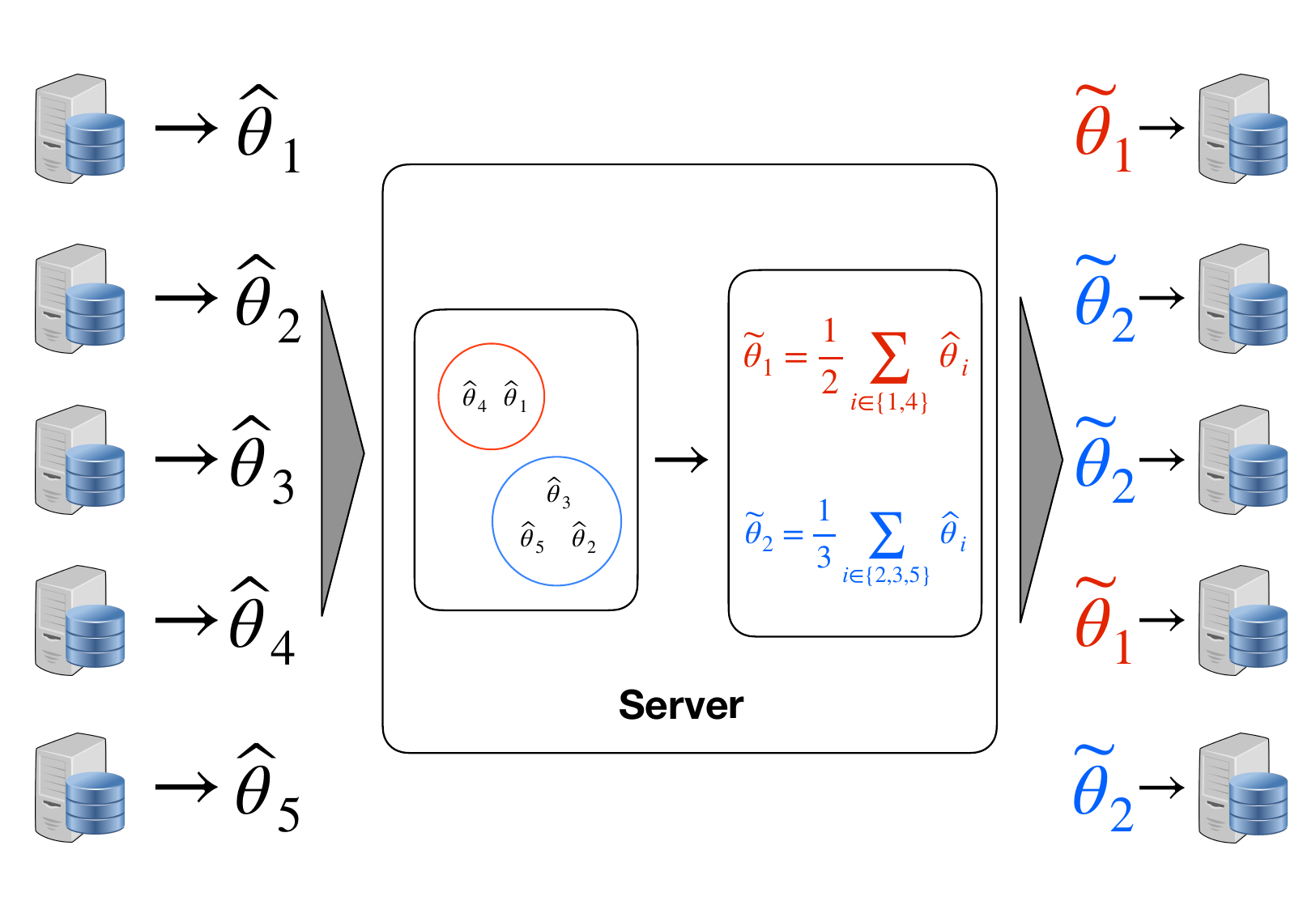}
    \caption{A diagram of the workflow of the ODCL-$\cal C$ method, with $m = 5$ users. Initially, each user produces the local ERM $\widehat{\theta}_i$. Next, users upload their models to the server. The server runs a clustering algorithm, producing two clusters and averages the ERMs according to the clustering. Finally, the resulting models $\widetilde{\theta}_1$ and $\widetilde{\theta}_2$ are broadcast to users, according to cluster membership.}
    \label{fig:algorithm}
\end{figure}

\begin{remark}
     Computation wise, the key bottlenecks of the algorithm are: 1. computing the local ERM; and 2. the clustering step. Local ERMs can be computed by any optimization algorithm and the cost depends on the problem related parameters, e.g., condition number. A method that trades computation costs of finding local ERMs for sample requirements is studied in Theorem~\ref{thm:inexact_erm} ahead. The clustering step can be computationally intense, depending on the problem dimension $d$ and the number of users $m$. The complexity of this step also depends on the choice of the clustering algorithm, e.g., depending on the availability of a good estimate of the number of clusters, one can use $K$-means or convex clustering, with $K$-means incurring a much lower computation cost.
\end{remark}
\begin{remark}
    Compared to traditional FL and one-shot methods with heterogeneous data, our method offers \emph{personalization}, as the former methods produce a \emph{single model}. This one-size-fits-all approach is oblivious to statistical heterogeneity and needs of different users, and often tends to be biased in the direction of the users that perform and send the largest number of updates~\cite{wang2020tackling}, or have the largest number of samples~\cite{yu2020salvaging}, effectively ignoring the users with smaller datasets or less computational power. On the other hand, our method aims to identify the presence of different distributions and harness information from all the users belonging to the distribution. Moreover, our approach treats the contribution of all users equally, by uniformly averaging the models.
\end{remark}

\section{Theoretical guarantees}\label{sec:theory}

\noindent In this section, we present theoretical guarantees of the proposed family ODCL-$\mathcal{C}$. For the sake of simplicity, we assume $n_i = n$, for all $i \in [m]$. Section~\ref{subsec:main} presents the main result. Section~\ref{subsec:cvx_km} specializes the result to ODCL-KM and ODCL-CC. Section~\ref{subsec:comparison} compares our methods with the methods in~\cite{ieee_ghosh,even2022sample}. Section~\ref{subsec:inexact} allows for the local ERMs to be solved inexactly, while Section~\ref{subsec:inexact_clust} presents a method with reduced sample requirements.

\subsection{Main results}\label{subsec:main}

\noindent In this section we present the main result of the paper, establishing that our methods achieve order-optimal MSE rate. The proofs from this section can be found in Appendix~\ref{app:proof_thm1}.

\begin{theorem}\label{thm:main}
Let Assumptions~\ref{asmpt:clusters}-\ref{asmpt:unif_bdd_ptws} hold and let $\mathcal{C}$ be the class of admissible clustering algorithms. If the number of samples per user satisfies $n \geq 3$ and $\frac{n}{\log n} > \frac{16M\alpha^2}{D^2}$, where $M = \max_{i \in C_k, k \in [K]}M_{ik}$, with $M_{ik}$ problem dependent constants defined in Appendix~\ref{app:proof_thm1}, then for all $k \in [K]$, the models produced by ODCL-$\mathcal{C}$ achieve the MSE
\begin{align*}
    \mathbb{E}\|\widetilde{\theta}_k - \theta^\star_k\|^2 &= \mathcal{O}\Bigg(\frac{N_k^2}{n|C_k|\mu_{F_k}^2} + \frac{R^2N^2}{n|C_{(K)}|\mu_F^2D^2}  + \frac{\lambda_k + \nicefrac{\lambda R^2}{D^2}}{n^2} + \frac{\upsilon_k}{n^2|C_k|} + \frac{R^2\upsilon}{n^2|C_{(K)}|D^2} \\ &+ \frac{\nu_k + \nicefrac{\nu R^2}{D^2} + R^2m}{n^3} + \left(dR^2 + \frac{\tau_k}{n} \right)de^{-\alpha_k n} + \left(\frac{dKR^4}{D^2} + \frac{R^2\tau}{nD^2}\right)de^{-\alpha n} \Bigg)
\end{align*} where $N^2$,$\lambda$,$\upsilon$,$\tau$,$\nu$ are the sums of the cluster-wise values, e.g., $\nu = \sum_{k \in [K]}\nu_k$; $\mu_F = \min_{k \in [K]}\mu_{F_k}$, $\alpha = \min_{k \in [K]}\alpha_k$, with $\upsilon_k,\lambda_k,\nu_k,\tau_k$,$\alpha_k$ defined in Lemma~\ref{lm:duchi}, Appendix~\ref{app:proof_thm1}.
\end{theorem}

\noindent \textit{Proof outline}. The high-level idea of the proof is to construct an event $\Psi$ on which the clustering algorithm recovers the true clusters. It then follows by Lemma~\ref{lm:duchi} that on $\Psi$, the models produced by our method have order-optimal MSE. Next, we show that $\Psi$ occurs with probability of the order $1 - \Omega\left(\frac{1}{n^2} \right)$. Combining the two observations and using the decomposition
\begin{align*}
    \mathbb{E}\|\widetilde{\theta}_k - \theta_k^\star \|^2 = \mathbb{E}\mathbb{I}_{\Psi}\|\widetilde{\theta}_k - \theta_k^\star\|^2 + \mathbb{E}\mathbb{I}_{\Psi^c}\|\widetilde{\theta}_k - \theta_k^\star\|^2 \leq \mathbb{E}\|\overline{\theta}_k - \theta_k^\star\|^2 + 4R^2\mathbb{P}(\Psi^c), 
\end{align*} it follows that the MSE of our method is order-optimal. Here, $\overline{\theta}_k = \frac{1}{|C_k|}\sum_{i \in C_k}\widehat{\theta}_i$ is the true average across the users belonging to cluster $C_k$, while $\mathbb{I}_{\Psi}$ is the indicator of $\Psi$. The full proof, detailing the construction of $\Psi$, showing that the correct clustering is recovered on $\Psi$ and that $\Psi$ occurs with sufficiently high probability, can be found in Appendix~\ref{app:proof_thm1}.

Theorem~\ref{thm:main} provides the MSE rate of the proposed family. This rate is equivalent to the one achieved by training a centralized learner on each cluster, while simultaneously achieving significant communication savings, requiring only a single communication round.

\begin{remark}\label{rmk:num_samples}
There are two sets of parameters that affect the performance of our methods, hyperparameters of the clustering algorithm and problem related parameters. Hyperparameters of the clustering algorithm, such as the penalty parameter $\lambda$ for convex clustering, or the number of clusters for $K$-means, can be controlled by the user and directly affect the results, via the quality of produced clusters. The problem related parameters affect the performance indirectly, via the difficulty of the resulting problem, e.g., we can identify three components of the condition on the number of samples in Theorem~\ref{thm:main}, that reflect the complexity of different aspects of the problem: 1) $M$ reflects the difficulty of the learning problem, as it depends on parameters like the condition number; 2) $\alpha$ reflects the efficiency of the clustering algorithm (see Remark \ref{rmk:dual_alpha} ahead); 3) $D$ reflects the difficulty of the clustering problem (recall Assumption~\ref{asmpt:clusters}).
\end{remark}

\begin{remark}
    Theorem 1 suggests that the main limitation of the proposed method in practice is the requirement on the number of available samples. In particular, if users have an insufficient number of samples, the local ERMs may be far from the population models and the clustering algorithm may fail to identify the true clusters. This can potentially produce models that do not perform well. We explore this in numerical simulations in Section~\ref{sec:numerical}, where we consider settings in which users have only a few samples $n$, with dimension $d \gg n$.
\end{remark}

\begin{remark}\label{rmk:rv1discussion}
    Theorem~\ref{thm:main} gives the learning guarantees achievable when only one communication round is possible. Assuming a few-shot communication regime, e.g., $K$-shot, we first note that that our algorithm can not be directly extended to $K$-shot, as it learns the local ERMs from scratch, producing unrelated models in each round. Consider any algorithm ran for $K$ communication rounds. Such a naive implementation can be seen as an intermediary between full communication, i.e., communicating until the problem is solved exactly, and one-shot considered in this work. Assuming, for the sake of simplicity, the data is homogeneous (IID), the problem reduces to learning a single global model. Then, it is well known that both one-shot and full communication methods achieve order-optimal MSE. On the other hand, $K$-shot generalizes sub-optimally, as the training is interrupted after only a few rounds and the resulting model can be very far from the true optimum (see, e.g., Section 5.4 in~\cite{Shalev-Shwartz_sco}). Extending our method to a $K$-shot scenario would require a fundamental change in the algorithm and is an interesting direction for future work.
\end{remark}

\begin{remark}
    The impact of noisy data points, such as outliers, on our method is two-pronged. We require a sufficiently well-behaved loss function, via Assumptions~\ref{asmpt:loss_function} and~\ref{asmpt:unif_bdd_ptws}, which implies the following bound, for any $x \in \mathcal{X}$, $\theta \in \Theta$, and $k \in [K]$, 
    \begin{align*}
        \|\nabla \ell(\theta;x) - \nabla F_k(\theta) \| &= \|\nabla \ell(\theta;x) - \nabla F_k(\theta) \pm \nabla \ell(\theta;x^\star)\| \leq \|\ell(\theta;x) - \ell(\theta^\star_k;x) \| + \|\ell(\theta^\star_k;x) \| + \|\nabla F_k(\theta) \| \\ &\leq L\|\theta - \theta^\star_k\| + N_k + G_{F_k} \leq 2LR + N_k + G_{F_k},
    \end{align*} i.e., the ``variance'' depends on $L,R,N_k$ and $G_{F_k}$. For small $R$, the value of $G_{F_k}$ will also be small (recall the definition of $G_{F_k}$ in Section~\ref{sec:problem}), and the variance will only depend on $N_k$. Looking at the MSE and condition on the number of samples in Theorem~\ref{thm:main}, we can see that $N_k$ appears in the leading term in the MSE, as well as in the sample complexity requirement, via the term $M$. Therefore, more outliers (i.e., large $N_k$) result in both higher sample requirements and worse MSE rates of our method (in the form of larger problem related constants). The implications are similar if the radius $R$ is large, i.e., more outliers and larger variance results in both higher sample requirements and worse MSE rates of our method.
\end{remark}

\subsection{Guarantees of ODCL-CC and ODCL-KM}\label{subsec:cvx_km}

\noindent In this section we show that both convex clustering and $K$-means are guaranteed to recover the true clustering, if condition~\eqref{eq:recovery_condition} is satisfied (for a specific value of $\alpha$). The proofs, as well as description of the algorithms, are in Appendix~\ref{app:special_algos}.

\begin{lemma}\label{lm:conv_clust_rec}
    If condition~\eqref{eq:recovery_condition} is satisfied with $\alpha_{cc} = \frac{4(m - |C_{(K)}|)}{|C_{(K)}|}$, then there exist values of $\lambda > 0$\footnote{There exists a range of values for $\lambda$ that guarantees that the true clusters will be recovered, given by the interval $\left[\frac{\max_{i \in C_k,k\in [K]}\|\mu_{k} - a_i\|}{|C_{(K)}|}, \frac{\min_{k\neq l,k,l\in [K]}\|\mu_k - \mu_l\|}{2(m - |C_{(K)}|)} \right).$} for which convex clustering recovers the true clusters.
\end{lemma}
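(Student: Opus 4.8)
The plan is to verify the Karush--Kuhn--Tucker (KKT) conditions for a candidate solution having the correct cluster structure, exploiting that the convex clustering objective
\[
P(U) = \tfrac{1}{2}\sum_{i=1}^{m}\|a_i - u_i\|^2 + \lambda\sum_{i<j}\|u_i - u_j\|
\]
is strongly convex in $U = (u_1,\dots,u_m)$, hence has a \emph{unique} minimizer $U^\star$, and the recovered clustering is read off from which $u_i^\star$ coincide. It therefore suffices to exhibit a $U^\star$ of the form $u_i^\star = v_k$ for all $i \in C_k$, with the $v_k$ pairwise distinct, and certify optimality through a collection of subgradients $z_{ij}\in\partial\|u_i-u_j\|$ obeying $z_{ij}=-z_{ji}$, $\|z_{ij}\|\le 1$, and the stationarity relation $u_i^\star - a_i + \lambda\sum_{j\neq i}z_{ij}=0$ for every $i$.

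First I would pin down the within-cluster values. Summing stationarity over $i\in C_k$, the within-cluster subgradients cancel by skew-symmetry, and using $\sum_{i\in C_k}a_i=|C_k|\mu_k$ I obtain the reduced condition $v_k = \mu_k - \lambda\sum_{l\neq k}|C_l|\,\xi_{kl}$, where $\xi_{kl}\in\partial\|v_k-v_l\|$. This is exactly the optimality condition of the \emph{agglomerated} convex program $\min_{v_1,\dots,v_K}\tfrac12\sum_k\sum_{i\in C_k}\|a_i-v_k\|^2+\lambda\sum_{k<l}|C_k||C_l|\|v_k-v_l\|$, whose unique minimizer supplies the $v_k$. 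Since $\|\xi_{kl}\|\le 1$ holds even at coincident points, this yields the shrinkage estimate $\|v_k-\mu_k\|\le \lambda\sum_{l\neq k}|C_l|\le\lambda(m-|C_{(K)}|)$, and hence $\|v_k-v_l\|\ge\|\mu_k-\mu_l\|-2\lambda(m-|C_{(K)}|)$. Imposing the \emph{upper} endpoint $\lambda<\min_{k\neq l}\|\mu_k-\mu_l\|\big/\big(2(m-|C_{(K)}|)\big)$ forces all $v_k$ distinct, so the between-cluster subgradients may be taken as the unit vectors $z_{ij}=(v_k-v_l)/\|v_k-v_l\|$ for $i\in C_k,\,j\in C_l$, which are automatically feasible.

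The crux is the \emph{within}-cluster dual feasibility. Substituting the reduced stationarity into the per-point condition for $i\in C_k$ collapses the between-cluster term and leaves the requirement $\sum_{j\in C_k,\,j\neq i}z_{ij}=(a_i-\mu_k)/\lambda$ for each $i$; note the right-hand sides sum to zero over $C_k$, matching the skew-symmetry constraint. I would realize such a family explicitly by setting $z_{ij}=(r_i-r_j)/|C_k|$ with $r_i:=(a_i-\mu_k)/\lambda$: skew-symmetry is immediate, the row sums verify via $\sum_{j\in C_k}r_j=0$, and $\|z_{ij}\|\le 2\max_{i\in C_k}\|a_i-\mu_k\|/(\lambda|C_k|)$. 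Feasibility $\|z_{ij}\|\le 1$ is then guaranteed by the \emph{lower} endpoint $\lambda\ge 2\max_{i,k}\|\mu_k-a_i\|/|C_{(K)}|$. (A tighter routing of the within-cluster flow can shave the constant to match the sharper lower endpoint quoted in the footnote.)

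Finally I would combine the two endpoints: the admissible $\lambda$-interval is nonempty precisely when $2\max_{i,k}\|\mu_k-a_i\|/|C_{(K)}| < \min_{k\neq l}\|\mu_k-\mu_l\|\big/\big(2(m-|C_{(K)}|)\big)$, which rearranges to $\tfrac{4(m-|C_{(K)}|)}{|C_{(K)}|}\max_{i,k}\|\mu_k-a_i\| < \min_{k\neq l}\|\mu_k-\mu_l\|$, i.e. condition~\eqref{eq:recovery_condition} with $\alpha=\tfrac{4(m-|C_{(K)}|)}{|C_{(K)}|}$. For any $\lambda$ in this range the candidate satisfies KKT, so by strong convexity it is the unique minimizer, and its equivalence classes are exactly $\{C_k\}_{k\in[K]}$, giving exact recovery. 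I expect the main obstacle to be the within-cluster construction together with its norm bound (the step responsible for the factor producing $\alpha=4$ rather than $\alpha=2$), and the careful handling of the subgradient at coincident centers $v_k=v_l$ so that the shrinkage estimate remains valid \emph{before} distinctness has been established.
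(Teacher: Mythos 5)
Your proposal is correct, but it takes a genuinely different route from the paper. The paper treats the recovery guarantee for convex clustering as a black box: it invokes Corollary~7 of the cited convex clustering reference, which states that exact recovery holds whenever $\lambda$ lies in the interval $\bigl[\max_k\max_{i,j\in C_k}\|a_i-a_j\|/|C_k|,\ \min_{k\neq l}\|\mu_k-\mu_l\|/(2m-|C_k|-|C_l|)\bigr)$, and then the entire proof consists of the triangle inequality $\|a_i-a_j\|\leq\|a_i-\mu_k\|+\|a_j-\mu_k\|$ together with $|C_k|\geq|C_{(K)}|$ and $2m-|C_k|-|C_l|\leq 2(m-|C_{(K)}|)$ to show that condition~\eqref{eq:recovery_condition} with $\alpha=\frac{4(m-|C_{(K)}|)}{|C_{(K)}|}$ makes this interval nonempty. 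You instead rebuild the recovery guarantee from scratch via a primal--dual certificate: the agglomerated $K$-center program pins down the within-cluster values and gives the shrinkage bound $\|v_k-\mu_k\|\leq\lambda(m-|C_{(K)}|)$ (correctly noting that $\|\xi_{kl}\|\leq 1$ holds before distinctness is established, so there is no circularity), the upper endpoint of the $\lambda$-interval enforces distinctness of the $v_k$, and the explicit within-cluster subgradients $z_{ij}=(r_i-r_j)/|C_k|$ with $r_i=(a_i-\mu_k)/\lambda$ satisfy skew-symmetry, the row-sum identity, and $\|z_{ij}\|\leq 1$ under the lower endpoint; strong convexity then upgrades the KKT certificate to uniqueness. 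Both arguments land on the same interval and the same $\alpha$, and your accounting makes transparent where each endpoint (and hence the factor $4$ rather than $2$ in $\alpha$) comes from, at the cost of essentially re-deriving the cited corollary; the paper's version is two lines but leaves the substance to the reference.
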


\begin{lemma}\label{lm:$K$-means_rec}
    If condition~\eqref{eq:recovery_condition} is satisfied with $\alpha_{km} = 2 + \frac{2c\sqrt{m}}{\sqrt{|C_{(K)}|}}$, for some global constant $c > 0$\footnote{The authors in~\cite{kmeans_awasthi} show that $c \geq 100$ suffices, hence we will assume $c \geq 100$.} and the true number of clusters $K$ is known, then spectral $K$-means clustering recovers the true clusters.
\end{lemma}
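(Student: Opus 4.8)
The plan is to reduce the abstract separability condition of Definition~\ref{def:sep_cond} to the center-separation (proximity) condition under which the spectral $K$-means recovery guarantee of~\cite{kmeans_awasthi} holds. Write $A \in \mathbb{R}^{m \times d}$ for the matrix whose $i$-th row is the datapoint $a_i$ (here the local model $\widehat{\theta}_i$), and let $C$ be the matrix whose $i$-th row is the mean $\mu_{k(i)}$ of the cluster containing $a_i$. Treating the theorem of~\cite{kmeans_awasthi} as a black box, exact recovery of the target clustering (given knowledge of $K$) is guaranteed once every pair of centers is separated relative to the spectral ``noise'' $\|A - C\|$; concretely, it suffices that for all $k \neq l$,
\[
\|\mu_k - \mu_l\| \;\geq\; 2\max_{i \in C_k,\, k \in [K]}\|\mu_k - a_i\| \;+\; c\left(\frac{1}{\sqrt{|C_k|}} + \frac{1}{\sqrt{|C_l|}}\right)\|A - C\|.
\]

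First I would control the spectral noise. Since the spectral norm is dominated by the Frobenius norm and each row of $A - C$ has norm at most the maximal cluster radius $r \coloneqq \max_{i \in C_k,\, k \in [K]}\|\mu_k - a_i\|$, summing over the $m$ rows gives $\|A - C\| \leq \|A - C\|_F \leq \sqrt{m}\,r$. Next I would bound the size-dependent coefficient uniformly over pairs: because $|C_{(K)}|$ is the smallest cluster, $\frac{1}{\sqrt{|C_k|}} + \frac{1}{\sqrt{|C_l|}} \leq \frac{2}{\sqrt{|C_{(K)}|}}$ for every $k \neq l$. Substituting both bounds into the separation condition, a sufficient condition for recovery becomes
\[
\min_{k \neq l}\|\mu_k - \mu_l\| \;\geq\; 2r + c \cdot \frac{2}{\sqrt{|C_{(K)}|}} \cdot \sqrt{m}\,r \;=\; \left(2 + \frac{2c\sqrt{m}}{\sqrt{|C_{(K)}|}}\right) r.
\]

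This is precisely what separability with $\alpha = 2 + \frac{2c\sqrt{m}}{\sqrt{|C_{(K)}|}}$ delivers: by Definition~\ref{def:sep_cond} we have $\alpha\, r < \min_{k \neq l}\|\mu_k - \mu_l\|$, so the displayed inequality holds for all pairs and the guarantee of~\cite{kmeans_awasthi} applies, yielding exact recovery; the threshold $c \geq 100$ is inherited verbatim from that work. I expect the main obstacle to be matching the abstract separability statement to the exact proximity condition in~\cite{kmeans_awasthi} — in particular, checking that their spectral machinery (projection onto the top-$K$ singular subspace followed by an approximate $K$-means step) only needs the crude estimate $\|A - C\| \leq \sqrt{m}\,r$ rather than a sharper data-dependent bound, and tracking constants so that the additive term $2r$ (from the half-distance margin in the proximity condition) and the multiplicative term $\frac{2c\sqrt{m}}{\sqrt{|C_{(K)}|}}$ combine into exactly $\alpha$.
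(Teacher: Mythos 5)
Your reduction is essentially the paper's: bound $\|A-C\|\le\|A-C\|_F\le\sqrt{m}\,r$, bound the size-dependent coefficient by $2/\sqrt{|C_{(K)}|}$, and absorb everything into $\alpha = 2 + \tfrac{2c\sqrt{m}}{\sqrt{|C_{(K)}|}}$, so the final arithmetic matches. The one substantive step you defer is exactly the content of the paper's proof: the guarantee in~\cite{kmeans_awasthi} is stated as two separate hypotheses (center separation in terms of $\Delta_k$, and the proximity condition on the projections $P^i_{kl}$), and the additive $2$ in $\alpha$ is earned by verifying the proximity condition geometrically — for a point whose projection lands on the segment between $\mu_k$ and $\mu_l$ one has $\|P^i_{kl}-\mu_l\| = \|\mu_k-\mu_l\| - \|P^i_{kl}-\mu_k\|$, hence $\|P^i_{kl}-\mu_l\| - \|P^i_{kl}-\mu_k\| \ge \|\mu_k-\mu_l\| - 2\|a_i-\mu_k\|$ since projection is a contraction. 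You correctly predict that this is where the $2r$ margin comes from, but your single merged ``sufficient condition'' asserts it rather than derives it (and, as literally written with the spectral norm $\|A-C\|$, it is weaker than the $\Delta_k$-based center-separation hypothesis, though this is repaired once you pass to $\|A-C\|_F\le\sqrt{m}\,r$). Carrying out that short projection argument, as the paper does, closes the only real gap.
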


\begin{remark}\label{rmk:dual_alpha}
    As discussed in Remark~\ref{rmk:alpha}, $\alpha$ can be interpreted as the parameter that shows how well separated the clusters are. The value of $\alpha$ in Definition $1$ implicitly depends on the dataset and the underlying clustering. On the other hand, the critical values of $\alpha$ in Lemmas $1$ and $2$ depend on the specific algorithm used for clustering, i.e., convex clustering and $K$-means, stating the minimal separation (by lower-bounding $\alpha$) for which specific algorithms recover the exact clusters. In that sense, when specializing $\alpha$ for different algorithms, e.g., $\alpha_{km}$ and $\alpha_{cc}$ for $K$-means and convex clustering, respectively, it can be seen as a measure of efficiency of a clustering algorithm. From this perspective, lower value of $\alpha$ associated with an algorithm implies that the algorithm is more efficient and requires less separation to recover the true clustering.
\end{remark}

As a direct consequence of Lemmas~\ref{lm:conv_clust_rec} and~\ref{lm:$K$-means_rec}, we see that the guarantees of Theorem~\ref{thm:main} hold for ODCL-CC and ODCL-KM, with sample requirements determined by the specific value of $\alpha$. In particular, they are given by

1) ODCL-CC: $\frac{n}{\log n} > \frac{256M(m-|C_{(K)}|)^2}{|C_{(K)}|^2D^2}$,

2) ODCL-KM: $\frac{n}{\log n} > \frac{64M(\sqrt{|C_{(K)}|} + c\sqrt{m})^2}{|C_{(K)}|D^2}$.

Comparing the two expressions, we can see that the main difference is in terms $\frac{(m - |C_{(K)}|)^2}{|C_{(K)}|^2}$ and $\frac{(\sqrt{|C_{(K)}|} + \sqrt{m})^2}{|C_{(K)}|^2}$ for convex clustering and spectral $K$-means, respectively. Thus, the sample requirements of ODCL-CC are larger than those of ODCL-KM by roughly a factor of $\frac{m}{|C_{(K)}|}$. This can be seen as the price that the convex clustering algorithm incurs for not requiring knowledge of the true number of clusters $K$.

\subsection{Comparison with order-optimal methods}\label{subsec:comparison}

\noindent In this section we compare the guarantees of ODCL-$\mathcal{C}$ with the guarantees of \emph{IFCA}~\cite{ieee_ghosh} and \emph{ALL-for-ALL}~\cite{even2022sample}. The main comparison points are summarized in Table~\ref{tab:conver}.

\emph{IFCA} is a method that iteratively updates cluster membership and models, requiring knowledge of the true number of clusters $K$, initialization that is sufficiently close to the true population optimal models and sufficiently separated clusters. Assuming $n \geq |C_{(1)}|$, after $T = \frac{8\kappa m}{|C_{(K)}|}\log\left(\frac{2D}{\varepsilon_\delta}\right)$ communication rounds, with probability at least $1 - \delta$, $\|\theta_k^T - \theta_k^\star\| \leq \varepsilon_\delta$, for all $k \in [K]$, where $\varepsilon_\delta = \widetilde{\mathcal{O}}\left(\delta^{-1}(n|C_{(K)}|)^{-\nicefrac{1}{2}} \right)$.

\emph{ALL-for-ALL} is a method based on peer-to-peer communication, requiring knowledge of the underlying cluster structure for the design of the communication matrix and model averaging. The method achieves the lower-bound for oracle complexity in distributed learning, where after $T = \Theta\left(\frac{\kappa K}{\varepsilon\mu m}\log\left(\frac{1}{\varepsilon}\right)\right)$ communication rounds, $\frac{1}{N}\sum_{i \in [m]}\|\theta^T_i - \theta^\star_{(i)}\| = \widetilde{\mathcal{O}}(\varepsilon)$, where $\theta_{(i)}^\star$ refers to the population optimal model of the population of user $i$. 

From Section~\ref{subsec:main}, we can see that ODCL-$\mathcal{C}$ matches the order-optimal convergence guarantees of both IFCA and ALL-for-ALL, while simultaneously reducing the communication cost by a factor of $\mathcal{O}\left(\frac{\kappa m}{|C_{(K)}|}\log\left( \frac{2D}{\varepsilon_{\delta}}\right) \right)$. This is achieved without any initialization or separation requirements and without needing the knowledge of the cluster structure. If knowledge of $K$ and cluster structure is not available, IFCA and ALL-for-ALL can not be used, while our algorithm ODCL-CC can still be employed.

\subsection{Inexact ERMs}\label{subsec:inexact}

\noindent In this section we present guarantees for inexact methods, when the users are allowed to solve their local ERM up to precision $\varepsilon > 0$. The proofs can be found in Appendix~\ref{app:inexact}. We make the following additional assumptions. 

\noindent In this section we present guarantees for inexact methods, when the users are allowed to solve their local ERM up to precision $\varepsilon > 0$. The proofs can be found in Appendix~\ref{app:inexact}. We make the following additional assumptions. 

\begin{assumption}\label{asmpt:loss_str_cvx}
For all $i \in [m]$, the empirical loss $f_i$ is $\mu_{f_i}$-strongly convex, i.e., for all $\theta, \theta^\prime \in \Theta$, $f_i(\theta^\prime) \geq f_i(\theta) + \left\langle \nabla f_i(\theta), \theta^\prime - \theta \right\rangle + \frac{\mu_{f_i}}{2}\|\theta - \theta^\prime\|^2.$ Denote by $\mu_f = \min_{i \in [m]} \mu_{f_i}$.
\end{assumption}

\begin{assumption}\label{asmpt:stoch_grad}
For each $i \in [m]$ and all $\theta \in \Theta$, the stochastic gradient $\widetilde{\nabla}f_i(\theta)$ of $f_i$, evaluated at $\theta$, is unbiased, i.e., $\mathbb{E}\left[\widetilde{\nabla}f_i(\theta)\right] = \nabla f_i(\theta)$. Additionally, the stochastic gradients have bounded variance almost surely, i.e., there exists a constant $\sigma_i > 0$, such that, for all $\theta \in \Theta$, $\|\widetilde{\nabla}f_i(\theta) - \nabla f_i(\theta)\| \leq \sigma_i$, almost surely.
\end{assumption}

Assumption~\ref{asmpt:loss_str_cvx} allows for each user to apply iterative solvers, to obtain parameters $\widetilde{\theta}_i$ that are $\varepsilon$-close to the true minima. A standard choice is the (projected) stochastic gradient descent (SGD) algorithm~\cite{robbins-monro}. SGD follows a simple update rule, given by $\theta^{t+1} = \Pi_{\Theta}\left(\theta^t - \eta^t\widetilde{\nabla}f^t\right)$, and outputs $\theta_T$, where $T$ is the preset number of iterations. Here, $\theta^t$ is the estimate of the parameter of interest at iteration $t$, $\eta^t$ is the step-size, $\widetilde{\nabla}f^t$ is a stochastic gradient of the objective function $f$ evaluated at $\theta^t$, with $\Pi_{\Theta}(\cdot)$ the projection operator onto the set $\Theta$. Next, from Assumptions~\ref{asmpt:param_space},~\ref{asmpt:loss_function},~\ref{asmpt:unif_bdd_ptws} and~\ref{asmpt:stoch_grad}, it follows that, for all $\theta \in \Theta$ and all $i \in C_k$, $k \in [K]$
\begin{align*}
    \|\widetilde{\nabla}f_i(\theta)\| \leq \sigma_i + \|\nabla f_i(\theta) - \nabla f_i(\theta^\star_k) \| + \|\nabla f_i(\theta^\star_k) \| \leq \sigma_i + L\|\theta - \theta^\star_k\| + N_k \leq \sigma_i + 2LR + N_k,
\end{align*} almost surely. Define $\Gamma_i \coloneqq \sigma_i + 2LR + N_k$ and $\Gamma \coloneqq \max_{i \in [m]}\Gamma_i$.

\begin{remark}
Note that Assumption~\ref{asmpt:stoch_grad} implies uniformly bounded noise, which is necessary for showing high-probability guarantees of the last iterate of SGD. This can be relaxed to allow for sub-Gaussian noise, e.g., using the non-uniformly averaged SGD from~\cite{harvey2019simple}. The algorithm performs the same update as the standard SGD described above, with the only difference being in the final estimator. Specifically, the non-uniformly averaged SGD outputs $\theta_T = \sum^T_{t=1}\frac{t}{\nicefrac{T(T+1)}{2}}\theta_t$. Due to the ease of implementation, we present the results of the standard SGD that outputs the last iterate $\theta_T$.
\end{remark}

\begin{theorem}\label{thm:inexact_erm}
Let Assumptions~\ref{asmpt:clusters}-\ref{asmpt:stoch_grad} hold, and let $\mathcal{C}$ be the class of admissible algorithms. If each user runs SGD locally for $T$ iterations to produce $\theta_i^T$, $i \in [m]$, and the number of samples per user $n$ and the number of local iterations $T$ are such that $n \geq 3$, $T \geq \max \left \{ 15, \frac{4\Gamma^2}{\mu^2_{f}\varepsilon} \right\}$ and moreover $\frac{n}{\log n} > 4M\left(\frac{D^2}{4\alpha^2} - 4\varepsilon \xi_F \right)^{-1}$, $\frac{T}{\log\log\left(T \right)} \geq \bigg(\frac{12\Gamma^2}{\mu_{f}^2} + 8\Gamma(121\Gamma + 1)(1 + 3\log n)\bigg)\frac{1}{\varepsilon^2}$, where $\xi_F = \max_{k \in [K]}\frac{G_{F_k}}{\mu_{F_k}}$, then for all $k \in [K]$, the models produced by inexact ODCL-$\mathcal{C}$ achieve the MSE
\begin{align*}
    \mathbb{E}\|\widetilde{\theta}_k &- \theta_k^\star\|^2 = \mathcal{O}\bigg(\frac{1}{n|C_{(K)}|}  + \frac{1}{n^2} + \frac{1}{n^2|C_{(K)}|} + \frac{m}{n^3} \\ &+ \left(dK + \frac{1}{n}\right)de^{-\alpha n} + \left(\frac{KR^2}{D^2} + 1\right)\varepsilon\bigg).
\end{align*}
\end{theorem}

\begin{remark}
Note that sample complexity implicitly places a requirement on the precision up to which we solve the local ERMs, i.e., $\varepsilon < \frac{D^2}{16\xi_F\alpha^2}.$ This requirement can again be seen in terms of the difficulty of different system aspects. E.g., if the clusters are well separated, i.e., large $D$, we can solve the local ERMs up to moderate, or low precision. Similarly, if the clustering algorithm is more efficient, i.e., small $\alpha$, the local ERMs can be solved to lower precision. Finally, recall that $\xi_F = \max_{k \in K}\nicefrac{G_{F_k}}{\mu_{F_k}}$, where $G_{F_k}$ and $\mu_{F_k}$ are the Lipschitz continuity and strong convexity constants of $F_k$'s, respectively. Hence, for well behaved $F_k$'s, i.e., small $\tau_{F_k}$, the precision to which the local ERMs are solved is relaxed.
\end{remark}

\begin{remark}
Comparing the MSE rates of Theorem~\ref{thm:main} and Theorem~\ref{thm:inexact_erm}, we can see that the main difference is the presence of an additional term in Theorem~\ref{thm:inexact_erm}, that being $\varepsilon\left(1 + R^2KD^2 \right)$, with $\varepsilon > 0$ representing the accuracy up to which we solve the local ERM. We can therefore see that, as long as the local ERMs are solved up to precision $\varepsilon < \min\left\{ \frac{1}{n|C_{(1)}|},\frac{D^2}{16\xi_F\alpha^2} \right\},$ the rates of Theorem~\ref{thm:main} are recovered, i.e., the final MSE is of the order $\mathcal{O}\left(\frac{1}{n|C_{(K)}|}\right)$, for all $k \in [K]$. This in turns leads to a local iteration requirement of $T \geq \max\left\{15,\frac{4\Gamma^2}{\mu_{f}^2\varepsilon}\right\}$ and $\frac{T}{\log\log\left(T \right)} \geq \bigg(\frac{6L\Gamma^2}{\mu_{\ell}^2} + 4L\Gamma(121\Gamma + 1)(1 + 3\log n)\bigg)\frac{1}{\varepsilon^2}$.
\end{remark}

Similarly to Section~\ref{subsec:cvx_km}, we can specialize the results of Theorem~\ref{thm:inexact_erm} when the clustering algorithm of choice is $K$-means and/or convex clustering, to obtain algorithm specific sample complexities. The results are summarized in Table~\ref{tab:conver}.

\subsection{Trading accuracy for sample requirements}\label{subsec:inexact_clust}

\noindent In this section we show how the sample requirements of Theorem~\ref{thm:main} can be relaxed, at the expense of the MSE converging up to an error floor. We explicitly characterize the error floor in terms of clustering error and heterogeneity. The proofs can be found in Appendix~\ref{app:inexact_clust}.

Let $\{a_i \}_{i \in [n]}$ and $\{C_k \}_{k \in [K]}$ be the dataset and underlying clustering, with cluster means denoted by $\{\mu_k \}_{k \in [K]}$. Let $A \in \mathbb{R}^{n \times d}$ be a matrix whose $i$-th row is $a_i$, with $C \in \mathbb{R}^{n \times d}$ the matrix whose $i$-th row is $\mu_k$ if and only if $i \in C_k$. We have the following definition, known as \emph{center separation}~\cite{kmeans_awasthi}. 

\begin{definition}\label{def:cent_sep}
    We say that the dataset $\{a_i \}_{i \in [n]}$ satisfies the center separation condition, if, for some $c > 0$\footnote{The constant $c$ is the same as in the previous section.} sufficiently large, and any $k, l \in [K]$, $\|\mu_k - \mu_l\| \geq c\left(\Delta_k + \Delta_l\right),$ where $\Delta_k = \frac{1}{\sqrt{|C_k|}}\min\left\{\sqrt{K}\|A - C\|, \|A - C\|_F \right\}$, $k \in [K]$.
\end{definition}

Using simple algebraic manipulations, one can show that Definition~\ref{def:cent_sep} is a relaxation of Definition~\ref{def:sep_cond}, in the sense that Definition~\ref{def:sep_cond} implies Definition~\ref{def:cent_sep}, for the choice $\alpha = \frac{2c\sqrt{m}}{|C_{(K)}|}$. Running only part I of the spectral $K$-means algorithm (see Appendix~\ref{app:special_algos}), we have the following result. 

\begin{theorem}\label{thm:inexact_clust}
    Let Assumptions~\ref{asmpt:clusters}-\ref{asmpt:unif_bdd_ptws} hold. If we run part I of the spectral $K$-means algorithm and the number of samples per user satisfies $\frac{n}{\log n} > \frac{64c^2m\overline{M}}{D^2|C_{(K)}|}$, where $\overline{M} = \frac{1}{m}\sum_{i \in C_k,k \in [K]}M_{ik}$, with $M_{ik}$ problem dependent constants defined in Appendix~\ref{app:inexact_clust}, then the clustering $\{C_k^\prime \}_{k \in [K]}$ produced by the algorithm has the following properties, for any $k \in [K]$
    
    1) $|C_k^\prime \cap C_k| \geq \left(1 - \frac{128}{c^2}\right)|C_k|$.
    
    2) $\sum_{l \neq k}|C_k^\prime \cap C_l| \leq \frac{128}{c^2}|C_k|$.
    
    3) The cluster averaged models achieve the following MSE
    \begin{equation*}
        \mathbb{E}\|\widetilde{\theta}_k - \theta^\star_k \|^2 = \mathcal{O}\left(\frac{1}{n|C_{(K)}^\prime|} + \frac{1}{n^2} + \sum_{l \neq k}\epsilon_{kl}^2D_{kl} \right),
    \end{equation*} where $\epsilon_{kl} = \frac{|C_k^\prime \cap C_l|}{|C_k^\prime|}$ represents the fraction of points from cluster $l$ that were wrongly clustered in cluster $k$, with $D_{kl} = \|\theta_k^\star - \theta_l^\star \|^2$ the pairwise squared difference of the population optima. Moreover, we have that $\epsilon_k = \sum_{l \neq k}\epsilon_{kl} \leq \frac{128}{c^2 - 128}$.
\end{theorem}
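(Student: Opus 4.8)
The plan is to decouple the argument into a deterministic clustering-recovery component and a probabilistic generalization component, exactly mirroring the philosophy of Theorem~\ref{thm:main}, and then to append a perturbation analysis of the cluster-averaged model that accounts for the (now nonzero) misclassification. Concretely, I would treat the dataset being clustered as the local ERMs $\{a_i\}_{i\in[m]}=\{\widehat\theta_i\}_{i\in[m]}$, and proceed in three phases: (i) show that, under the stated sample requirement, the center separation condition of Definition~\ref{def:cent_sep} holds with high probability; (ii) invoke the deterministic guarantee of \cite{kmeans_awasthi} for part~I of spectral $K$-means to obtain the per-cluster misclassification bounds in items~1) and~2); and (iii) decompose the averaging error to obtain the MSE in item~3).

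For phase~(i), the key observation is that the center-separation quantity $\Delta_k$ depends on $\|A-C\|$ and $\|A-C\|_F$, i.e. on \emph{aggregate} deviations of the ERMs from their cluster means, rather than on the worst-case radius appearing in Definition~\ref{def:sep_cond}. I would bound $\|A-C\|_F^2=\sum_{k}\sum_{i\in C_k}\|\widehat\theta_i-\mu_k\|^2$ by first replacing the empirical mean $\mu_k$ with the population optimum $\theta_k^\star$ (at the cost of a lower-order term, since $\|\mu_k-\theta_k^\star\|$ is itself controlled by the same ERM concentration), and then applying the high-probability generalization bounds $\|\widehat\theta_i-\theta_k^\star\|^2=\mathcal{O}(M_{ik}/n)$ used in Theorem~\ref{thm:main}. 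Summing over all $m$ users produces the \emph{average} constant $\overline M=\frac1m\sum_{i,k}M_{ik}$ and a single power of $|C_{(K)}|$ (through $\Delta_k\le\|A-C\|_F/\sqrt{|C_{(K)}|}$), which is precisely why the requirement $\frac{n}{\log n}>\frac{64c^2m\overline M}{D^2|C_{(K)}|}$ is weaker than the exact-recovery requirement of Section~\ref{subsec:cvx_km}. Combined with the lower bound $\min_{k\ne l}\|\mu_k-\mu_l\|\gtrsim D$ (again from $\mu_k\approx\theta_k^\star$ and Assumption~\ref{asmpt:clusters}), this certifies $\|\mu_k-\mu_l\|\ge c(\Delta_k+\Delta_l)$ on a high-probability event.

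For phase~(ii), once center separation holds the recovery guarantees are purely deterministic: I would cite the proximity-condition analysis of \cite{kmeans_awasthi}, which bounds the number of points of each true cluster that part~I of spectral $K$-means misassigns by an $\mathcal{O}(1/c^2)$ fraction. Tracking the explicit constant yields items~1) and~2), namely $|C_k^\prime\cap C_k|\ge(1-128/c^2)|C_k|$ and $\sum_{l\ne k}|C_k^\prime\cap C_l|\le\frac{128}{c^2}|C_k|$. The bound $\epsilon_k=\sum_{l\ne k}\epsilon_{kl}\le\frac{128}{c^2-128}$ then follows immediately by dividing item~2) by $|C_k^\prime|\ge|C_k^\prime\cap C_k|\ge(1-128/c^2)|C_k|$.

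Finally, for phase~(iii) I would write $\widetilde\theta_k-\theta_k^\star=\frac{1}{|C_k^\prime|}\sum_{i\in C_k^\prime}(\widehat\theta_i-\theta_k^\star)$ and split the sum over the correctly clustered points $C_k^\prime\cap C_k$ and the foreign points $C_k^\prime\cap C_l$, $l\ne k$. The correctly clustered part reproduces the one-shot averaging analysis of \cite{zhang-duchi} as in Theorem~\ref{thm:main}, giving the variance term $\mathcal{O}(1/(n|C_{(K)}^\prime|))$ and an $\mathcal{O}(1/n^2)$ higher-order bias from the Hessian-Lipschitz expansion of the ERM. For each foreign point $i\in C_l$ I would write $\widehat\theta_i-\theta_k^\star=(\widehat\theta_i-\theta_l^\star)+(\theta_l^\star-\theta_k^\star)$; the first summand is $\mathcal{O}(1/\sqrt n)$ while the second is the irreducible bias of size $\sqrt{D_{kl}}$, so after weighting by $\epsilon_{kl}$, taking norms and squaring, the dominant contribution reduces to the error floor $\sum_{l\ne k}\epsilon_{kl}^2 D_{kl}$ (the cross terms being absorbed using the bound on $\epsilon_k$ and the compactness of $\Theta$). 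The main obstacle, in my view, is phase~(i): making the passage from the pointwise tail bounds on the ERMs to the \emph{aggregate} center-separation inequality while keeping the constants sharp enough to land the $\overline M/|C_{(K)}|$ scaling — in particular controlling both the empirical-mean shift $\|\mu_k-\theta_k^\star\|$ and the spectral term $\sqrt K\|A-C\|$ without inflating the requirement back to the worst-case radius of Definition~\ref{def:sep_cond}.
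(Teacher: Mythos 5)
Your proposal follows essentially the same route as the paper's proof: the paper likewise (i) verifies the center separation condition of Definition~\ref{def:cent_sep} with high probability by bounding the aggregate quantity $\sum_{i}\|\widehat\theta_i-\overline\theta_{(i)}\|^2$ via $\|\widehat\theta_i-\overline\theta_k\|^2\leq 2\|\widehat\theta_i-\theta_k^\star\|^2+\frac{2}{|C_k|}\sum_{j\in C_k}\|\widehat\theta_j-\theta_k^\star\|^2$ and the per-user ERM tail bounds, which is exactly how the average constant $\overline M$ and the single factor of $|C_{(K)}|$ arise, (ii) invokes Theorem~3.1 of \cite{kmeans_awasthi} for parts~1) and~2) and derives $\epsilon_k\leq\frac{128}{c^2-128}$ from the resulting two-sided bound on $|C_k^\prime|$, and (iii) proves part~3) via the same split of $\widetilde\theta_k-\theta_k^\star$ over $C_k^\prime\cap C_k$ and the foreign sets $C_k^\prime\cap C_l$, with the foreign term further decomposed through $\pm\theta_l^\star$ to isolate the error floor $\sum_{l\neq k}\epsilon_{kl}^2D_{kl}$ (Lemma~\ref{lm:inexact_clust}). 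The obstacle you flag in phase~(i) is resolved in the paper precisely by the displayed inequality above, so your plan is sound and matches the paper's argument.
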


\begin{remark}
    The authors in~\cite{kmeans_awasthi} state that $c$ should be treated as a large constant, e.g., $c \geq 100$. Specializing results from Theorem~\ref{thm:inexact_clust}, we then have that, in the worst case, at least $98.7\%$ of points will be clustered correctly, while at most $1.3\%$ of points will be added from other clusters, for every cluster $k \in [K]$. Looking at the non-vanishing term in the MSE rate, we can see that $\sum_{l \neq k}\epsilon_{kl}^2D_{kl} \leq D_{\max}\left(\sum_{l \neq k}\epsilon_{kl}\right)^2 \leq 1.69\times10^{-4}D_{\max}$, where $D_{\max} = \max_{k \neq l}D_{kl}$. Therefore, then the non-vanishing error is at most $\approx 10^{-4}D_{\max}$.
\end{remark}

\begin{remark}
    The procedure in Theorem~\ref{thm:inexact_clust} only uses a part of the spectral $K$-means algorithm, making it both faster and computationally cheaper. Comparing the sample complexity in Theorem~\ref{thm:inexact_clust}, $\Omega\left(\frac{c^2m\overline{M}}{D^2|C_{(K)}|} \right)$, to the one of ODCL-KM that guarantees exact cluster recovery, $\Omega\left(\frac{c^2M(\sqrt{m} + \sqrt{|C_{(K)}|})^2}{D^2|C_{(K)}|} \right)$, we can note that the sample complexity is reduced by an additive factor of the order $\Omega\left(\frac{c^2M}{D^2} \right)$. Moreover, the constant $M = \max_{i \in C_k, k \in [K]}M_{ik}$ is replaced by $\overline{M} = \frac{1}{m}\sum_{i \in C_k, k \in [K]}M_{ik}$, i.e., the worst-case constant is replaced with the average, which can be significant, e.g., in scenarios where only a few users have poorly conditioned problems.
\end{remark}

\begin{remark}
    Combining results from Theorems~\ref{thm:main},~\ref{thm:inexact_clust}, we get the following: if the number of samples per user satisfies $\frac{n}{\log n} > \Omega\left(\frac{c^2Mm}{D^2|C_{(K)}|} + \frac{c^2M}{D^2}\right)$, an order-optimal, asymptotically vanishing MSE rate is guaranteed by the proposed methods. On the other hand, if the number of samples per user satisfies a lower sample requirement of $\Omega\left(\frac{c^2\overline{M}m}{D^2|C_{(K)}|}\right) < \frac{n}{\log n} < \mathcal{O}\left(\frac{c^2Mm}{D^2|C_{(K)}|} + \frac{c^2M}{D^2}\right)$, a order-optimal MSE is guaranteed, with a non-vanishing error term, of the worst-case order $\mathcal{O}\left(10^{-4}D_{\max} \right)$. 
\end{remark}

\section{Numerical experiments}\label{sec:numerical}

In this section we present numerical experiments. All experiments were implemented in python and averaged across 10 runs. To solve the local empirical risk problems and perform convex clustering, we use CVXPY~\cite{diamond2016cvxpy}. Additional extensive experiments comparing the performance of the proposed methods and IFCA, experiments on logistic losses, and more can be found in Appendix~\ref{app:numerical}.

We present experiments on synthetic and real data. For the synthetic data experiments, we consider a linear regression problem with quadratic loss. The data generating process for each cluster is given by $y = \langle x, u_k^\star \rangle + \epsilon$, where $\epsilon \sim \mathcal{N}(0,1)$ follows a standard Gaussian distribution. The number of clusters is set to $K = 10$. The problem dimension is $d = 20$, with each component of the cluster optimal models $u_k^\star$ drawn independently from a uniform distribution. Specifically, $u_{1i}^\star \sim \text{U}([1,2])$, $u_{2i}^\star \sim \text{U}([4,5])$, $u_{3i}^\star \sim \text{U}([7,8])$, $u_{4i}^\star \sim \text{U}([10,11])$ and $u_{5i}^\star \sim \text{U}([13,14])$, with $u_6^\star$ through $u_{10}^\star$ being generated from the corresponding negative intervals, i.e., $u_{6i}^\star \sim \text{U}([-2,-1])$, through to $u_{10i}^\star \sim \text{U}([-14,-13])$, respectively, for all $i \in [d]$. This way, we ensure that $D > 0$. For each datapoint $x \in \mathbb{R}^d$, we chose $5$ random components in $[d]$ that are distributed according to $\mathcal{N}(0,1)$, while the other components are set to zero. For real data experiments, we use the MNIST~\cite{lecun-mnisthandwrittendigit-2010} and CIFAR10~\cite{krizhevsky2009learning} datasets and consider a binary classification logistic regression problem. The number of clusters is set to $K = 2$. Each user contains elements from the first two classes (i.e., the digits $1$ and $2$ for MNIST and the classes \emph{airplane} and \emph{automobile} for CIFAR10), where users from one cluster assign the labels $\{+1,-1\}$ to classes $1$ and $2$, respectively, while the users from the other cluster assign the opposite labels, i.e., $\{-1,+1\}$ to classes $1$ and $2$, respectively. This scenario can be seen as modeling the opposite preference, e.g., when different groups place different values on the same items. Such is the case in many applications, e.g., the next word prediction problem trained on teenager and elderly people's data~\cite{Sattler_clustered}, where the two groups are likely to have different preferences and frequency of use, with respect to the same words. To test the robustness of our method, we perform a third experiment on a noisy version of the above described CIFAR10 data. To create a noisy version of the dataset, for each user we flip $20\%$ of the labels, while the labels in the test data remain unchanged. This models the scenarios where, e.g., erroneous sensors and measurements devices result in erroneous data, when the labeling process is subject to (human or other) errors, or when a malicious attacker intentionally pollutes the data. Flipping the labels induces uncertainty and further heterogeneity within the clusters, making decision making and learning a within-cluster model more difficult, where the flipped points may be considered outliers. 

We consider a DL system with $m = 100$ users and a balanced clustering, i.e., $|C_k| = \frac{m}{K}$. Each user $i \in C_k$ is assigned $n$ IID data points. For the synthetic data experiments we vary the number of samples assigned to each user and evaluate the performance of ODCL-CC and a practical variant of ODCL-KM using $K$-means++ initialization~\cite{kmeans++}, dubbed ODCL-KM++. For the real data experiments, each user is assigned $n = 4$ datapoints for MNIST and $n = 40$ datapoints for CIFAR10, 20 datapoint per class. As the dimensions of the MNIST and CIFAR10 data are $d = 784$ and $d = 3072$ respectively, we test the performance of methods in the scenario when the number of samples per user is much smaller than the problem dimension. Due to a small sample size per user, we evaluate the method with lower sample requirements, ODCL-KM++. 

For synthetic data, we benchmark our methods with: 1) \emph{Oracle Averaging} - oracle that averages models within true clusters; 2) \emph{Cluster Oracle} - oracle that trains models by solving~\eqref{eq:emp_clust2}; 3) \emph{Local ERMs} - ERMs trained on users' local data; 4) \emph{Naive averaging} - averaging models from the whole system. For real data, besides Cluster Oracle and Local ERM, we use IFCA with gradient averaging, with the step-size $0.1$, ran for $T = 200$ rounds and $\tau = 10$ local update steps. To initialize IFCA, we use three approaches: 1) \emph{IFCA-1} - initialize with models trained by Cluster Oracle and add $\mathcal{N}(0,1)$ noise; 2) \emph{IFCA-2} - initialize with models trained by Cluster Oracle and add $\mathcal{N}(0,4)$ noise; 3) \emph{IFCA-R} - random initialization. This way, we study the effect of initialization on the performance of IFCA. Note that in the experiment on noisy CIFAR10 data we do not evaluate the Cluster Oracle method, as the distribution of the training data ($20\%$ of labels flipped) in this experiment does not match that of the test data (no labels flipped). Therefore, in this setting, the Cluster Oracle method is not a true oracle method, and its performance does not provide much information.

To measure the performance in synthetic experiments, we use the average normalized MSE, i.e., for each of the methods, we compute $\frac{1}{m}\sum_{i = 1}^m\frac{\|\widetilde{u}_i - u^\star_{(i)}\|^2}{\|u^\star_{(i)}\|^2}$, where $u_{(i)}^\star$ denotes the population optima associated with user $i$, while $\widetilde{u}_i$ is the model assigned to user $i$. To measure the performance in real data experiments, we evaluate the average test accuracy on a withheld dataset. The results on synthetic data are presented in Figure~\ref{fig:MSE}, with the results on real data presented in Table~\ref{tab:numerical}.

\begin{table}
\caption{Average testing accuracy on MNIST, CIFAR10 and noisy CIFAR10 (CIFAR10N) data.}
\label{tab:numerical}
\begin{center}
\begin{small}
\begin{sc}
\begin{tabular}{lcccr}
\toprule
Method & MNIST & CIFAR10 & CIFAR10N \\
\midrule
Cluster Oracle    & $0.98 \pm 0.003$ & $0.76 \pm 0.006$ & $*$ \\
Local ERM    & $0.83 \pm 0.06$ & $0.70 \pm 0.006$ & $0.62 \pm 0.012$ \\
IFCA-1    & $0.94 \pm 0.03$ & $0.70 \pm 0.12$ & $0.58 \pm 0.07$ \\
IFCA-2    & $0.83 \pm 0.13$ & $0.66 \pm 0.12$ & $0.54 \pm 0.05$ \\
IFCA-R    & $0.65 \pm 0.06$ & $0.51 \pm 0.01$  & $0.52 \pm 0.012$ \\
ODCL-KM++   & $0.91 \pm 0.007$ & $0.76 \pm 0.008$ & $0.66 \pm 0.06$ \\
\bottomrule
\end{tabular}
\end{sc}
\end{small}
\end{center}
\vskip -0.1in
\end{table}

On $y$-axis of Figure~\ref{fig:MSE} we plot the average normalized MSE, with the $x$-axis showing the number of samples per user. Both ODCL-KM and ODCL-CC match the order-optimal performance of the oracle methods. ODCL-KM++ matches the performance of Oracle Averaging for any number of samples, while ODCL-CC requires $n\approx 400$ samples to match the oracle methods. As discussed in Section~\ref{subsec:cvx_km}, this gap stems from the fact that convex clustering needs to estimate both the number of clusters, as well as the clustering itself.

Table~\ref{tab:numerical} presents performance on real data. On MNIST data, ODCL-KM++ significantly improves the accuracy over local models and is close to the oracle method. On the other hand, IFCA is strongly affected by initialization, as its performance varies from close to oracle (IFCA-1) to worse than Local ERM (IFCA-R). Similar observations hold on the CIFAR10 data, where the accuracy of our method is on par with the oracle, while IFCA performs as good as the Local ERM (IFCA-1) at best, and essentially as a random estimator (IFCA-R) at worst. Finally, in the third column, we see that our method performs the best in the presence of noisy data, while all instances of IFCA performe worse than Local ERM.

To study the significance of communication savings of our method, irrespective of heterogeneity, we consider an experiment with linear regression models. We study both the homogeneous, i.e., $K = 1$, and heterogeneous setting, with $K = 4$ clusters and dimension $d = 20$. We benchmark ODCL-KM++ with IFCA. The experiments are performed as follows: fix the number of samples available to each user to $n = 400$. Run ODCL-KM++ and evaluate the averaged MSE across all users. Next, run IFCA until it either matches the MSE error of ODCL-KM++, or performs $T = 1000$ communication rounds. We use IFCA with model averaging, performing $10$ local computation rounds per a communication round, with step-size $0.01$. Note that in homogeneous setting our method evaluates to one-shot averaging, e.g.,~\cite{zhang-duchi}, while IFCA evaluates to FedAvg~\cite{pmlr-v54-mcmahan17a}. In the homogeneous settings, where IFCA is equivalent to FedAvg, we use the zero vector initialize it. In the heterogeneous settings, we initialize IFCA sufficiently close to the population optima, by ensuring that the distance of the initial model from the true population optima for each cluster is at least $\frac{D}{6}$, and at most $\frac{D}{4}$. The results are presented in Table~\ref{tab:table2}. Our method significantly outperforms IFCA in both the homogeneous and heterogeneous setup, reducing the number of communication rounds required for reaching the same MSE by a factor of $400$ and $600$, respectively. 

\begin{table}
    \caption{Number of communication rounds required to achieve the same MSE, with a fixed number of samples.}
    \label{tab:table2}
    \begin{center}
    \begin{small}
    \begin{sc}
    \begin{tabular}{lcccr}
    \toprule
    Method & Homogeneous & Heterogeneous &\\
    \midrule
    ODCL-KM++   & $1$ & $1$ &  \\
    IFCA    & $440$ & $655$ &  \\
\bottomrule
\end{tabular}
\end{sc}
\end{small}
\end{center}
\vskip -0.1in
\end{table}

\begin{figure}[!t]
    \centering
    \includegraphics[width=0.7\columnwidth]{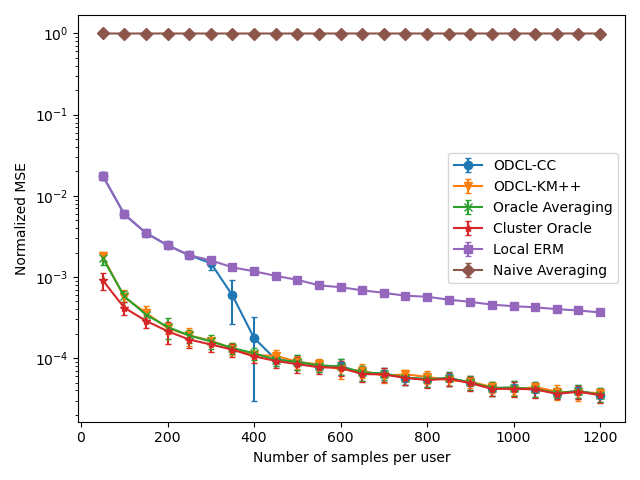}
    \caption{Normalized MSE versus the number of samples available per user. }
    \label{fig:MSE}
\end{figure}

\section{Conclusion}\label{sec:conclusion}

\noindent We proposed a family of one-shot methods for DL, ODCL-$\mathcal{C}$, based on a simple inference and averaging scheme. The resulting methods are communication efficient, requiring a single round of communication. Moreover, the family of methods offers significant flexibility in the choice of algorithms and trade-offs in terms of hyperparameters and resulting sample requirements, as can be seen when specialized to ODCL-KM and ODCL-CC. Our theoretical analysis shows that the methods provide order-optimal MSE rates in terms of sample size, while not requiring any strong separation or initialization conditions, unlike other state-of-the-art algorithms. Numerical experiments on real and synthetic data corroborate our findings and underline the superior performance of our methods. Ideas for future work include extending the framework to the setting where multiple communication rounds are possible, as well as studying the behaviour of the ODCL family under relaxed assumptions, e.g., for general convex and non-convex functions.

\bibliographystyle{unsrt}
\bibliography{bibliography}

%%%%%%%%%%%%%%%%%%%%%%%%%%%%%%%%%%%%%%%%%%%%%%%%%%%%%%%%%%%%%%%%%%%%%%%%%%%%%%%
%%%%%%%%%%%%%%%%%%%%%%%%%%%%%%%%%%%%%%%%%%%%%%%%%%%%%%%%%%%%%%%%%%%%%%%%%%%%%%%
% APPENDIX
%%%%%%%%%%%%%%%%%%%%%%%%%%%%%%%%%%%%%%%%%%%%%%%%%%%%%%%%%%%%%%%%%%%%%%%%%%%%%%%
%%%%%%%%%%%%%%%%%%%%%%%%%%%%%%%%%%%%%%%%%%%%%%%%%%%%%%%%%%%%%%%%%%%%%%%%%%%%%%%
\newpage
\appendix
\onecolumn
\section{Outline}

In this Appendix we present the content omitted from the main body of the paper. Section~\ref{app:main} contains the proofs of the main results. Section~\ref{app:practical} contains some practical considerations on selecting hyperparameters for specific  clustering algorithms. Section~\ref{app:numerical} provides additional numerical experiments. Section~\ref{app:lemma} contains the proof of Lemma~\ref{lm:duchi}.

\section{Main results}\label{app:main}

In this section we present the proofs of main results, omitted from the main body of the paper. Subsection~\ref{app:proof_thm1} contains the proof of Theorem~\ref{thm:main}, Subsection~\ref{app:special_algos} contains proofs of Lemmas~\ref{lm:conv_clust_rec},~\ref{lm:$K$-means_rec}, while Subsections~\ref{app:inexact} and~\ref{app:inexact_clust} contain the proofs of Theorems~\ref{thm:inexact_erm} and~\ref{thm:inexact_clust}, respectively.

\subsection{Proof of Theorem~\ref{thm:main}}\label{app:proof_thm1}

\noindent We start by introducing two results used in the proof. 

\begin{lemma}[Theorem 3 in~\cite{erm-sco}]\label{lm:erm}
Under Assumptions~\ref{asmpt:clusters}-\ref{asmpt:pop_loss},~\ref{asmpt:unif_bdd_ptws}, for any $k \in [K]$, any $i \in C_k$ and any $0 < \delta < \frac{1}{2}$, $\epsilon > 0$, with probability at least $1 - 2\delta$, we have
\begin{align*}
    F_k(\widehat{\theta}_i) &- F_k(\theta^\star_k) \leq \frac{16R^2LC(\epsilon,\delta)}{n} + \frac{8RN_k\log \frac{2}{\delta}}{n} \\ &+ \frac{8LF_k(\theta^\star_k)\log \frac{2}{\delta}}{\mu_{F_k} n} + \left(8RL + G_{F_k} + \frac{4RLC(\epsilon,\delta)}{n} \right)\epsilon,
\end{align*} where $C(\epsilon,\delta) \coloneqq 2\left(\log\frac{2}{\delta} + d\log\frac{6R}{\epsilon}\right)$.
\end{lemma}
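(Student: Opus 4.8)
The plan is to establish this as a localized uniform-deviation (generalization) bound for the empirical risk minimizer $\widehat{\theta}_i$, in which both the fast $\mathcal{O}(d/n)$ rate and the optimistic dependence on the optimal value $F_k(\theta_k^\star)$ arise from the self-bounding property of nonnegative $L$-smooth losses. First I would decompose the excess population risk as
$$F_k(\widehat{\theta}_i) - F_k(\theta_k^\star) = \underbrace{[F_k(\widehat{\theta}_i) - f_i(\widehat{\theta}_i)]}_{(\mathrm{I})} + \underbrace{[f_i(\widehat{\theta}_i) - f_i(\theta_k^\star)]}_{(\mathrm{II})} + \underbrace{[f_i(\theta_k^\star) - F_k(\theta_k^\star)]}_{(\mathrm{III})}.$$
Term $(\mathrm{II})$ is nonpositive because $\widehat{\theta}_i$ minimizes the empirical loss $f_i$ over $\Theta$, which contains $\theta_k^\star$. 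Term $(\mathrm{III})$ is a single-point fluctuation of an i.i.d. average around its mean; a Bernstein bound controls it using the pointwise gradient bound $N_k$ (Assumption~\ref{asmpt:unif_bdd_ptws}) for the range together with the self-bounding relation $\|\nabla\ell(\theta;x)\|^2 \le 2L\,\ell(\theta;x)$, valid for nonnegative $L$-smooth $\ell$, to control the variance by a multiple of the function value. This is the source of the $\frac{8RN_k\log(2/\delta)}{n}$ and the optimistic $\frac{8LF_k(\theta_k^\star)\log(2/\delta)}{\mu_{F_k}n}$ summands.

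The heart of the argument is term $(\mathrm{I})$, the uniform deviation $F_k(\theta) - f_i(\theta)$ evaluated at the data-dependent point $\widehat{\theta}_i$. Here I would again invoke the self-bounding inequality, now applied to the increments $\ell(\theta;X) - \ell(\theta_k^\star;X)$, so that their conditional variance is controlled by $L$ times their expectation; feeding this into a Bernstein inequality yields a deviation of the optimistic form $\sqrt{(F_k(\theta) - F_k(\theta_k^\star))\cdot(\text{entropy}/n)}$ plus lower-order terms, rather than the slow $\mathcal{O}(\sqrt{d/n})$ deviation a crude Lipschitz/Rademacher bound would give. To pass from a fixed $\theta$ to the supremum over the random $\widehat{\theta}_i$, I would cover the compact set $\Theta$ (radius $R$, Assumption~\ref{asmpt:param_space}) by an $\epsilon$-net of cardinality at most $(3R/\epsilon)^d$; a union bound over the net produces exactly the metric-entropy term $d\log(6R/\epsilon)$ inside $C(\epsilon,\delta) = 2(\log\frac{2}{\delta} + d\log\frac{6R}{\epsilon})$, while the $L$-Lipschitz gradients of $\ell$ and the $G_{F_k}$-Lipschitzness of $F_k$, $f_i$ absorb the discretization error into the additive slack $\left(8RL + G_{F_k} + \frac{4RLC(\epsilon,\delta)}{n}\right)\epsilon$.

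Finally I would close the estimate by a fixed-point (quadratic-inequality) argument. Writing $\Delta = F_k(\widehat{\theta}_i) - F_k(\theta_k^\star)$, combining $(\mathrm{I})$, $(\mathrm{II})\le 0$, and $(\mathrm{III})$ gives an inequality of the shape $\Delta \le a\sqrt{\Delta\,C(\epsilon,\delta)/n} + (\text{the explicit } 1/n \text{ and } \epsilon \text{ terms})$, where $\mu_{F_k}$-strong convexity of $F_k$ (Assumption~\ref{asmpt:pop_loss}) is what legitimizes the localization by relating $\|\widehat{\theta}_i - \theta_k^\star\|$ to $\Delta$ through quadratic growth. Solving this quadratic in $\sqrt{\Delta}$ and consolidating constants yields the dominant fast-rate term $\frac{16R^2LC(\epsilon,\delta)}{n}$ together with the remaining summands.

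The main obstacle is obtaining the fast, optimistic rate rather than the slow $\mathcal{O}(\sqrt{d/n})$ one: this hinges on threading the self-bounding property through a localized Bernstein inequality uniformly over the net, and then solving the resulting self-referential quadratic so that both the $d/n$ scaling and the $F_k(\theta_k^\star)$-proportional noise term come out sharp. The remaining work — choosing the net scale $\epsilon$, the moment/Bernstein bookkeeping, and the constant tracking in $C(\epsilon,\delta)$ — is delicate but routine once the localization and self-bounding are in place.
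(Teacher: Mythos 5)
First, a point of reference: the paper does not prove this statement at all --- it is imported verbatim as Theorem~3 of the cited work~\cite{erm-sco} and used as a black box, so there is no in-paper proof to compare against. Judged on its own terms, your reconstruction assembles the right ingredients: the self-bounding property of nonnegative $L$-smooth losses, a Bernstein inequality, an $\epsilon$-net of $\Theta$ producing the entropy term $d\log\frac{6R}{\epsilon}$ inside $C(\epsilon,\delta)$, quadratic growth from $\mu_{F_k}$-strong convexity of $F_k$, and a final self-referential quadratic in $\sqrt{\Delta}$. That is indeed the standard route to optimistic $\mathcal{O}(d/n)$ ERM rates of exactly this shape.

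There is, however, one concrete step that fails as written: your term $(\mathrm{III}) = f_i(\theta_k^\star) - F_k(\theta_k^\star)$ cannot be controlled under the stated assumptions. It is the deviation of the empirical average of the \emph{raw loss values} $\ell(\theta_k^\star;x_{ij})$ from their mean, and nothing in Assumptions~\ref{asmpt:loss_function}--\ref{asmpt:unif_bdd_ptws} bounds the range or variance of $\ell(\theta_k^\star;X)$; Assumption~\ref{asmpt:unif_bdd_ptws} bounds $\|\nabla\ell(\theta_k^\star;x)\|$ by $N_k$, not $\ell(\theta_k^\star;x)$ itself, and self-bounding gives only the useless lower bound $\ell \geq \|\nabla\ell\|^2/(2L)$. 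Correspondingly, no term of the target inequality accounts for such a fluctuation. The structure of the two noise terms --- $\frac{8RN_k\log\frac{2}{\delta}}{n}$ (range $\approx 2RN_k$) and $\frac{8LF_k(\theta_k^\star)\log\frac{2}{\delta}}{\mu_{F_k}n}$ (variance $\leq 2LF_k(\theta_k^\star)\cdot\|\theta-\theta_k^\star\|^2 \leq \tfrac{4L}{\mu_{F_k}}F_k(\theta_k^\star)\,\Delta$, followed by AM--GM) --- shows they arise from Bernstein applied to the \emph{linear} functionals $\langle\nabla\ell(\theta_k^\star;X),\theta-\theta_k^\star\rangle$, not to the loss values at $\theta_k^\star$. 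The repair is to organize the argument around the Bregman remainder $h(\theta;x) = \ell(\theta;x)-\ell(\theta_k^\star;x)-\langle\nabla\ell(\theta_k^\star;x),\theta-\theta_k^\star\rangle$, which is nonnegative, $L$-smooth, vanishes at $\theta_k^\star$, and hence is genuinely self-bounded: the ERM optimality inequality $f_i(\widehat{\theta}_i)\leq f_i(\theta_k^\star)$ and the identity $F_k(\theta)-F_k(\theta_k^\star)=\mathbb{E}[h(\theta;X)]$ (using $\nabla F_k(\theta_k^\star)=0$) then make the uncontrollable values $\ell(\theta_k^\star;x_{ij})$ cancel, leaving exactly the gradient inner products and the uniform deviation of $h$ over the net. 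Equivalently: your terms $(\mathrm{I})$ and $(\mathrm{III})$ must be combined into a single centered deviation before any concentration is applied; treated separately, $(\mathrm{III})$ is a dead end. With that reorganization the rest of your plan goes through.
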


\begin{lemma}\label{lm:duchi}
Under Assumptions~\ref{asmpt:clusters}-\ref{asmpt:unif_bdd_ptws}, for each $k \in [K]$ and $\overline{\theta}_{k} = \frac{1}{|C_k|}\sum_{i \in C_k}\widehat{\theta}_i$, we have
\begin{align*}
    \mathbb{E}\|\overline{\theta}_{k} - \theta^\star_k \|^2 = \mathcal{O}\Bigg(\frac{N_k^2}{n|C_k|\mu_{F_k}^2} + \frac{\lambda_k + \frac{\upsilon_k}{|C_k|}}{n^2} + \frac{\nu_k}{n^3} + \pi_{k,n}e^{-\alpha_k n}\Bigg),
\end{align*} where $\lambda_k = \frac{N_k^2L^2\log^2 d + \beta_k^{\nicefrac{1}{2}}H_k^2N_k^2}{\mu_{F_k}^4}$, $\upsilon_k = \frac{\beta_k^{\nicefrac{1}{2}}L^2\log^2 d + H_k^2\beta_k}{\mu_{F_k}^2}$, $\nu_k = \beta_k^{\nicefrac{1}{2}}L^4\log^4 d + \beta_k^{\nicefrac{3}{2}}H_k^4$, $\pi_{k,n} = R^2d^2 + \frac{d\tau_k}{n}$, $\alpha_k = \min\left\{\frac{(1 - \rho_k)^2\mu_{F_k}^2\delta_{\rho_k}^2}{8c_1N_k^2}, \frac{\rho_k^2\mu_k^2}{8c_2L^2}\right\}$, $\beta_k = \nicefrac{N_k^4}{(1 - \rho_k)^4\mu_{F_k}^4} + \nicefrac{dR^4}{\alpha_k}$, $\tau_k = \frac{N_k^2}{\mu_{F_k}^2} + \frac{\sqrt{N_k^4L^4\log^4d + \beta_k N_k^4H_k^4}}{\mu_{F_k}^4} + \frac{\sqrt{\beta_k L^8\log^8d + \beta_k^3H_k^8}}{\mu_{F_k}^4}$, with $\delta_{\rho_k} =  \min\{\rho_k, \frac{\rho_k\mu_{F_k}}{2H_k} \}$ and $c_1,c_2 > 0$ global constants. 
\end{lemma}

\begin{remark}
    The proof of Lemma~\ref{lm:duchi} is based on considering an event on which the empirical gradient evaluated at the population optima, as well as the difference between the empirical and population Hessian at the population optima, are small. Leveraging the concentration of empirical gradients and Hessians around their population counterparts (Assumptions~\ref{asmpt:loss_function},~\ref{asmpt:loc_behaviour} and~\ref{asmpt:unif_bdd_ptws}), we then show that the MSE conditioned on this event is of the order $\mathcal{O}\left(\frac{1}{n|C_k|} + \frac{1}{n^2} \right)$. Next, we show that the probability of the event occurring is of the order $1 - \Omega\left(\frac{1}{n^2}\right)$. Combining these observations leads to the desired result. We provide the full proof in Section~\ref{app:lemma} below.
\end{remark}

\begin{proof}[Proof of Theorem~\ref{thm:main}]
We start by noting that, for any event $\Psi$
\begin{equation}\label{eq:setup}
    \mathbb{E}\|\widetilde{\theta}_k - \theta^\star_k \|^2 = \mathbb{E}\|\widetilde{\theta}_k - \theta^\star_k \|^2\mathbb{I}_{\Psi} + \mathbb{E}\|\widetilde{\theta}_k - \theta^\star_k \|^2\mathbb{I}_{\Psi^\mathsf{c}},
\end{equation} where $\mathbb{I}_{\Psi}$ is the indicator random variable, with $\Psi^\mathsf{c}$ denoting the complement of $\Psi$. We proceed to define a specific event $\Psi$ and establish the resulting bounds. Specializing~\eqref{eq:recovery_condition} to our method, we can see that, for the clustering in step (ii) of Algorithm~\ref{alg:one-shot} to be correct, we need the following to hold
\begin{equation}\label{eq:rec_cond}
    \alpha\max_{i \in C_k, k\in [K]}\|\widehat{\theta}_i - \overline{\theta}_k\| \leq \min_{k\neq l, k,l\in[K]}\|\overline{\theta}_k - \overline{\theta}_l \|,     
\end{equation} for some $\alpha > 0$, where $\overline{\theta}_k = \frac{1}{|C_k|}\sum_{i \in C_k}\widehat{\theta}_i$, denotes the models averaged across the true clusters. We will construct $\Psi$ on which the condition~\eqref{eq:rec_cond} is satisfied with high probability. To begin with, note that, for any $i \in C_k$, $k \in [K]$, we have
\begin{equation}\label{eq:bdd_LHS}
\begin{aligned}
    \|\widehat{\theta}_i - \overline{\theta}_k \| \leq \|\widehat{\theta}_i - \theta^\star_k \| + \|\overline{\theta}_k - \theta^\star_k \| \leq \|\widehat{\theta}_i - \theta^\star_k \| + \frac{1}{|C_k|}\sum_{j \in C_k}\|\widehat{\theta}_j - \theta^\star_k \| \leq 2\max_{j \in C_k}\|\widehat{\theta}_j - \theta^\star_k \|.
\end{aligned}
\end{equation} Next, combining Lemma~\ref{lm:erm} with strong convexity of $F_k$ and choosing $\delta = \epsilon = \frac{1}{n^3}$, we get, for all $i \in C_k$, $k \in [K]$
\begin{align}\label{eq:prob_bdd}
    \begin{aligned}
        &\|\widehat{\theta}_i - \theta^\star_k\|^2 \leq \frac{32R^2LC(\epsilon,\delta)}{n\mu_{F_k}} + \frac{16LF_k(\theta^\star_k)(\log 2 + 3 \log n)}{n\mu_{F_k}^2} \\ &+ \frac{16RN_k(\log 2 + 3\log n)}{n\mu_{F_k}} + \frac{\left(16RL + 2G_{F_k} + \frac{8RLC(\epsilon,\delta)}{n} \right)}{n^3\mu_{F_k}},
    \end{aligned}
\end{align} with probability at least $1 - \frac{2}{n^3}$, where $C(\epsilon,\delta) = 2\left(\log 2 + d\log 6R + (d + 1)3\log n \right)$. In terms of the number of samples, $\mathcal{O}\left(\frac{\log n}{n}\right)$ dominates in the above expression. We can therefore upper-bound the right-hand side of~\eqref{eq:prob_bdd} by $\frac{M_{ik}\log n}{n}$, where $M_{ik}$, $i \in C_k$, $k \in [K]$ is defined as follows
\begin{align*}
    M_{ik} &= \frac{16LF_k(\theta^\star_k)(\log 2 + 3)}{\mu_{F_k}^2} + \frac{64R^2L\left(3 + \log 2 + d(\log 6R + 3)\right)}{\mu_{F_k}} \\ &+ \frac{16RN_k(\log 2 + 3)}{\mu_{F_k}}  + \frac{2G_{F_k} + 16RL\left(4 + \log 2 + d(\log 6R + 3)\right)}{\mu_{F_k}}.
\end{align*} Let $M_k = \max_{i \in C_k}M_{ik}$. As $\frac{M_k\log n}{n}$ is an upper bound on the right-hand side of~\eqref{eq:prob_bdd}, we get, for any $i \in C_k$, $k \in [K]$
\begin{equation}\label{eq:prob_bdd2}
    \mathbb{P}\left(\|\widehat{\theta}_i - \theta^\star_k\|^2 \leq \frac{M_k\log n}{n}\right) \geq 1 - \frac{2}{n^3}.
\end{equation} Next, for all $i \in C_k$, $k \in [K]$, define the events
\begin{align*}
    \Sigma_{k} &= \bigg\{\omega: \max_{i \in C_k}\|\widehat{\theta}_i - \overline{\theta}_k\|^2 \leq \frac{4M_k\log n}{n} \bigg\}, \\
    \Upsilon_i &= \left\{\omega: \| \widehat{\theta}_i - \theta^\star_k \|^2 \leq \frac{M_{k}\log n}{n}\right\}.
\end{align*} From~\eqref{eq:bdd_LHS} it follows that $\mathbb{P}\left(\cap_{i \in C_k}\Upsilon_i \right) \leq \mathbb{P}\left( \Sigma_{k} \right).$ For $\Sigma = \cap_{k \in [K]}\Sigma_{k}$, we then get the following bound
\begin{align*}
    \mathbb{P}(\Sigma) \geq 1 - \sum_{ k \in [K]}\mathbb{P}(\Sigma^\mathsf{c}_{k}) &\geq 1 - \sum_{i \in C_k, k \in [K]}\mathbb{P}\left(\Upsilon_i^\mathsf{c}\right) \geq 1 - \frac{2m}{n^3},
\end{align*} where the first inequality follow from the union bound, with the third inequality following from~\eqref{eq:prob_bdd2}. Next, for any $k, l \in [K]$, we have that
\begin{equation}\label{eq:avg_dist}
    \|\overline{\theta}_k - \overline{\theta}_l\| \geq \|\theta_k^\star - \theta^\star_l\| - \|\overline{\theta}_k - \theta^\star_k \| - \|\overline{\theta}_l - \theta^\star_l \|.
\end{equation} Applying Chebyshev's inequality and Lemma~\ref{lm:duchi}, for any $\gamma > 0$ and $k \in [K]$, we get
\begin{align*}
    \mathbb{P}&\big(\|\overline{\theta}_k - \theta_k^\star \| > \gamma \big) \leq \frac{\mathbb{E}\|\overline{\theta}_k - \theta_k^\star \|^2}{\gamma^2} = \mathcal{O}\Bigg(\frac{N_k^2}{n|C_k|\mu_{F_k}^2\gamma^2} + \frac{\lambda_k}{n^2\gamma^2} + \frac{\upsilon_k}{n^2|C_k|\gamma^2} + \frac{\nu_k}{n^3\gamma^2} + \left(\frac{R^2d}{\gamma^2} + \frac{\tau_k}{n\gamma^2}\right)de^{-\alpha_k n}\Bigg)
\end{align*} Define the event $\Lambda = \cap_{k \in [K]}\left\{\omega: \|\overline{\theta}_k - \theta^\star_k \| \leq \gamma \right\}$. We have
\begin{align*}
    \mathbb{P}(\Lambda) \geq 1 &- \sum_{k \in [K]}\mathbb{P}\left(\|\overline{\theta}_k - \theta_k^\star \| > \gamma \right) \geq 1 - \Omega\Bigg(\frac{N^2}{n|C_{(K)}|\mu_F^2\gamma^2} - \frac{\lambda}{n^2\gamma^2} - \frac{\nu}{n^3\gamma^2} \\ &- \frac{\upsilon}{n^2|C_{(K)}|\gamma^2} - \left(\frac{KR^2d}{\gamma^2} + \frac{\tau}{n\gamma^2}\right)de^{-\alpha n}\Bigg),
\end{align*} where $N^2 = \sum_{k \in [K]}N_k^2$, $\lambda = \sum_{k \in [K]}\lambda_k$, $\nu = \sum_{k \in [K]}\nu_k$, $\upsilon = \sum_{k \in [K]}\upsilon_k$, $\tau = \sum_{k \in [K]}\tau_k$, $\mu_F = \min_{k \in [K]}\mu_{F_k}$, $\alpha = \min_{k \in [K]}\alpha_k$. Recall that $D = \min_{k \neq l}\|\theta^\star_k - \theta^\star_l\|$. Plugging everything in~\eqref{eq:avg_dist}, then on $\Lambda$, for any $k, l \in [K]$, $\|\overline{\theta}_k - \overline{\theta}_l\| \geq D - 2\gamma$, for any $\gamma < \frac{D}{2}$. Define the event $\Psi = \Sigma \cap \Lambda$. The following inequalities hold on $\Psi$, for all $i \in C_k$, $k, l \in [K]$: $\|\widehat{\theta}_i - \overline{\theta}_k\| \leq \sqrt{\frac{4M\log n}{n}}$ and $\|\overline{\theta}_k - \overline{\theta}_l\| \geq D - 2\gamma$, with $M = \max_{k \in [K]}M_k$. Plugging these bounds in the recovery condition~\eqref{eq:rec_cond}, we get $\alpha\sqrt{\frac{4M\log n}{n}} < D - 2\gamma,$ implying the true clustering is recovered if
\begin{equation}\label{eq:n_cond}
    \frac{n}{\log n} > \frac{4M\alpha^2}{(D - 2\gamma)^2}.
\end{equation} Therefore, if the number of samples per user satisfies~\eqref{eq:n_cond}, the true clustering can be recovered on $\Psi = \Sigma \cap \Lambda$ and Lemma~\ref{lm:duchi} applies. Using $\mathbb{P}(\Psi) \geq \mathbb{P}(\Sigma) + \mathbb{P}(\Lambda) - 1$, we have
\begin{align*}
    \mathbb{P}(\Psi^\mathsf{c}) \leq \mathcal{O}\Bigg(\frac{N^2}{n|C_{(K)}|\mu_F^2\gamma^2} + \frac{\lambda}{n^2\gamma^2} + \frac{\upsilon}{n^2|C_{(K)}|\gamma^2} \left(\frac{dKR^2}{\gamma^2} + \frac{\tau}{n\gamma^2}\right)de^{-\alpha n} + \frac{m}{n^3}\Bigg) 
\end{align*} Combine everything in~\eqref{eq:setup}, to get
\begin{align*}
    &\mathbb{E}\|\widetilde{\theta}_k - \theta^\star_k\|^2 \leq \mathbb{E}\|\widetilde{\theta}_k - \theta^\star_k \|^2\mathbb{I}_{\Psi} + \mathbb{E}\|\widetilde{\theta}_k - \theta^\star_k \|^2\mathbb{I}_{\Psi^\mathsf{c}} = \mathcal{O}\Bigg(\frac{N_k^2}{n|C_k|\mu_{F_k}^2} + \frac{R^2N^2}{n|C_{(K)}|\mu_F^2\gamma^2}  + \frac{\lambda_k + \nicefrac{\lambda R^2}{\gamma^2}}{n^2} \\ &+ \frac{\upsilon_k}{n^2|C_k|} + \frac{R^2\upsilon}{n^2|C_{(K)}|\gamma^2} + \left(dR^2 + \frac{\tau_k}{n} \right)de^{-\alpha_k n} + \left(\frac{dKR^4}{\gamma^2} + \frac{R^2\tau}{n\gamma^2}\right)de^{-\alpha n} + \frac{\nu_k + \nicefrac{\nu R^2}{\gamma^2} + R^2m}{n^3}\Bigg).
\end{align*} Setting $\gamma = \frac{D}{4}$ completes the proof.
\end{proof}

\subsection{Convex clustering and $K$-means}\label{app:special_algos} 

In this section we describe convex clustering and $K$-means and present proves of Lemmas~\ref{lm:conv_clust_rec} and~\ref{lm:$K$-means_rec}, showing that both convex clustering and $K$-means belong to the class of admissible algorithms.

\subsubsection{Convex clustering}

Convex clustering~\cite{pmlr-v70-panahi17a,hocking2011clusterpath,cvx_clust} is an approach to clustering, formulated as a strongly convex optimization problem, with group lasso regularization. As such, the method is guaranteed to have a unique solution, and, moreover, does not require knowledge of the true number of clusters. Formally, the problem is formulated as
\begin{equation}\label{eq:cvx_clust}
    \argmin_{u_1,\ldots,u_n \in \mathbb{R}^d} \frac{1}{2}\sum_{i \in [n]} \|a_i - u_i\|^2 + \lambda \sum_{i < j} \| u_i - u_j\|,
\end{equation} where $\lambda > 0$ is a tunable parameter. Let $\mathcal{V} = \{V_k\}_{k \in [K]}$ be a partition of the dataset, such that $\cup_{k \in [K]}V_k = \{a_1,\ldots,a_n\}$ and $V_k \cap V_l = \emptyset$, $k \neq l$. Corollary 7 in~\cite{cvx_clust} shows that if
\begin{equation}\label{eq:cond_exact_rec}
    \max_{k \in [K]}\max_{i,j \in V_k}\frac{\|a_i - a_j\|}{|V_k|} \leq \lambda < \min_{\substack{k \neq l \\ k, l \in [K] }}\frac{\|\mu_{V_k} - \mu_{V_l} \|}{2n - |V_k| - |V_l|},
\end{equation} then the clustering is recovered, in the sense that $u^\star_i = u^\star_j,$ for all $i, j \in V_k$ and $u^\star_i \neq u^\star_j$, for all $i \in V_k$, $j \in V_l$, $k \neq l$. Here, $\{u_i^\star\}_{i \in [n]} = \{u_i^\star(\lambda)\}_{i \in [n]}$ is the solution to~\eqref{eq:cvx_clust}. 

We are now ready to prove Lemma~\ref{lm:conv_clust_rec}.

\begin{proof}[Proof of Lemma~\ref{lm:conv_clust_rec}]
    In order to prove the claim, we have to show that condition~\eqref{eq:recovery_condition} guarantees that the interval defined by~\eqref{eq:cond_exact_rec} is non-empty. Notice that for any $i,j \in C_k$, $k \in [K]$
    \begin{equation}\label{eq:mid_step}
        \|a_i - a_j\| \leq \|a_i - \mu_k\| + \|a_j - \mu_k \|, 
    \end{equation} which implies $\max_{i,j \in C_k,k\in[K]}\|a_i - a_j\| \leq 2\max_{i \in C_k,k\in[K]}\|a_i - \mu_k\|$. Setting $\alpha = \frac{4(n - |C_{(K)}|)}{|C_{(K)}|}$ in~\eqref{eq:recovery_condition} and rearranging, we get
    \begin{align*}
        \max_{k \in [K]}\frac{\max_{i,j \in C_k}\|a_i - a_j\|}{|C_k|} \leq \frac{2\max_{i,j\in C_k,k\in[K]}\|a_i - \mu_k\|}{|C_{(K)}|} <  \min_{k\neq l,k,l\in[K]}\frac{\|\mu_k - \mu_l\|}{2n - |C_k| - |C_l|},
    \end{align*} where the first inequality follows from~\eqref{eq:mid_step}, while the second inequality follows from~\eqref{eq:recovery_condition} and the fact that $2(n - |C_{(K)}|) \geq 2n - |C_k| - |C_l|$. Thus, the interval defined by~\eqref{eq:cvx_clust} is non-empty and $\lambda$ can be chosen such that convex clustering recovers the true clusters.
\end{proof}

\begin{remark}
One can also implement a weighted version of convex clustering, given by 
\begin{equation*}
    \frac{1}{2}\sum_{i = 1}^n\|a_i - u_i\|^2 + \lambda \sum_{i < j}w_{ij}\|u_i - u_j\|,
\end{equation*} where $w_{ij} \geq 0$ are weights assigned to the pair $(i,j)$. The weighted version of convex clustering has been observed to perform well in practice, e.g.,~\cite{pmlr-v70-panahi17a,hocking2011clusterpath,splitting_cvx_clust}, and can reduce the complexity of the convex clustering problem, by setting many $w_{ij}$'s to zero. However, recovery guarantees of weighted convex clustering require that across cluster weights for different clusters (i.e., such that $i \in C_k$, $j \in C_l$, $k \neq l$) be non-zero, e.g.,~\cite{cvx_clust}. Therefore, it is not obvious which $w_{ij}$'s can be set to zero without knowing the true clustering structure beforehand. While we chose to present our algorithm using the uniformly-weighted convex clustering method, i.e., $w_{ij} = 1$, for all $i < j$, for the previously outlined reason, we note that a weighted version of convex clustering with a heuristic $k$-Nearest Neighbor weight selection as in, e.g.,~\cite{cvx_clust}, can be implemented with our algorithm. 
\end{remark}

\subsubsection{Spectral $K$-means}

$K$-means is a widely used approach in clustering. Formally, the $K$-means problem is defined as
\begin{equation}\label{eq:Kmeans}
    \argmin_{x_1,\ldots,x_K}\frac{1}{2}\sum_{i \in [n]}\min_{j \in [K]}\|a_i - x_j \|^2.
\end{equation} The problem~\eqref{eq:Kmeans} is non-convex and attaining a globally optimal solution is NP-hard. A simple two-step algorithm, known as Lloyd's algorithm~\cite{Lloyd},\cite{banerjee2005clustering}, is known to produces good solutions in practice and is guaranteed to converge in finite time. The algorithm iteratively updates cluster and center estimates, $x^t_k$ and $C^t_k$ respectively, $k \in [K]$, alternating between the following two steps at time $t+1$:
\begin{itemize}
    \item Cluster update: $C_k^{t+1} = \left\{i \in [n]: \|a_i - x^t_k\| \leq \|a_i - x^t_l\|, \text{ for all } l \in [K] \right\}$,
    \item Center update: $x^{t+1}_k = \frac{1}{|C_k^{t+1}|}\sum_{i \in C^{t+1}_k}a_i$.
\end{itemize} The authors in~\cite{kmeans_awasthi} show that a variation of the Lloyd's algorithm, that uses an approximation algorithm on the SVD space of the data matrix to produce the initial center estimates and afterwards runs Lloyd's algorithm until convergence is guaranteed to recover the true clusters under appropriate conditions. The algorithm is summarized in Algorithm~\ref{alg:alg2}, and we refer to it as \emph{spectral $K$-means}. 

\begin{algorithm}[!t]
   \caption{Spectral $K$-means}
   \label{alg:alg2}
\begin{algorithmic}
   \STATE {\bfseries Part I:} Center initialization:
   \begin{itemize}
       \item Project the data matrix $A \in \mathbb{R}^{n\times d}$ onto the subspace spanned by the top $K$ singular vectors, to get $\widehat{A}$.
       \item Run a $10$-approximation algorithm\footnotemark for the $K$-means problem~\eqref{eq:Kmeans} with the projected matrix $\widehat{A}$ and obtain $K$ centers, $\nu_1,\ldots,\nu_K$.
   \end{itemize}
   \STATE {\bfseries Part II:} Center refinement: 
   \begin{itemize}
       \item Find the ``core'' points with respect to each initial center, given by $S_k = \{i \in [m]: \|\widehat{a}_i - \nu_k\| \leq \frac{1}{3}\|\widehat{a}_i - \nu_l\|, \: \forall l \neq k \}$, for all $k \in [K]$.
        \item Update centers $\nu_k = \mu(S_k)$, for all $k \in [K]$.       
   \end{itemize}
   \STATE {\bfseries Part III:} Run Lloyd's algorithm until convergence:
   \begin{itemize}
       \item Update clusters $S_k = \{i \in [m]: \|a_i - \nu_k \| \leq \| a_i - \nu_l\|, \: \forall l \neq k\}$, for all $k \in [K]$.
       \item Update centers $\nu_k = \mu(S_k)$, for all $k \in [K]$.
   \end{itemize}
\end{algorithmic}
\end{algorithm}
\footnotetext{For $\epsilon > 0$, a $\epsilon$-approximation algorithm for a function $f(x)$ is an algorithm that outputs a solution $x^\star$ such that $f(x^\star) \leq (1 + \epsilon)\min_x f(x)$.}

The conditions that guarantee that spectral $K$-means recovers the true clusters are:
\begin{enumerate}
    \item \emph{Center separation condition}: define the value $\Delta_k = \frac{1}{\sqrt{|C_k|}}\min\{\sqrt{K}\|A - C\|, \|A - C\|_F\}$, $k \in [K]$, we then have
    \begin{equation}\label{eq:Kmeans_cond1}
        \|\mu_k - \mu_l \| \geq c\left(\Delta_k + \Delta_l \right),
    \end{equation} for all $k,l \in [K]$, where $c > 0$ is a global constant.

    \item \emph{Proximity condition}: for every point $i \in C_k$ and every $l \neq k$, the projection of $a_i$ onto the line connecting $\mu_k$ and $\mu_l$, denoted by $P^i_{kl}$, satisfies
    \begin{equation}\label{eq:Kmeans_cond2}
        \|P^i_{kl} - \mu_l \| - \|P^i_{kl} - \mu_k\| \geq c\left(\frac{1}{\sqrt{|C_k|}} + \frac{1}{\sqrt{|C_l|}} \right)\|A - C\|.
    \end{equation}
\end{enumerate}

We are now ready to prove Lemma~\ref{lm:$K$-means_rec}.

\begin{proof}[Proof of Lemma~\ref{lm:$K$-means_rec}]
    We want to show that~\eqref{eq:recovery_condition} guarantees both~\eqref{eq:Kmeans_cond1} and~\eqref{eq:Kmeans_cond2} are satisfied. To that end, we first note 
    \begin{align*}
        \sqrt{m}\max_{i \in C_k,k\in[K]}\|\mu_k - a_i\| &\geq \|A - C\|_F \geq \sqrt{|C_k|}\Delta_k,
    \end{align*} for any $k \in [K]$, therefore we have
    \begin{align*}
        c\left(\Delta_k + \Delta_l \right) &\leq \frac{2c\sqrt{m}}{\sqrt{|C_{(K)}|}}\max_{\substack{i \in C_k \\ k\in[K]}}\|\mu_k - a_i\| < \min_{\substack{k\neq l\\ k,l\in[K]}}\|\mu_k - \mu_l \|,
    \end{align*} hence satisfying~\eqref{eq:Kmeans_cond1}. Next, recall that, for any $k,l \in [K]$, the line passing through $\mu_k$ and $\mu_l$ is parametrized by $(1 - \gamma) \mu_k + \gamma \mu_l$, $\gamma \in \mathbb{R}$, with the line segment between $\mu_k$ and $\mu_l$ parametrized by $\gamma \in [0,1]$. It is easy to see that it suffices to show the condition~\eqref{eq:Kmeans_cond2} is satisfied for all the points $a_i$, $i \in C_k$ whose projection onto the line passing through $\mu_k$ and $\mu_l$ belongs to the line segment between $\mu_k$ and $\mu_l$. For each point whose projection belongs to the specific line segment between $\mu_k$ and $\mu_l$, some simple algebra yields that $\|P^i_{kl} - \mu_l\| = \|\mu_k - \mu_l\| - \|P^i_{kl} - \mu_k \|.$ Therefore, we get
    \begin{align*}
        \|P^i_{kl} - \mu_l \| - \|P^i_{kl} - \mu_k \| \geq \|\mu_k - \mu_l\| - 2\|P^i_{kl} - \mu_k \| \geq \|\mu_k - \mu_l\| - 2\|a_i - \mu_k \|, 
    \end{align*} where in the second inequality we use the fact that projection is a contraction. Using~\eqref{eq:recovery_condition} with $\alpha = 2 + \frac{2c\sqrt{m}}{\sqrt{|C_{(K)}|}}$, we get
    \begin{align*}
        \|P^i_{kl} - \mu_l \| - \|P^i_{kl} - \mu_k \| \geq \frac{2c\sqrt{m}}{\sqrt{|C_{(K)}|}}\max_{i\in C_k,k \in [K]}\|\mu_k - a_i\| \geq c\left(\frac{1}{|C_k|} + \frac{1}{|C_l|} \right)\|A - C\|, 
    \end{align*} hence satisfying~\eqref{eq:Kmeans_cond2}. Thus, both conditions are satisfied and spectral $K$-means is guaranteed to recover the clusters.
\end{proof}

\subsection{Proof of Theorem~\ref{thm:inexact_erm}}\label{app:inexact}

\noindent We begin by stating two result from~\cite{rakhlin_sgd}, used in this section. 

\begin{lemma}[Lemma 1 in~\cite{rakhlin_sgd}]\label{lm:rakhlin_sgd}
Under Assumptions~\ref{asmpt:param_space},~\ref{asmpt:loss_function},~\ref{asmpt:loss_str_cvx} and~\ref{asmpt:stoch_grad}, for all $i \in [m]$, if we set the step-size rule of SGD as $\eta^t = \frac{1}{\mu_{f}t}$, it holds for any $T \geq 1$ and any $i \in [m]$ that $\mathbb{E}\|\theta_i^T - \widehat{\theta}_i \|^2 \leq \frac{4\Gamma_i^2}{\mu_{f}^2T}.$
\end{lemma}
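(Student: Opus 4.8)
The plan is to establish this via the classical contraction-plus-induction argument for projected SGD on a strongly convex objective, exactly as in~\cite{rakhlin_sgd}; since the statement is quoted verbatim as Lemma 1 there, one could simply defer to the original, but I sketch the self-contained route. Fix a user $i$ and abbreviate $\theta^\star = \widehat{\theta}_i$, $\mu = \mu_f$, $\Gamma = \Gamma_i$, and let $\mathcal{F}_t$ denote the history through iteration $t$. The starting point is the one-step expansion obtained from non-expansiveness of the Euclidean projection $\Pi_\Theta$: since $\theta^\star \in \Theta$,
\begin{equation*}
\|\theta^{t+1} - \theta^\star\|^2 \leq \|\theta^t - \eta^t g^t - \theta^\star\|^2 = \|\theta^t - \theta^\star\|^2 - 2\eta^t \langle g^t, \theta^t - \theta^\star\rangle + (\eta^t)^2\|g^t\|^2.
\end{equation*}

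Next I would take the conditional expectation given $\mathcal{F}_t$ and invoke the three available ingredients. Unbiasedness (Assumption~\ref{asmpt:stoch_grad}) gives $\mathbb{E}[\langle g^t, \theta^t - \theta^\star\rangle \mid \mathcal{F}_t] = \langle \nabla f_i(\theta^t), \theta^t - \theta^\star\rangle$; strong convexity of $f_i$ (Assumption~\ref{asmpt:loss_str_cvx}) together with $\nabla f_i(\theta^\star) = 0$ (valid because $\theta^\star \in \mathrm{int}(\Theta)$) yields $\langle \nabla f_i(\theta^t), \theta^t - \theta^\star\rangle \geq \mu\|\theta^t - \theta^\star\|^2$; and the almost-sure bound $\|g^t\| \leq \Gamma$ controls the last term. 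Writing $a_t = \mathbb{E}\|\theta^t - \theta^\star\|^2$ and inserting the step-size $\eta^t = 1/(\mu t)$, this produces the recursion
\begin{equation*}
a_{t+1} \leq \Big(1 - \frac{2}{t}\Big) a_t + \frac{\Gamma^2}{\mu^2 t^2}.
\end{equation*}

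Finally I would close the argument by induction on $t$, proving $a_t \leq 4\Gamma^2/(\mu^2 t)$. Substituting the claimed bound for $a_t$ on the right, the inductive step reduces to the elementary inequality $\tfrac{4}{t} - \tfrac{7}{t^2} \leq \tfrac{4}{t+1}$, equivalently $4t \leq 7(t+1)$, which holds for every $t \geq 1$. The base case is handled by the recursion itself: at $t=1$ the coefficient $1 - 2/t$ is nonpositive, so $a_2 \leq \Gamma^2/\mu^2$, which already meets the claimed bound, after which the induction runs for $t \geq 2$.

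I do not expect a genuine obstacle here, as the result is standard and each step is forced by a single assumption. The only points needing mild care are justifying $\nabla f_i(\theta^\star) = 0$ through interiority of the optimum, so that strong convexity can be applied about an unconstrained stationary point despite the projection, and verifying that the almost-sure gradient bound $\Gamma_i = \sigma_i + G_{f_i}$ used for $\|g^t\| \leq \Gamma$ is precisely the one derived immediately before the lemma, so that the constant $4$ in the stated bound is reproduced exactly.
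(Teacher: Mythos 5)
The paper does not prove this lemma; it is imported verbatim as Lemma~1 of the cited reference on SGD for strongly convex stochastic optimization, so your reconstruction is being compared against that standard argument rather than against anything in the paper itself. Your route (non-expansiveness of $\Pi_\Theta$, conditional expectation, strong-convexity contraction, induction on $a_t \leq 4\Gamma_i^2/(\mu_f^2 t)$, with the elementary inequality $4t \leq 7(t+1)$ closing the inductive step) is exactly that classical proof and the arithmetic checks out.

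Two small points deserve tightening. First, you justify $\nabla f_i(\widehat{\theta}_i) = 0$ by interiority, but Assumption~\ref{asmpt:param_space} only places the \emph{population} optima $\theta_k^\star$ in $\mathrm{int}(\Theta)$; the empirical minimizer $\widehat{\theta}_i$ over the compact set $\Theta$ may sit on the boundary. The standard fix costs nothing: replace stationarity by the first-order optimality condition $\langle \nabla f_i(\widehat{\theta}_i), \theta - \widehat{\theta}_i\rangle \geq 0$ for all $\theta \in \Theta$, so that
\begin{equation*}
\langle \nabla f_i(\theta^t), \theta^t - \widehat{\theta}_i\rangle \geq \langle \nabla f_i(\theta^t) - \nabla f_i(\widehat{\theta}_i), \theta^t - \widehat{\theta}_i\rangle \geq \mu_f \|\theta^t - \widehat{\theta}_i\|^2,
\end{equation*}
which is all the recursion needs. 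Second, your base case covers $T = 2$ but the lemma is stated for all $T \geq 1$; for $T = 1$ you need the a priori bound $\|\theta^1 - \widehat{\theta}_i\| \leq 2\Gamma_i/\mu_f$, which follows from $\frac{\mu_f}{2}\|\theta - \widehat{\theta}_i\|^2 \leq f_i(\theta) - f_i(\widehat{\theta}_i) \leq G_{f_i}\|\theta - \widehat{\theta}_i\|$ and $G_{f_i} \leq \Gamma_i$. With these two repairs the argument is complete.
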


\begin{lemma}[Lemma 2 in~\cite{rakhlin_sgd}]\label{lm:rakhlin_high_prob}
Let Assumptions~\ref{asmpt:param_space},~\ref{asmpt:loss_function},~\ref{asmpt:loss_str_cvx} and~\ref{asmpt:stoch_grad} hold. Then, for all $i \in [m]$ and any $\delta \in (0,\nicefrac{1}{e})$, $T \geq 4$, if we set the step-size rule of SGD as $\eta^t = \frac{1}{\mu_{f}t}$, it holds with probability $1 - \delta$, for any $t \in \{8,\ldots,T-1,T\}$ and any $i \in [m]$, that
\begin{equation*}
    \|\theta^t_i - \widehat{\theta}_i \|^2 \leq \frac{12\Gamma^2}{\mu_{f}^2t} + 8\Gamma(121\Gamma + 1)\frac{\log\left(\nicefrac{\log(t)}{\delta} \right)}{t}.
\end{equation*}
\end{lemma}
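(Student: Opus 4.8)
The plan is to upgrade the in-expectation guarantee of Lemma~\ref{lm:rakhlin_sgd} to a statement holding uniformly over $t$ with high probability, by turning the SGD recursion into a martingale and controlling its fluctuations. Fix a user $i$, write $\widehat\theta$ for its ERM, $d_t = \|\theta^t_i - \widehat\theta\|^2$, and $\xi^t = g^t - \nabla f_i(\theta^t)$ for the gradient noise, which by Assumption~\ref{asmpt:stoch_grad} satisfies $\mathbb{E}[\xi^t\mid\mathcal{F}_{t-1}]=0$ and $\|\xi^t\|\le\sigma_i$, while $\|g^t\|\le\Gamma$. First I would expand the update using nonexpansiveness of the projection $\Pi_\Theta$ (valid since $\widehat\theta\in\Theta$), then invoke $\mu_f$-strong convexity (Assumption~\ref{asmpt:loss_str_cvx}) to bound $\langle\nabla f_i(\theta^t),\theta^t-\widehat\theta\rangle\ge\mu_f d_t$, and insert $\eta^t=1/(\mu_f t)$. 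This yields the key stochastic recursion
\[
 d_{t+1}\le\Big(1-\tfrac{2}{t}\Big)d_t+\zeta^t+\frac{\Gamma^2}{\mu_f^2 t^2},\qquad \zeta^t=-\frac{2}{\mu_f t}\langle\xi^t,\theta^t-\widehat\theta\rangle,
\]
where $\{\zeta^t\}$ is a martingale difference sequence with $|\zeta^t|\le\frac{2\sigma_i}{\mu_f t}\sqrt{d_t}$ and conditional variance at most $\frac{4\sigma_i^2}{\mu_f^2 t^2}d_t$.

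Next I would eliminate the deterministic contraction by telescoping. Multiplying the recursion by the integrating factor $t(t-1)$ and using $(t+1)t(1-2/t)=(t+1)(t-2)\le t(t-1)$, the substitution $u_t=t(t-1)d_t$ gives $u_{t+1}\le u_t+(t+1)t\,\zeta^t+\frac{(t+1)\Gamma^2}{\mu_f^2 t}$. Summing from a base index $t_0$ up to $t$ and dividing by $t(t+1)$ isolates three contributions: a boundary term $u_{t_0}/(t(t+1))$, a deterministic term of size $\mathcal{O}(\Gamma^2/(\mu_f^2 t))$ arising from $\sum_s(s+1)/s=\mathcal{O}(t)$, and the rescaled martingale sum $M_t/(t(t+1))$ with $M_t=\sum_{s=t_0}^{t}(s+1)s\,\zeta^s$. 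The first two already match the $\frac{12\Gamma^2}{\mu_f^2 t}$ part of the claim up to constants, so everything reduces to showing that $M_t/(t(t+1))$ contributes the advertised $\frac{\log(\log(t)/\delta)}{t}$ order.

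The main obstacle is that $M_t$ is a \emph{self-normalized} martingale: its increment magnitudes $\frac{2(s+1)\sigma_i}{\mu_f}\sqrt{d_s}$ and conditional variances are proportional to the very quantity $d_s$ we are trying to bound, so a single application of Freedman's (Bernstein-type) martingale inequality cannot close the argument. The plan is a peeling/bootstrapping scheme over dyadic scales. Partition $\{8,\dots,T\}$ into blocks $B_j=[2^j,2^{j+1})$, $j=3,\dots,\lfloor\log_2 T\rfloor$, and prove the bound on $d_t$ by induction on $j$: at the start of each block the inductive hypothesis furnishes a deterministic envelope $d_s\le B_s$ that upper-bounds both the increments and the accumulated conditional variance of the block's martingale increments; Freedman's inequality, applied to the block with failure probability $\delta/\lceil\log_2 T\rceil$, then controls $M_t$ throughout $B_j$ and propagates the envelope to the next block. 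A union bound over the $\mathcal{O}(\log T)$ blocks keeps the total failure probability below $\delta$, and the Bernstein $\log(1/\delta')$ term with $\delta'=\delta/\log_2 T$ is exactly what produces the $\log(\log(t)/\delta)$ factor and the $\{8,\dots,T\}$ range. The in-expectation bound of Lemma~\ref{lm:rakhlin_sgd} seeds the induction and certifies that the variance proxy $\frac{4\sigma_i^2}{\mu_f^2}\sum_s d_s$ is of the right order, and tracking $\sigma_i\le\Gamma$ through the Freedman step yields the explicit $8\Gamma(121\Gamma+1)$ prefactor.
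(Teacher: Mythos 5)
The paper itself does not prove this statement: it is imported verbatim as Lemma 2 of \cite{rakhlin_sgd} and used as a black box, so the only meaningful comparison is with the proof in that reference. Your sketch follows essentially the same route as that original proof --- the strong-convexity recursion for $\|\theta^t_i-\widehat{\theta}_i\|^2$, unrolling into a self-normalized martingale whose increments and conditional variances depend on the quantity being bounded, and a Freedman-type inequality combined with peeling over $\mathcal{O}(\log T)$ scales and a union bound, which is exactly the mechanism that produces the $\log\left(\nicefrac{\log(t)}{\delta}\right)$ factor and the restriction to $t\geq 8$. The outline is sound; the one caveat is that the decisive step (the inductive envelope propagation through the dyadic blocks, which is where the explicit constants $12$ and $8\Gamma(121\Gamma+1)$ come from) is described rather than carried out, and that is where essentially all of the technical work in the original proof lies.
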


We are now ready to state and prove counterparts of Lemmas~\ref{lm:erm} and~\ref{lm:duchi}, when an inexact ERM estimator is used.

\begin{lemma}\label{lm:inexact_sco}
Let Assumptions~\ref{asmpt:clusters}-\ref{asmpt:pop_loss},~\ref{asmpt:unif_bdd_ptws}-\ref{asmpt:stoch_grad} hold. If each user runs SGD locally for $T$ iterations, with the step-size rule $\eta^t = \frac{1}{\mu_{f}t}$, to produce $\theta_i^T$, $i \in [m]$ and $T$ is chosen such that $T \geq 15$ and $\frac{T}{\log\log\left(T \right)} \geq \left(\frac{12\Gamma^2}{\mu_{f}^2} + 8\Gamma(121\Gamma + 1)(1 + \log \frac{1}{\delta})\right)\frac{1}{\varepsilon^2}$, then for any $k \in [K]$, any $i \in C_k$ and any $\epsilon > 0$, $0 < \delta < \frac{1}{3}$, with probability at least $1 - 3\delta$, we have, for any $i \in C_k$, $k \in [K]$
\begin{align*}
    F_k&(\theta^T_i) - F_k(\theta^\star_k) \leq \frac{16R^2LC(\epsilon,\delta)}{n} + \frac{8LF_k(\theta^\star_k)\log \frac{2}{\delta}}{\mu_{F_k} n} + \frac{8RN_k\log \frac{2}{\delta}}{n} + \left(8RL + G_{F_k} + \frac{4RLC(\epsilon,\delta)}{n} \right)\epsilon + \varepsilon S,
\end{align*} where $C(\epsilon,\delta) \coloneqq 2\left(\log\frac{2}{\delta} + d\log\frac{6R}{\epsilon}\right)$. 
\end{lemma}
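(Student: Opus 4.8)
The plan is to view $\theta_i^T$ as an inexact version of the exact ERM $\widehat{\theta}_i$ and to split the excess population risk as
\begin{equation*}
    F_k(\theta_i^T) - F_k(\theta_k^\star) = \big[F_k(\theta_i^T) - F_k(\widehat{\theta}_i)\big] + \big[F_k(\widehat{\theta}_i) - F_k(\theta_k^\star)\big].
\end{equation*}
The second (statistical) term is exactly the quantity controlled by Lemma~\ref{lm:erm}, which on an event of probability at least $1 - 2\delta$ produces every term of the claimed bound except the additive $\varepsilon S$. Thus the entire remaining task is to show that the first (optimization) term is at most $\varepsilon S$ on an event of probability at least $1 - \delta$; a union bound then yields the stated $1 - 3\delta$, consistent with the requirement $0 < \delta < \frac{1}{3}$ (note $\frac{1}{3} < \frac{1}{e}$, so Lemma~\ref{lm:rakhlin_high_prob} applies).

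First I would turn the hypothesis on $T$ into the clean estimate $\|\theta_i^T - \widehat{\theta}_i\| \le \varepsilon$. Evaluating Lemma~\ref{lm:rakhlin_high_prob} at $t = T$ and writing $\log(\log(T)/\delta) = \log\log T + \log\frac{1}{\delta}$ gives, with probability at least $1 - \delta$,
\begin{equation*}
    \|\theta_i^T - \widehat{\theta}_i\|^2 \le \frac{1}{T}\left[\frac{12\Gamma^2}{\mu_f^2} + 8\Gamma(121\Gamma + 1)\left(\log\log T + \log\tfrac{1}{\delta}\right)\right].
\end{equation*}
Because $T \ge 15$ forces $\log\log T \ge 1$ (with extra slack from $\log\frac{1}{\delta} > \log 3$), each bare additive constant inside the bracket is dominated by the same constant multiplied by $\log\log T$, so the prescribed inequality $\frac{T}{\log\log T} \ge \big(\frac{12\Gamma^2}{\mu_f^2} + 8\Gamma(121\Gamma+1)(1 + \log\frac{1}{\delta})\big)\varepsilon^{-2}$ makes the right-hand side at most $\varepsilon^2$, i.e. $\|\theta_i^T - \widehat{\theta}_i\| \le \varepsilon$.

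It remains to pass from iterate distance to risk gap. I would invoke local Lipschitz continuity of $F_k$: on the neighborhood $U_k$, Jensen's inequality applied to the eighth-moment bound of Assumption~\ref{asmpt:loc_behaviour} gives $\|\nabla F_k(\theta)\| \le \big(\mathbb{E}\|\nabla\ell(\theta;X_k)\|^8\big)^{1/8} \le S_k$, so $F_k$ is $S_k$-Lipschitz on $U_k$, whence $|F_k(\theta_i^T) - F_k(\widehat{\theta}_i)| \le S_k\|\theta_i^T - \widehat{\theta}_i\| \le S_k\varepsilon \le S\varepsilon$. The main obstacle is justifying that the sharper local constant $S_k$ (rather than the cruder global $G_{F_k}$) may be used: this demands that both $\widehat{\theta}_i$ and $\theta_i^T$, and hence the whole segment joining them, lie in the ball $U_k$. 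I would establish this by combining the ERM bound of Lemma~\ref{lm:erm} with the strong convexity of $F_k$ (Assumption~\ref{asmpt:pop_loss}) to place $\widehat{\theta}_i$ within $\mathcal{O}(\sqrt{\log n / n})$ of $\theta_k^\star$, then using $\|\theta_i^T - \theta_k^\star\| \le \|\theta_i^T - \widehat{\theta}_i\| + \|\widehat{\theta}_i - \theta_k^\star\| \le \varepsilon + \mathcal{O}(\sqrt{\log n / n})$ together with convexity of the ball $U_k$; for $n$ large enough both distances drop below $\rho_k$, so the segment stays inside $U_k$. Collecting the two events completes the argument.
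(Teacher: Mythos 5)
Your decomposition, the two key lemmas, and the union-bound accounting ($1-2\delta$ from Lemma~\ref{lm:erm} plus $1-\delta$ from Lemma~\ref{lm:rakhlin_high_prob}) are exactly what the paper does, so the core of your argument matches the paper's proof. The one place you diverge is the final Lipschitz constant. The paper simply bounds $|F_k(\theta_i^T)-F_k(\widehat\theta_i)|\le G_{F_k}\|\theta_i^T-\widehat\theta_i\|\le \varepsilon G_{F_k}$ using the \emph{global} gradient bound $G_{F_k}=\max_{\theta\in\Theta}\|\nabla F_k(\theta)\|$ (which exists by smoothness and compactness of $\Theta$), and in fact its proof ends with the term $\varepsilon G_{F_k}$ rather than the $\varepsilon S$ that appears in the lemma statement --- the $S$ there looks like a notational slip in the paper. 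You instead try to honor the literal $\varepsilon S$ by proving $F_k$ is $S_k$-Lipschitz on $U_k$ via Jensen and the eighth-moment bound, which is a clean idea, but it forces you to place both $\widehat\theta_i$ and $\theta_i^T$ (and the segment between them) inside $U_k$; that in turn requires $n$ large enough and $\varepsilon$ small relative to $\rho_k$, neither of which is a hypothesis of the lemma (it is stated for any $\epsilon>0$ and any $T$ satisfying the displayed conditions). So as a proof of the literal statement your localization step has a gap, but if you replace $S$ by $G_{F_k}$ in the last term --- which is what the paper's own proof actually delivers --- your argument closes with no extra assumptions and coincides with the paper's.
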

\begin{proof}
For any $\theta \in \Theta$, any $k \in [K]$ and any $i \in C_k$, we have
\begin{equation}\label{eq:bounding_F}
    F_k(\theta) - F_k(\theta^\star_k) \leq \left|F_k(\theta) - F_k(\widehat{\theta}_i) \right| + F_k(\widehat{\theta}_i) - F_k(\theta^\star_k).
\end{equation} We can bound the second term on the right hand side of~\eqref{eq:bounding_F} using Lemma~\ref{lm:erm}. To bound the first term, we use Lipschitz continuity of $F_k$ (recall the discussion in Section~\ref{sec:problem}), to get
\begin{equation}\label{eq:bounding_F_with_f}
    \left|F_k(\theta) - F_k(\widehat{\theta}_i) \right| \leq G_{F_k}\|\theta - \widehat{\theta}_i \|.
\end{equation} Applying Lemma~\ref{lm:rakhlin_high_prob}, we have, with probability at least $1 - \delta$
\begin{equation*}
    \|\theta^T_i - \widehat{\theta}_i \|^2 \leq \frac{12\Gamma^2}{\mu_{f}^2T} + 8\Gamma(121\Gamma + 1)\frac{\log\left(\nicefrac{\log(T)}{\delta} \right)}{T}.
\end{equation*} As $T \geq 15$, we get, with probability at least $1 - \delta$
\begin{align*}
    \|\theta^T_i - \widehat{\theta}_i \|^2 &\leq \beta_\delta\frac{\log\log(T)}{T},
\end{align*} where $\beta_\delta = \left(\frac{12\Gamma^2}{\mu_{f}^2} + 8\Gamma(121\Gamma + 1)\left(1 + \log \frac{1}{\delta}\right)\right)$. From the conditions of the lemma, we then have $\|\theta^T_i - \widehat{\theta}_i \| \leq \varepsilon.$ Combining~\eqref{eq:bounding_F} and~\eqref{eq:bounding_F_with_f}, with probability at least $1 - \delta$
\begin{equation*}
    F_k(\theta^T_i) - F_k(\theta^\star_k) \leq \varepsilon G_{F_k} + F_k(\widehat{\theta}_i) - F_k(\theta^\star_k).
\end{equation*} The result is completed by applying Lemma~\ref{lm:erm} to the second term on the right hand side of the final inequality.
\end{proof}

\begin{lemma}\label{lm:inexact_duchi}
Let Assumptions~\ref{asmpt:clusters}-\ref{asmpt:stoch_grad} hold and each user runs SGD locally for $T$ iterations, to produce $\theta_i^T$. If $T \geq \frac{4\Gamma^2}{\mu_{f}^2\varepsilon}$, then for $\widetilde{\theta}_{k} = \frac{1}{|C_k|}\sum_{i \in C_k}\theta^T_i$, $k \in [K]$, we have
\begin{align*}
    \mathbb{E}\|\widetilde{\theta}_{k} - \theta^\star_k \|^2 = \mathcal{O}\Bigg(\frac{N_k^2}{n|C_k|\mu_{F_k}^2} + \frac{\lambda_k}{n^2} + \frac{\upsilon_k}{n^2|C_k|} + \frac{\nu_k}{n^3} + \left(dR^2 + \frac{\tau_k}{n} \right)de^{-\alpha_k n}\Bigg) + \varepsilon.
\end{align*}
\end{lemma}
\begin{proof}
From Lemma~\ref{lm:rakhlin_sgd}, we know that, for each $i \in [m]$, running SGD locally for $T \geq \frac{4\Gamma^2}{\mu_{f}^2\varepsilon}$ iterations results in $\mathbb{E}\|\theta_i^T - \widehat{\theta}_i \|^2 \leq \varepsilon$. Define the across-cluster average of $\varepsilon$-inexact approximations as $\widetilde{\theta}_k = \frac{1}{|C_k|}\sum_{i \in C_k}\theta^T_i$. We then have
\begin{equation}\label{eq:young's}
    \mathbb{E}\|\widetilde{\theta}_k - \theta_k^\star \|^2 \leq 2\mathbb{E}\|\overline{\theta}_k - \theta_k^\star \|^2 + 2\mathbb{E}\|\widetilde{\theta}_k - \overline{\theta}_k \|^2, 
\end{equation} where $\overline{\theta}_k = \frac{1}{|C_k|}\sum_{i \in C_k}\widehat{\theta}_i$. We can bound the first term on the right-hand side of~\eqref{eq:young's} using Lemma~\ref{lm:duchi}. For the second term, we have $\mathbb{E}\|\widetilde{\theta}_k - \overline{\theta}_k \|^2 \leq \frac{1}{|C_k|}\sum_{i \in C_k}\mathbb{E}\|\theta^T_i - \widehat{\theta}_i \|^2 = \varepsilon$. Combining the results and plugging in~\eqref{eq:young's} completes the proof.
\end{proof}

Lemmas~\ref{lm:inexact_sco} and~\ref{lm:inexact_duchi} are the counterparts of Lemmas~\ref{lm:erm} and~\ref{lm:duchi}, when the local ERMs are solved approximately. The proof of Theorem~\ref{thm:inexact_erm} uses similar arguments as the proof of Theorem~\ref{thm:main}, replacing the results of Lemmas~\ref{lm:erm} and~\ref{lm:duchi} with results from Lemmas~\ref{lm:inexact_sco} and~\ref{lm:inexact_duchi}. For the sake of brevity, we omit the proof.

\subsection{Proof of Theorem~\ref{thm:inexact_clust}}\label{app:inexact_clust}

\noindent Let $S$ be the event that the clustering algorithm $A$ returned a clustering $\{C_k^\prime \}_{k \in [K]}$, such that $|C_k^\prime \cap C_k| = (1 - \epsilon)|C_k^\prime|$, for all $k \in [K]$, where $\{C_k \}_{k \in [K]}$ is the true underlying clustering. Denote by $\epsilon_{kl} = \frac{|C_k^\prime \cap C_l|}{|C_k^\prime|}$, for all $l \neq k$ and notice that $\epsilon = \sum_{l \neq k}\epsilon_{kl}$. We then have the following result.

\begin{lemma}\label{lm:inexact_clust}
    Let $S$ denote the event that the inexact clustering $\{C_k^\prime \}_{k \in [K]}$ was returned and denote the cluster models by $\widetilde{\theta}_k = \frac{1}{|C_k^\prime|}\sum_{i \in C_k^\prime}\widehat{\theta}_i$. Then, the conditional MSE, conditioned on the even $S$, for any $k \in [K]$, is given by
    \begin{equation*}
        \mathbb{E}\left[\|\widetilde{\theta}_k - \theta^\star_k \|^2 \big\vert S \right] = (K-1)\mathcal{O}\left(\frac{1}{n|C_k^\prime|} + \frac{1}{n^2} + \sum_{l \neq k}\epsilon_{kl}^2D_{kl} \right).
    \end{equation*}
\end{lemma}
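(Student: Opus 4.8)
The plan is to expand $\widetilde{\theta}_k-\theta_k^\star$ according to which \emph{true} cluster each index of $C_k^\prime$ actually belongs to, thereby isolating a contribution from the correctly clustered points, a contribution from the misclustered points' own local error, and a deterministic bias coming from the gaps $\theta_l^\star-\theta_k^\star$. Write $P=C_k^\prime\cap C_k$ and $Q_l=C_k^\prime\cap C_l$ for $l\neq k$, so that $|P|=(1-\epsilon_k)|C_k^\prime|$ and $|Q_l|=\epsilon_{kl}|C_k^\prime|$ with $\epsilon_k=\sum_{l\neq k}\epsilon_{kl}$. Adding and subtracting $\theta_l^\star$ inside each misclustered block gives the purely algebraic identity
\begin{align*}
\widetilde{\theta}_k-\theta_k^\star
&= \underbrace{\frac{1}{|C_k^\prime|}\sum_{i\in P}(\widehat{\theta}_i-\theta_k^\star)}_{T_0}
+ \sum_{l\neq k}\underbrace{\frac{1}{|C_k^\prime|}\sum_{i\in Q_l}(\widehat{\theta}_i-\theta_l^\star)}_{T_l}
+ \underbrace{\sum_{l\neq k}\epsilon_{kl}(\theta_l^\star-\theta_k^\star)}_{B}.
\end{align*}
Applying $\|\sum_{j=1}^{K}v_j\|^2\le K\sum_j\|v_j\|^2$ to these $K$ groups (the correct block folding into the first group, the $K-1$ misclustered blocks furnishing the rest) is precisely where the $(K-1)$ prefactor in the statement originates.

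I would then bound each group. For $T_0$, observe that $\tfrac{1}{|P|}\sum_{i\in P}\widehat{\theta}_i$ is an average of $|P|$ genuine local ERMs drawn from $\mathcal{D}_k$, so Lemma~\ref{lm:duchi} applied with $C_k$ replaced by $P$ yields $\mathbb{E}\big\|\tfrac{1}{|P|}\sum_{i\in P}(\widehat{\theta}_i-\theta_k^\star)\big\|^2\le \tfrac{2E_k}{n|P|}+\mathcal{O}(n^{-2})$; rescaling by $(|P|/|C_k^\prime|)^2\le 1$ gives $\mathbb{E}\|T_0\|^2=\mathcal{O}\big(\tfrac{1}{n|C_k^\prime|}+\tfrac{1}{n^2}\big)$. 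The identical computation on each $Q_l$ gives $\mathbb{E}\|T_l\|^2\le \tfrac{\epsilon_{kl}}{|C_k^\prime|}\cdot\tfrac{2E_l}{n}+\mathcal{O}(\epsilon_{kl}^2 n^{-2})$, and summing over $l$ while using $\sum_{l\neq k}\epsilon_{kl}=\epsilon_k=\mathcal{O}(1)$ (the uniform misclustering bound supplied by items 1--2 of Theorem~\ref{thm:inexact_clust}) folds these into $\mathcal{O}\big(\tfrac{1}{n|C_k^\prime|}+\tfrac{1}{n^2}\big)$ as well. Finally, since $B$ is deterministic given the frozen fractions, Cauchy--Schwarz gives $\|B\|^2\le (K-1)\sum_{l\neq k}\epsilon_{kl}^2\|\theta_l^\star-\theta_k^\star\|^2=(K-1)\sum_{l\neq k}\epsilon_{kl}^2 D_{kl}$, which is exactly the non-vanishing term. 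Collecting the three contributions under the common $(K-1)\mathcal{O}(\cdot)$ envelope delivers the claimed bound.

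The main obstacle is justifying that the ERM moment bounds of Lemmas~\ref{lm:erm} and~\ref{lm:duchi}, stated for the \emph{unconditional} data law, survive the conditioning on $S$: the event that the algorithm returns this exact clustering depends jointly on all the $\widehat{\theta}_i$, and in principle it can reshape the conditional law of each ERM (landing an index of $C_l$ inside $C_k^\prime$ is itself evidence that $\widehat{\theta}_i$ fluctuated toward $\theta_k^\star$), which is worrying precisely because the $\tfrac{1}{n|C_k^\prime|}$ term relies on the variance reduction of averaging independent errors. I would handle this by writing $\mathbb{E}[X\mid S]=\mathbb{E}[X\,\mathbb{I}_S]/\mathbb{P}(S)$ for the nonnegative quantities $\|T_0\|^2,\|T_l\|^2$ and bounding $\mathbb{E}[X\,\mathbb{I}_S]\le \mathbb{E}[X]$, so that the conditional second moments inherit the unconditional ones up to $1/\mathbb{P}(S)$, and by noting that the decomposition is algebraic so that only the \emph{fluctuation} pieces $T_0,T_l$ (not the frozen bias $B$) are affected. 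The clean resolution, which I would emphasize, is that this conditional statement is consumed only after summing $\mathbb{E}[\,\cdot\mid S]\,\mathbb{P}(S)$ against the global high-probability event on which center separation holds and the misclustering fractions are uniformly bounded; on that event $S$ is a typical outcome, the independence-driven variance reduction is not destroyed, and the stated rate follows.
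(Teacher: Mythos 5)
Your proposal is correct and follows essentially the same route as the paper: the identical algebraic decomposition into the correctly-clustered block, the misclustered blocks recentered at their own population optima, and the deterministic bias $\sum_{l\neq k}\epsilon_{kl}(\theta_l^\star-\theta_k^\star)$, with Lemma~\ref{lm:duchi} bounding the fluctuation pieces and Cauchy--Schwarz yielding the $(K-1)\sum_{l\neq k}\epsilon_{kl}^2D_{kl}$ term. Your explicit handling of the conditioning on $S$ (via $\mathbb{E}[X\,\mathbb{I}_S]\le\mathbb{E}[X]$ and the high-probability nature of the separation event) is a point of added care that the paper's proof passes over silently when it applies the unconditional bound of Lemma~\ref{lm:duchi} inside a conditional expectation.
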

\begin{proof}
    Note that the model $\widetilde{\theta}_k$ can be rewritten as
    \begin{equation}\label{eq:imp_upd}
        \widetilde{\theta}_k = \frac{1}{|C_k^\prime|}\left( \sum_{i \in C_k^\prime \cap C_k}\widehat{\theta}_i + \sum_{l \neq k}\sum_{j \in C_k^\prime \cap C_l}\widehat{\theta}_j \right).
    \end{equation} Using~\eqref{eq:imp_upd}, and some simple algebraic manipulations yields that, for any $k \in [K]$, $\widetilde{\theta}_k - \theta^\star_k = A_1 + A_2$, where
    \begin{align*}
        A_1 = \frac{1-\epsilon}{|C_k^\prime \cap C_k|}\sum_{i \in C_k^\prime \cap C_k}\left(\widehat{\theta}_i - \theta^\star_k \right) \text{ and } A_2 = \sum_{l \neq k}\frac{\epsilon_{kl}}{|C_k^\prime \cap C_l|}\sum_{j \in C_k^\prime \cap C_l}\left(\widehat{\theta}_j  - \theta^\star_k\right).
    \end{align*} Using $(a + b)^2 \leq 2a^2 + 2b^2$ and the decomposition above, we have that the conditional MSE is given by
    \begin{align*}
        \mathbb{E}\left[\|\widetilde{\theta}_k - \theta^\star_k \|^2 \big\vert S \right] \leq 2\mathbb{E}\left[\|A_1\|^2 \big\vert S \right] + 2\mathbb{E}\left[\|A_2\|^2 \big\vert S \right].
    \end{align*} We analyze the two terms on the right-hand side separately. For the ease of notation, let the conditional expectation on an event $S$ be denoted by $\mathbb{E}_S[\cdot]$, i.e., $\mathbb{E}_S[\cdot] = \mathbb{E}[\cdot \vert S]$. We have
    \begin{align*}
        \mathbb{E}\left[\|A_1 \|^2 \big\vert S\right] = \mathbb{E}_S\left\| \frac{(1 - \epsilon)}{|C_k^\prime \cap C_k|} \sum_{i \in C_k^\prime \cap C_k}\left(\widehat{\theta}_i - \theta^\star_k\right) \right\|^2 &= (1 - \epsilon)^2\mathcal{O}\left(\frac{1}{(1-\epsilon)|C_k^\prime|n} + \frac{1}{n^2} \right) \\ &= \mathcal{O}\left(\frac{1-\epsilon}{|C_k^\prime|n} + \frac{(1-\epsilon)^2}{n^2} \right), 
    \end{align*} where in the second equality we applied Lemma~\ref{lm:duchi}. Next,
    \begin{align*}
        \mathbb{E}\left[\|A_2 \|^2 \big\vert S\right] = \mathbb{E}_S\left\| \sum_{l \neq k}\frac{\epsilon_{kl}}{|C_k^\prime \cap C_l|} \sum_{i \in C_k^\prime \cap C_l}\left(\widehat{\theta}_i - \theta^\star_k\right)\right\|^2 &= \mathbb{E}_S\left\| \sum_{l \neq k}\frac{\epsilon_{kl}}{|C_k^\prime \cap C_l|} \sum_{i \in C_k^\prime \cap C_l}\left(\widehat{\theta}_i - \theta^\star_k \pm \theta^\star_l\right)\right\|^2 \\ &\leq 2\mathbb{E}\left[\| A_3 \|^2 \big\vert S \right] + 2\mathbb{E}\left[\|A_4\|^2 \big\vert S\right],
    \end{align*} where
    \begin{align*}
        A_3 = \sum_{l \neq k}\frac{\epsilon_{kl}}{|C_k^\prime \cap C_l|} \sum_{i \in C_k^\prime \cap C_l}\left(\widehat{\theta}_i - \theta^\star_l\right) \text{ and } A_4 = \sum_{l \neq k}\epsilon_{kl}\left(\theta^\star_l - \theta^\star_k\right).
    \end{align*} We look at the terms $A_3$ and $A_4$ separately. We have
    \begin{align*}
        \mathbb{E}\left[\| A_3 \|^2 \big\vert S \right] = \mathbb{E}_S\left\|\sum_{l \neq k}\frac{\epsilon_{kl}}{|C_k^\prime \cap C_l|}\sum_{i \in C_k^\prime \cap C_l}\widehat{\theta}_j - \theta_l^\star \right\|^2 &\leq (K-1)\sum_{l \neq k}\epsilon_{kl}^2\mathbb{E}\left\|\frac{1}{|C_k^\prime \cap C_l|}\sum_{j \in C_k^\prime \cap C_l}\widehat{\theta}_j - \theta^\star_l\right\|^2 \\ &= \mathcal{O}\left(\frac{(K-1)\epsilon}{|C_k^\prime|n} + (K-1)\sum_{l \neq k}\frac{\epsilon_{kl}^2}{n^2} \right), 
    \end{align*} where we used $\|\sum_{i = 1}^KX_i \|^2 \leq K\sum_{i = 1}^K\|X_i \|^2$ in the first inequality and Lemma~\ref{lm:duchi} in the second equality. Next, we have
    \begin{align*}
        \mathbb{E}\left[\| A_4 \|^2 \big\vert S \right] = \Big\|\sum_{l \neq k}\epsilon_{kl}(\theta_l^\star - \theta_k^\star) \Big\|^2 \leq (K-1)\sum_{l \neq k}\epsilon_{kl}^2D_{kl},
    \end{align*} where $D_{kl} = \|\theta^\star_k - \theta^\star_l\|^2$ is the distance of population optimal models between clusters $k$ and $l$. Combine everything, to get
    \begin{align*}
        \mathbb{E}_S\|\widetilde{\theta}_k - \theta^\star_k \|^2 \leq 2\mathbb{E}_S\|A_1\|^2 + 4\mathbb{E}_S\|A_3\|^2 + 4\mathbb{E}_S\|A_4\|^2 = (K-1)\mathcal{O}\left(\frac{1}{n|C_k^\prime|} + \frac{1}{n^2} + \sum_{l \neq k}\epsilon_{kl}^2D_{kl} \right),
    \end{align*} which completes the proof.
\end{proof}

\begin{remark}
    The MSE rate in Lemma~\ref{lm:inexact_clust} can be separated in two parts. The first part, $\mathcal{O}\left(\frac{1}{n|C_k|} + \frac{1}{n^2}\right)$, vanishes asymptotically, while the second part, $\sum_{l \neq k}\epsilon_{kl}^2D_{kl}$, is non-vanishing, depending on the fraction of misclustered points and heterogeneity.  
\end{remark}

Before proving Theorem~\ref{thm:inexact_clust}, we state a result from~\cite{kmeans_awasthi}.

\begin{lemma}[Theorem 3.1 in~\cite{kmeans_awasthi}]\label{lm:awasthi}
Let the center separation condition from Definition~\ref{def:cent_sep} hold and let $\{C_k^\prime \}_{k \in [K]}$ be the clustering produced by running Part I of the spectral $K$-means algorithm, with $\{C_k \}_{k \in [K]}$ being the true clustering. Then, $\{C_k^\prime\}_{k \in [K]}$ satisfies the following properties, for any $k \in [K]$
\begin{enumerate}
    \item $|C_k^\prime \cap C_k| = \left(1 - \frac{128}{c^2}\right)|C_k|$,
    \item $\sum_{l \neq k}|C_k^\prime \cap C_l| = \frac{128}{c^2}|C_k|$
\end{enumerate}
\end{lemma}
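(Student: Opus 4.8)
The statement has three parts, and the plan is to treat parts 1) and 2) together by certifying that the center separation condition of Definition~\ref{def:cent_sep} holds for the local models $\{\widehat{\theta}_i\}_{i\in[m]}$ with high probability, after which Lemma~\ref{lm:awasthi} delivers both combinatorial guarantees; part 3) then follows by combining the conditional bound of Lemma~\ref{lm:inexact_clust} with an elementary estimate of the total misclustering fraction $\epsilon_k$ obtained from 1) and 2). Throughout I let $A\in\mathbb{R}^{m\times d}$ be the matrix whose rows are the $\widehat{\theta}_i$ and $C$ the matrix whose $i$-th row is the true-cluster mean $\overline{\theta}_k = \frac{1}{|C_k|}\sum_{j\in C_k}\widehat{\theta}_j$ for $i\in C_k$, so that $\|A-C\|_F^2 = \sum_{k\in[K]}\sum_{i\in C_k}\|\widehat{\theta}_i-\overline{\theta}_k\|^2$.

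\textbf{Certifying center separation (the main step).} For each user I would first invoke the high-probability excess-risk bound of Lemma~\ref{lm:erm} together with strong convexity of $F_k$, exactly as in~\eqref{eq:prob_bdd2}, to obtain events $\Upsilon_i = \{\|\widehat{\theta}_i - \theta^\star_k\|^2 \leq M_{ik}\log n / n\}$ with $\mathbb{P}(\Upsilon_i)\geq 1 - 2/n^3$, and set $\Upsilon = \cap_{i\in[m]}\Upsilon_i$, so that $\mathbb{P}(\Upsilon)\geq 1 - 2m/n^3$ by a union bound. The crucial point is that, unlike the max-based separability condition~\eqref{eq:recovery_condition} used in Theorem~\ref{thm:main}, the quantity $\Delta_k$ in Definition~\ref{def:cent_sep} is governed by $\|A-C\|_F$, which aggregates the per-user deviations \emph{additively}. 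Concretely, Young's inequality $\|\widehat{\theta}_i-\overline{\theta}_k\|^2\leq 2\|\widehat{\theta}_i-\theta^\star_k\|^2 + 2\|\overline{\theta}_k-\theta^\star_k\|^2$ together with Jensen's inequality $\|\overline{\theta}_k-\theta^\star_k\|^2 \leq |C_k|^{-1}\sum_{i\in C_k}\|\widehat{\theta}_i-\theta^\star_k\|^2$ gives $\sum_{i\in C_k}\|\widehat{\theta}_i-\overline{\theta}_k\|^2 \leq 4\sum_{i\in C_k}\|\widehat{\theta}_i-\theta^\star_k\|^2$, so that on $\Upsilon$
\begin{equation*}
    \|A-C\|_F^2 \leq 4\sum_{i\in C_k,\,k\in[K]}\|\widehat{\theta}_i - \theta^\star_k\|^2 \leq \frac{4\log n}{n}\sum_{i\in C_k,\,k\in[K]}M_{ik} = \frac{4m\overline{M}\log n}{n},
\end{equation*}
where the sum collapses to $m\overline{M}$ by definition of $\overline{M}$; this is precisely where the worst-case constant $M$ is replaced by its average $\overline{M}$. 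Since $\Delta_k \leq |C_k|^{-1/2}\|A-C\|_F \leq |C_{(K)}|^{-1/2}\|A-C\|_F$ and likewise $\|\overline{\theta}_k - \theta^\star_k\|^2 \leq m\overline{M}\log n/(n|C_{(K)}|)$ by Jensen, writing $Q = (m\overline{M}\log n/(n|C_{(K)}|))^{1/2}$ yields $c(\Delta_k+\Delta_l) \leq 4cQ$ and $\|\overline{\theta}_k - \overline{\theta}_l\| \geq D - 2Q$. The separation inequality $\|\overline{\theta}_k-\overline{\theta}_l\|\geq c(\Delta_k+\Delta_l)$ then holds once $D-2Q \geq 4cQ$, i.e. $Q \leq D/(4c+2)$, which the hypothesis $n/\log n > 64c^2 m\overline{M}/(D^2|C_{(K)}|)$ guarantees since $(4c+2)^2 \leq 64c^2$ for $c\geq 1/2$. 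Thus on $\Upsilon$ the hypotheses of Lemma~\ref{lm:awasthi} are met, which yields parts 1) and 2) (reading its two identities as the asserted one-sided bounds).

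\textbf{The misclustering fraction and the MSE.} From $\epsilon_{kl} = |C_k^\prime\cap C_l|/|C_k^\prime|$ I have $\epsilon_k = \sum_{l\neq k}\epsilon_{kl} = (|C_k^\prime| - |C_k^\prime\cap C_k|)/|C_k^\prime|$; bounding the denominator below by the correctly-clustered part, $|C_k^\prime| \geq |C_k^\prime\cap C_k| \geq (1-128/c^2)|C_k|$, and the numerator above by $\sum_{l\neq k}|C_k^\prime\cap C_l| \leq (128/c^2)|C_k|$ from parts 1) and 2), gives $\epsilon_k \leq (128/c^2)/(1-128/c^2) = 128/(c^2-128)$, as claimed. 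For the MSE I would apply Lemma~\ref{lm:inexact_clust} to the realized clustering $\{C_k^\prime\}$, whose conditional bound is exactly $(K-1)\mathcal{O}(1/(n|C_k^\prime|) + 1/n^2 + \sum_{l\neq k}\epsilon_{kl}^2 D_{kl})$; absorbing the factor $K-1$ into $\mathcal{O}(\cdot)$ and replacing $|C_k^\prime|$ by $|C_{(K)}^\prime|$ recovers the stated expression. Finally, the contribution of the failure event $\Upsilon^{\mathsf c}$ to the unconditional MSE is at most $4R^2\,\mathbb{P}(\Upsilon^{\mathsf c}) \leq 8mR^2/n^3$ by compactness of $\Theta$, which is dominated by the $\mathcal{O}(1/n^2)$ term already present.

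\textbf{Main obstacle.} The genuinely delicate point is the replacement of $M$ by $\overline{M}$ in the sample requirement. Routing through the deterministic reduction ``separability $\Rightarrow$ center separation'' (with $\alpha = 2c\sqrt{m}/\sqrt{|C_{(K)}|}$) and then through the Theorem~\ref{thm:main} machinery would only certify recovery under the worst-case $M = \max_{i,k}M_{ik}$; obtaining the average constant forces one to work with $\|A-C\|_F$ directly and exploit that it \emph{sums}, rather than maximizes, the individual generalization errors. A secondary subtlety, which I would handle by invoking Lemma~\ref{lm:inexact_clust} verbatim, is that conditioning on the realized clustering $\{C_k^\prime\}$ correlates the surviving models with the event $S$; the analysis there treats the Zhang--Duchi bound as insensitive to this conditioning, and I would adopt the same convention.
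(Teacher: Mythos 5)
There is a fundamental mismatch between what you set out to prove and the statement in question. The statement is Lemma~\ref{lm:awasthi} itself: a \emph{deterministic} guarantee about Part~I of the spectral $K$-means algorithm (SVD projection followed by a $10$-approximation for the $K$-means objective), asserting that under the center separation condition of Definition~\ref{def:cent_sep} the output clustering misplaces at most a $\frac{128}{c^2}$ fraction of each true cluster. It has two parts, not three, it involves no randomness, no local models $\widehat{\theta}_i$, no sample size $n$, and no MSE. The paper does not prove it at all --- it is imported verbatim as Theorem~3.1 of~\cite{kmeans_awasthi}. Your proposal instead reconstructs the proof of Theorem~\ref{thm:inexact_clust} (the three-part, probabilistic statement downstream of this lemma): certifying Definition~\ref{def:cent_sep} with high probability via the Frobenius-norm aggregation that replaces $M$ by $\overline{M}$, then invoking Lemma~\ref{lm:awasthi} for the combinatorial claims, then Lemma~\ref{lm:inexact_clust} for the MSE. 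As a proof of Lemma~\ref{lm:awasthi} this is circular: you use the very lemma you were asked to prove as a black box (``after which Lemma~\ref{lm:awasthi} delivers both combinatorial guarantees'').

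A genuine proof of Lemma~\ref{lm:awasthi} would require the machinery of~\cite{kmeans_awasthi}, none of which appears in your proposal: a spectral perturbation argument showing that projecting $A$ onto its top-$K$ singular subspace moves each point by an amount controlled by $\|A-C\|$; a guarantee that the constant-factor approximation algorithm on the projected data places its centers within $O(\Delta_k)$ of the true centers $\mu_k$; and a counting (Markov-type) argument bounding the number of points whose projected positions are closer to a wrong center, which is where the explicit constant $128/c^2$ arises. To your credit, the part of your write-up that \emph{is} on-target for Theorem~\ref{thm:inexact_clust} --- the Young/Jensen step $\sum_{i \in C_k}\|\widehat{\theta}_i - \overline{\theta}_k\|^2 \leq 4\sum_{i \in C_k}\|\widehat{\theta}_i - \theta^\star_k\|^2$, the identification of $\|A-C\|_F^2 \leq 4m\overline{M}\log n / n$ as the source of the average constant $\overline{M}$, and the bound $\epsilon_k \leq 128/(c^2 - 128)$ --- matches the paper's argument in Appendix~\ref{app:inexact_clust} closely, and your observation that the equalities in the lemma's statement should be read as one-sided bounds is sensible. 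But none of that discharges the statement actually at issue.
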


\begin{proof}[Proof of Theorem~\ref{thm:inexact_clust}]
    Note that it suffices to show that Definition~\ref{def:cent_sep} holds with sufficiently high probability. Indeed, if Definition~\ref{def:cent_sep} holds, properties $1)$ and $2)$ readily follow from Lemma~\ref{lm:awasthi}, while part $3)$ follows by applying Lemma~\ref{lm:inexact_clust}, conditioning on the event that a clustering of sufficient quality is produced (guaranteed by $1)$ and $2)$). The idea is similar to the one in proof of Theorem~\ref{thm:main} and instead of going through the whole proof, we only highlight the key differences.  

    First, note that the condition for cluster separation used in Theorem~\ref{thm:main} was $\alpha \max_{\substack{i \in C_k, \\ k \in [K]}}\| \widehat{\theta}_i - \overline{\theta}_k\| \leq \min_{\substack{k \neq l, \\ k,l \in [K]}}\|\overline{\theta}_k - \overline{\theta}_l \|$. However, via Definition~\ref{def:cent_sep}, we instead use the following condition 
    \begin{equation}\label{eq:max-avg}
        c^2\left(\frac{1}{\sqrt{|C_k|}} + \frac{1}{\sqrt{|C_l|}} \right)^2\sum_{i \in [m]}\|\widehat{\theta}_i - \overline{\theta}_{(i)} \|^2 \leq \min_{k \neq l}\|\overline{\theta}_k - \overline{\theta}_l\|^2,
    \end{equation} where $\overline{\theta}_{(i)} = \overline{\theta}_k$, if $i \in C_k$. Therefore, we can rewrite
    \begin{equation}\label{eq:max-avg1}
        \sum_{i \in [m]}\|\widehat{\theta}_i - \overline{\theta}_{(i)} \|^2 = \sum_{k \in [K]}\sum_{i \in C_k}\|\widehat{\theta}_i - \overline{\theta}_k \|^2
    \end{equation} Next, note that, for any $i \in C_k$, $k \in [K]$, we have
\begin{equation*}
\begin{aligned}
    \|\widehat{\theta}_i - \overline{\theta}_k \|^2 \leq 2\|\widehat{\theta}_i - \theta^\star_k \|^2 + \frac{2}{|C_k|}\sum_{j \in C_k}\|\widehat{\theta}_j - \theta^\star_k \|^2,
\end{aligned}
\end{equation*} which implies
\begin{equation}\label{eq:max-avg2}
    \sum_{i \in C_k}\|\widehat{\theta}_i - \overline{\theta}_k \|^2 \leq 4\sum_{i \in C_k}\|\widehat{\theta}_i - \theta^\star_k \|^2
\end{equation} Now, similar to the proof of Theorem~\ref{thm:main}, we can use the results on the statistical convergence of empirical risk minimizers, to conclude that, with probability at least $1 - \frac{2}{n^3}$, $\|\widehat{\theta}_i - \theta^\star_k \|^2 \leq \frac{M_{ik}\log n}{n}$, where $M_{ik}$ is defined in the proof of Theorem~\ref{thm:main}. Combining~\eqref{eq:max-avg1} and~\eqref{eq:max-avg2}, plugging into~\eqref{eq:max-avg} and using analogous arguments to the ones in proof of Theorem~\ref{thm:main}, one can show that in order to guarantee that the center separation condition (Definition~\ref{def:cent_sep}) holds, one has the following bound $\frac{2c\sqrt{m}}{\sqrt{|C_{(K)}|}}\sqrt{\frac{4\overline{M}\log n}{n}} < \frac{D}{2}$, where $\overline{M} = \frac{1}{m}\sum_{\substack{i \in C_k, \\ k \in [K]}}M_{ik}$, which leads to the sample complexity $\frac{n}{\log n} > \frac{64c^2m\overline{M}}{D^2|C_{(K)}|}.$ Since we consider the same events as in Theorem~\ref{thm:main}, the probability of the complement is the same as that in Theorem~\ref{thm:main}, hence, combining with Lemma~\ref{lm:inexact_clust}, the result follows. Next, using parts $1)$ and $2)$, we know that $\left(1 - \frac{128}{c^2}\right)|C_k| \leq |C_k^\prime| \leq \left(1 + \frac{128}{c^2} \right)|C_k|$. Combining this inequality and part $1)$, it can be shown that $\epsilon \leq \frac{256}{c^2 + 128}$. Similarly, combining this inequality and part $2)$, it can be shown that $\epsilon \leq \frac{128}{c^2 - 128}$. Since $\frac{128}{c^2 - 128} \leq \frac{256}{c^2 + 128}$, whenever $c \geq 8\sqrt{6}$ and the constant $c$ is assume to be $c \geq 100$, the final claim follows.
\end{proof}

\section{Practical considerations}\label{app:practical}

In this section we provide some practical considerations when implementing algorithms from the ODCL-$\mathcal{C}$ family. Specifically, we discuss how to choose hyperparamters for both ODCL-CC and ODCL-KM, and we also describe a simplified version of ODCL-KM, dubbed ODCL-KM++.

\textbf{ODCL-CC:} Notice that the upper bound for $\lambda$ in~\eqref{eq:cond_exact_rec} requires the knowledge of the sizes of the two smallest clusters. Moreover, the lower and upper bounds in the recovery condition~\eqref{eq:cond_exact_rec} both depend on the recovered clustering, which in turns depends on the value of $\lambda$, via~\eqref{eq:cvx_clust}. This shows that~\eqref{eq:cond_exact_rec} can only be verified in ``a posteriori" manner, after~\eqref{eq:cvx_clust} is solved. Therefore, choosing an appropriate value of $\lambda$ can be difficult in practice. In practice, an appropriate value of the hyperparameter $\lambda$ can be chosen as follows:
\begin{enumerate}
    \item The server receives the local models $\{\widehat{\theta}_i \}_{i=1}^m$ and chooses a range of strictly increasing values of $\lambda$, $\left\{\lambda_1,\lambda_2,\ldots,\lambda_N\right\}$, such that solving the convex clustering problem~\eqref{eq:cvx_clust} results in the number of clusters $K_{\lambda_i}$ satisfying $K_{\lambda_1} = m$ and $K_{\lambda_N} = 1$.\footnote{From the formulation of convex clustering~\eqref{eq:cvx_clust}, it is obvious that, for $\lambda$ sufficiently small, the optimal solution is going to be $u_i^\star = a_i$, $i \in [m]$, i.e., $K_\lambda = m$. On the other hand, the authors in~\cite{bianchi_robust_consensus} show that, for $\lambda$ sufficiently large, we have $K_\lambda = 1$. Hence, the choices of $\lambda$ guaranteeing $K_{\lambda} = m$ and $K_{\lambda} = 1$ always exist.}

    \item The server runs convex clustering for each value of $\lambda_i$ and verifies the following:
    \begin{enumerate}
        \item if the condition~\eqref{eq:cond_exact_rec} is verified for some values of $\lambda_i$, the server takes a value of $\lambda_i$ (and the associated clustering) such that the same number of clusters $K_{\lambda_i}$ is recovered for the largest number of $\lambda_i$'s, among all of $\lambda_i$'s, $i = 1,\ldots,N$;
        \item if the condition~\eqref{eq:cond_exact_rec} is not verified for any value of $\lambda_i$, the server takes a value $\lambda_i$ (and the associated clustering) such that the same number of clusters $K_{\lambda_i}$ is recovered for the largest number of $\lambda_i$'s, among all of $\lambda_i$'s, $i = 1,\ldots,N$.
    \end{enumerate}
\end{enumerate} 

The procedure is designed in the spirit of ``clusterpath", e.g.,~\cite{hocking2011clusterpath}. The intuition behind the procedure is to choose the clustering that is the likeliest to be true. If~\eqref{eq:cond_exact_rec} is verified for some $\lambda$'s, the likelihood of the recovered clustering matching the true one is increasing. Note that in general, the recovery guarantees of convex clustering hold only when $\lambda$ satisfies~\eqref{eq:cond_exact_rec}. However, in practice, convex clustering is known to perform well even when the condition~\eqref{eq:cond_exact_rec} is not met, e.g.,~\cite{cvx_clust} show that exact clustering can be recovered even for values of $\lambda$ not in~\eqref{eq:cond_exact_rec}, with, e.g.,~\cite{hocking2011clusterpath,splitting_cvx_clust}, validating the performance on real data, without the knowledge of~\eqref{eq:cond_exact_rec}. We provide numerical experiments that verify these observations in Appendix~\ref{subsec:clusterpath}.    

\textbf{ODCL-KM}: Notice that running ODCL-KM in practice requires the knowledge of the true number of clusters $K$. This information is typically not known beforehand. We can estimate the number of clusters using the following general approach:
\begin{enumerate}
        \item The server receives the local models $\{\widehat{\theta}_i \}_{i \in [m]}$ and chooses an increasing range of values for parameter $K$, $K_i = i$, for $i \in [N]$. 
        
        \item The server runs $K$-means for each value of $K_i$ and evaluates the quality of the produced clusters by using some metric.

        \item Choose the optimal value of $K$ to be the value of $K_i$ that leads to the best quality of produced clusters, based on the metric of choice. 
\end{enumerate} In practice, there are many metrics that can be used to evaluate the quality of the clustering. Here, we list two simple and very popular such metrics:
\begin{itemize}
    \item Decrease in the $K$-means cost obtained by increasing the number of clusters from $K$ to $K + 1$. Using this metric results in the approach known as \emph{elbow method}. For example, one can use this approach and select $K$ to be the first value of $K_i$ for which the cost of adding another cluster improves less than a predefined threshold.

    \item Silhouette score of the clustering. Silhouette score measures the quality of clustering, by estimating how well each object fits its cluster. The reader is referred to~\cite{kaufman2009finding}, for a detailed description of the silhouette score metric. In this case, since a higher silhouette score indicates better quality clusters, we choose $K$ to be the value of $K_i$ which maximizes the silhouette score.
\end{itemize}

\textbf{ODCL-KM++}: Notice that recovery guarantees of spectral $K$-means from the previous section require a computationally expensive
step of projecting the data onto the SVD space and running an approximation algorithm. In order to streamline the process, in place of parts I and II of Algorithm~\ref{alg:alg2}, we use the $K$-means++ scheme~\cite{kmeans++}, leading to algorithm dubbed ODCL-KM++. The $K$-means++ algorithm is summarized in Algorithm~\ref{alg:alg3}. We use ODCL-KM++ in numerical experiments in Section 5, where it is shown to perform optimally, recovering the true clusters even in the low sample regimes.

\begin{algorithm}[!t]
   \caption{$K$-means++}
   \label{alg:alg3}
\begin{algorithmic}
   \STATE {\bfseries Step I:} Center initialization:
   \begin{itemize}
       \item Choose the initial center $\nu_1 = a_i$, by drawing $i \in [n]$, uniformly at random.
       \item For $k = 2,\ldots,K$:
       \begin{enumerate}
           \item Compute $D_{i,k} = \min_{\{\nu_1,\ldots,\nu_{k-1} \}}\|a_i - \nu_k \|$.
           \item Choose the next center $\nu_k = a_i$, with probability $\frac{D_{k,i}^2}{\sum_{i \in [n]}D_{k,i}^2}$, for all $i \in [n]$. 
       \end{enumerate}
            
   \end{itemize}
   \STATE {\bfseries Step II:} Run Lloyd's algorithm until convergence:
   \begin{itemize}
       \item Update clusters $S_k = \{i \in [m]: \|a_i - \nu_k \| \leq \| a_i - \nu_l\|, \: \forall l \neq k\}$, for all $k \in [K]$.
       \item Update centers $\nu_k = \mu(S_k)$, for all $k \in [K]$.
   \end{itemize}
\end{algorithmic}
\end{algorithm}

\section{Additional Experiments}\label{app:numerical}

In this section we present additional numerical experiments. Subsection~\ref{subsec:logistic} presents results using logistic regression loss. Subsection~\ref{subsec:clusterpath} presents the comparison of our two methods proposed in subsections~3.1 and~3.2 from the main paper. Subsection~\ref{subsec:IFCA} presents comparison of our method with IFCA. The method used in the experiments throughout this section is ODCL-CC.

\subsection{Logistic regression}\label{subsec:logistic}

In this subsection, we consider a logistic regression problem. The logistic regression model assumes that the data is generated as follows: for each pair $(x_{ij},y_{ij})_{j = 1}^n$, $i \in C_k$, the label $y_{ij}$ is generated according to $y_{ij} = 2\text{Bernoulli}(p_{ij}) - 1$, where $\text{Bernoulli}(p_{ij})$ is a sample from the Bernoulli distribution with parameter $p_{ij}$. This leads to $y_{ij} \in \{-1,+1\}$, while we compute the probabilities $p_{ij}$ by
\begin{equation*}
    p_{ij} = \frac{1}{1 + \exp{\left(-\left(\langle x_{ij},\theta^\star_k \rangle + b^\star_k\right) \right)}}.
\end{equation*} The number of clusters is set to $K = 4$. The vectors $\theta_k^\star$ are $d$-dimensional, with $d = 2$ (corresponding to the weight parameters) with $b^\star_k$ a scalar (corresponding to the intercept). Specifically, we chose $\theta_k^\star$'s as: $\theta_{1}^\star = \begin{bmatrix} 1 & -1 \end{bmatrix}^T$, $\theta_{2}^\star =\begin{bmatrix} 1 & 0 \end{bmatrix}^T$, $\theta_{3}^\star =\begin{bmatrix} -1 & 1 \end{bmatrix}^T$ and $\theta_{4}^\star =\begin{bmatrix} 0 & -1 \end{bmatrix}^T$, with $b^\star_k = 0$, for all $k \in [K]$. Such a choice of $\theta^\star_k$'s ensures that $D > 0$. Each cluster is assigned a total of $N_k = 100000$ points, with the datapoints $x$ all centered at $\mu = \begin{bmatrix} 0 & 0\end{bmatrix}^T$, with covariance matrices $\Sigma_1 = \begin{bmatrix} 1 & 0 \\ 0 & 1 \end{bmatrix}$, $\Sigma_2 = \begin{bmatrix} 2 & 1 \\ 1 & 2 \end{bmatrix}$, $\Sigma_3 = \begin{bmatrix} 1 & 2 \\ 2 & 1 \end{bmatrix}$ and $\Sigma_4 = \begin{bmatrix} 2 & 0 \\ 0 & 2 \end{bmatrix}$, corresponding to clusters $C_1$ through to $C_4$,
respectively.

To measure the error, we use the $\ell_2$ regularized logistic loss, i.e.,
\begin{equation*}
    \ell\left((x,y);\theta,b\right) = \log\left(1 + \exp\left(-y\left(\langle x,\theta \rangle + b\right) \right)\right) + \frac{C\|\theta\|^2}{2},
\end{equation*} where $C > 0$ is a regularization parameter. In our experiments, we set $C = 10^{-5}$. Under the proposed loss, we have that $\theta^\star_k$'s are the population optimal models, i.e., $\theta^\star_k = \argmin_{u}F_k(u)$, $k \in [K]$.

We consider a FL system with $m = 100$ users and a balanced clustering, i.e., $|C_k| = \frac{m}{K} = 25$, for all $k \in [K]$. Each user $i \in C_k$ is assigned $n$ points uniformly at random, from the corresponding sample $N_k$, such that no data point is assigned to two different users, again simulating an IID distribution of data within clusters. We use the same benchmarks, error metric and selection procedure for $\lambda$, as described in Section~\ref{sec:numerical} of the main paper. When implementing ODCL-CC, to select the parameter $\lambda$, we first compute the lower and upper bounds in~\eqref{eq:cond_exact_rec}. If the lower bound is strictly smaller than the upper bound, we choose $\lambda$ uniformly at random from the interval in~\eqref{eq:cond_exact_rec}. Otherwise, for simplicity, we take $\lambda$ to be equal to the computed upper bound. All results were averaged across 20 runs.

\begin{figure}[ht]
\centering
\begin{tabular}{ll}
\includegraphics[width=0.49\columnwidth]{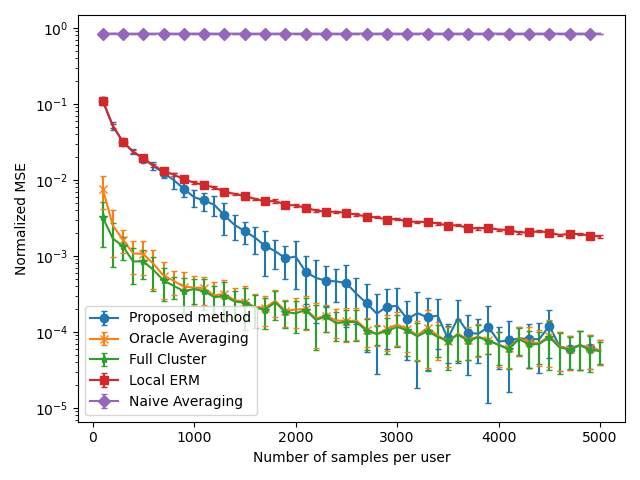}
&
\includegraphics[width=0.49\columnwidth]{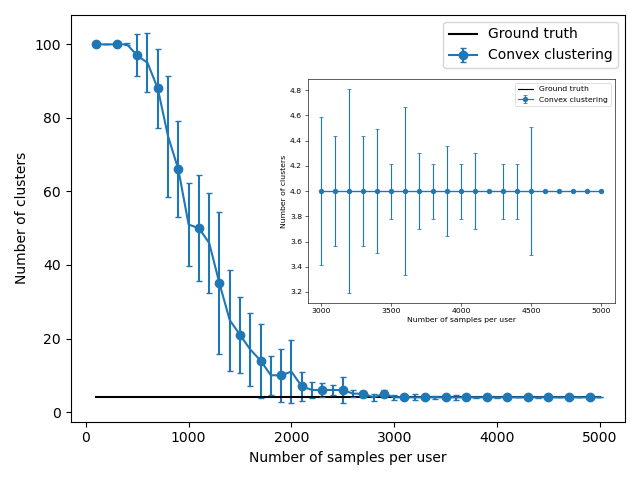}
\end{tabular}
\caption{\emph{Left}: Performance of different methods for logistic regression, versus the number of samples available per user. We can see that the proposed method matches the order-optimal MSE rates for a sufficiently large sample size. \emph{Right}: Number of clusters produced by convex clustering for logistic regression, versus the number of samples available per user. We can see that convex clustering is able to recover the exact clustering for a sufficiently large sample size available to each user.}
\label{fig:MSE_logistic}
\end{figure}

Figure~\ref{fig:MSE_logistic} presents the performance using logistic regression models. The left figure presents the MSE of different methods, while the right figure presents the number of clusters produced by convex clustering. On the $y$-axis in the left figure we plot the averaged normalized MSE, while on the $x$-axis we present the number of samples $n$ available to each user. We can see that, for a small number of samples (less than $500$), ODCL-CC clusters each user to an individual cluster, effectively performing like the local ERMs. As $n$ grows, we see that the quality of our estimator increases, eventually matching the performance of the order-optimal oracle methods (for $n \geq 4600$), as predicted by the theory. The difference in the number of samples required for reaching order-optimal rates of our proposed method and the Oracle Averaging is much larger compared to the linear regression problem. This can be explained by the fact that the sample requirements of Oracle Averaging depend only on the size of the largest cluster, while the sample requirements of the proposed method depend on the problem parameters via the value $M$ (recall Theorem~\ref{thm:main}), as well as the size of the largest cluster. As logistic regression is a more complex and more ``nonlinear" model compared to linear regression, the sample requirements of ODCL-CC increase accordingly. On $y$-axis in the right figure we plot the number of clusters produced by the convex clustering algorithm. On $x$-axis we again plot the number of samples $n$. The right figure is consistent with the results from the left left figure, as it shows that, for small $n$ (less than $500$), convex clustering clusters each user separately, which, due to the low sample regime and our sub-optimal choice of $\lambda$, is to be expected. As the number of samples grows, convex clustering produces lower number of clusters. Zooming in on the interval $3000 \leq n \leq 5000$ (subplot in right figure), we can see that after reaching $3000$ samples, convex clustering will on average produce $K^\prime = 4$ models (consistent with the ground truth), with some variations. This is again consistent with the figure on the left of Figure~\ref{fig:MSE_logistic}, as we see that for larger $n$ our method is getting closer to the performance of the two oracle methods, exactly matching them for $n \geq 4600$, when, as we can see from from the figure on the right, convex clustering is consistently producing $K^\prime = 4$ clusters. Again, we can see that the clustering produced by the convex clustering method is correct, as our method closes in on the performance of both oracle methods that know the true clustering, eventually matching them for $n$ large enough.

\subsection{Clusterpath}\label{subsec:clusterpath}

In this subsection, we evaluate the performance of a version of ODCL-CC that uses clusterpath, described in~\ref{app:practical}, and compare it to the performance of ODCL-CC that was used in Section~\ref{sec:numerical} of the main body. We consider a linear regression problem, with $K = 4$. The experiment design is identical to the one in Section~\ref{sec:numerical} of the main body, with the optimal models chosen as $u^\star_{1i}  \sim \text{U}([0,1])$, $u^\star_{2i} \sim \text{U}([1,2])$, $u^\star_{3i} \sim \text{U}([-1,0])$ and $u^\star_{4i} \sim \text{U}([-2,-1])$, $i = 1,\ldots,d$, with $d = 20$.

We again consider a FL system with $m = 100$ users and a balanced clustering, i.e., $|C_k| = \frac{m}{K} = 10$, for all $k \in [K]$. Each user $i \in C_k$ is assigned $n$ points uniformly at random, from the corresponding sample $N_k$, such that no data point is assigned to two different users. We compare the following two methods:
\begin{itemize}
    \item \emph{Exact convex clustering} - ODCL-CC;
    \item \emph{Clusterpath} - ODCL-CC using clusterpath, described in~\ref{app:practical}.
\end{itemize} 

To find $\lambda_1$ such that $K_{\lambda_1} = 1$ and $\lambda_N$ such that $K_{\lambda_N} = m$, we initialize the values $\lambda_1$, $\lambda_N$ randomly, then run the convex clustering algorithm and check the resulting number of clusters. If $K_{\lambda_1} > 1$, we set $\lambda_1 = \alpha \lambda_1$ and similarly, if $K_{\lambda_N} < m$, we set $\lambda_N = \nicefrac{\lambda_N}{\alpha}$. Once $\lambda_1$ and $\lambda_N$ such that $K_{\lambda_1} = 1$ and $K_{\lambda_N} = m$ are found, we choose $N$ equidistant points from the interval $[\lambda_N,\lambda_1]$. In our experiments, we used the initialization $\lambda_1 = \lambda_N = 0.1$, with $\alpha = 1.25$ and $N = 10$. The error metric is the same as in the preceding subsections. All results were averaged across 10 runs.

\begin{figure}[ht]
\centering
\begin{tabular}{ll}
\includegraphics[width=0.49\columnwidth]{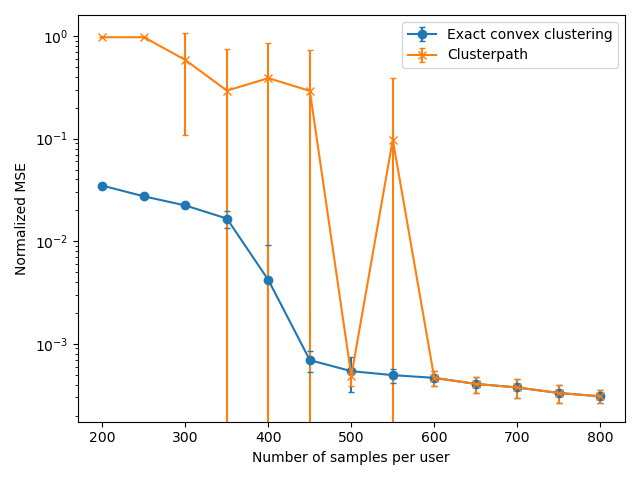}
&
\includegraphics[width=0.49\columnwidth]{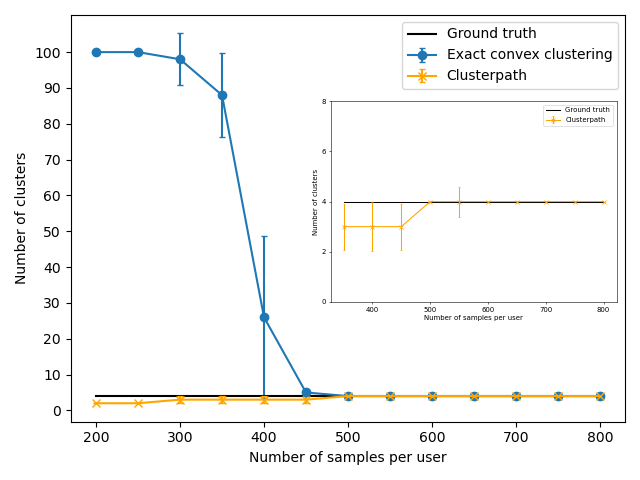}
\end{tabular}
\caption{\emph{Left}: Performance of the proposed methods for linear regression, versus the number of samples available per user. We can see that the method that uses clusterpath matches the performance of the exact convex clustering method, for sufficiently large sample size. \emph{Right}: Number of clusters produced by the two approaches to convex clustering for linear regression, versus the number of samples available per user. We can see that clusterpath matches the performance of exact convex clustering, for sufficiently large sample size.}
\label{fig:MSE_clusterpath}
\end{figure}

Figure~\ref{fig:MSE_clusterpath} presents the performance of the two variants of the proposed method, using linear regression models. The left figure presents the MSE versus the number of samples, while the right figure presents the number of clusters produced by convex clustering versus the number of samples. On $y$-axis in the left figure we plot the averaged normalized MSE, while on the $x$-axis we present the number of samples $n$ available to each user. We can again see that, for a small number of samples (less than $600$), clusterpath performs worse than the exact convex clustering method, that chooses $\lambda$ based on~\eqref{eq:rec_cond}. For larger values of $n$ ($n \geq 600$) we see that clusterpath consistently matches the performance of the exact convex clustering method, showing it achieves the order-optimal MSE rate. On $y$-axis in the right figure we plot the number of clusters produced by the convex clustering algorithm, while on $x$-axis we again plot the number of samples $n$. For smaller sample sizes, clusterpath produces less than the true $K = 4$ clusters, while the exact method produces roughly $m$ clusters, i.e., assigns each user to an individual cluster. These results can be explained by the fact that for small $n$, the interval~\eqref{eq:rec_cond} is likely to be empty, i.e., the upper bound that we take for our value of $\lambda$ smaller than the lower bound, thus creating many clusters. On the other hand, clusterpath always checks the recovery conditions based on the produced clustering, in step 2.(a). Convex clustering is known to have coarsening guarantees, i.e., merging multiple true clusters into one larger, e.g.,~\cite{pmlr-v70-panahi17a,cvx_clust}, in the form of intervals similar to~\eqref{eq:rec_cond}. The coarsening intervals are larger than the corresponding intervals corresponding to the ground truth. Therefore, for smaller values of $n$, clusterpath is likely to produce a coarsening. As $n$ increases, we see that clusterpath is producing a larger number of clusters, eventually stabilizing on $K = 4$ clusters for $n \geq 600$.

\subsection{Comparison with IFCA}\label{subsec:IFCA}

In this subsection we compare the performance of IFCA and ODCL-CC on a synthetic dataset. We consider a linear regression problem, using the same setup as in Subsection~\ref{subsec:clusterpath}, with $K = 4$. 

Since we know the population optima, we can initialize the models of IFCA sufficiently close to the true population optima for each cluster. We initialize each model to be a random vector that is at least $\frac{1}{3}D$ close to the true population optima and at least $\frac{1}{5}D$ away from the true population optima, i.e., for $\theta_k^0 \in \mathbb{R}^d$ the initialization of IFCA, we have $\frac{D}{5} \leq \|\theta^0_k - \theta^\star_k\| \leq \frac{D}{3}$, for all $k \in [K]$. In our experiments, performing a different (random) initialization, we noted that the IFCA performance of IFCA significantly deteriorates if the initialization is not carefully tuned. In contrast, our method requires no specific initialization, hence offering another practical advantage over IFCA. Since the initialization is sufficiently close to the true population optima, we performed IFCA with model averaging (option 2 in their paper), where each user updates the received model for $\tau > 1$ local steps and the server averages the received models, corresponding to the same clusters. We chose $3$ different step-sizes for IFCA, $\alpha = 0.1$, $\alpha = 0.01$ and $\alpha = 0.05$. For our method, we chose the value of $\lambda$ as in Subsection~\ref{subsec:logistic}. The results are averaged across 10 runs.

\begin{figure}[ht]
\centering
\begin{tabular}{ll}
\includegraphics[width=0.49\columnwidth]{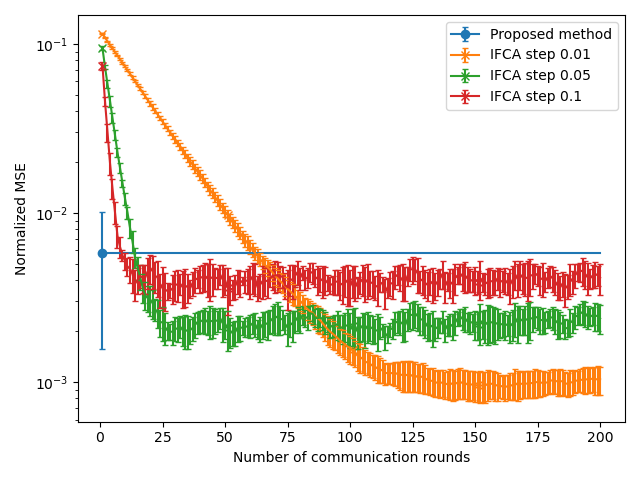}
&
\includegraphics[width=0.49\columnwidth]{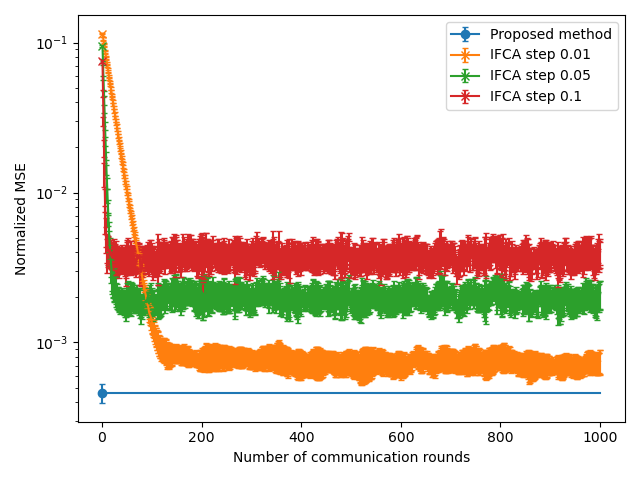}
\end{tabular}
\caption{\emph{Left}: Performance of the proposed method and IFCA for $n = 400$ samples available to each user. We can see that IFCA matches the MSE of our method in a few rounds of communication, in the regime where our method is not order-optimal. \emph{Right}: Performance of the proposed method and IFCA for $n = 600$ samples available to each user. We can see that, while IFCA converges relatively fast (about 100 communication rounds for highest precision), it is unable to achieve the same order-optimal performance in the regime where our method is order-optimal. On the other hand, our method achieves the order optimal-performance in a single communication round.}
\label{fig:IFCA_linreg}
\end{figure}

Figure~\ref{fig:IFCA_linreg} presents the performance of our method and IFCA, using linear regression models. The left figure presents the performance when $n = 400$ samples is available to each user, while the right figure presents the performance when $n = 600$ samples is available to each user. On $y$-axis in both figures we present the normalized MSE, while on $x$-axis we present the number of communication rounds executed along the algorithm iterations. Here, by one communication round we mean the initial server-user broadcast and the returning user-server communication performed during one iteration of the algorithm (note that the order of communications is not important, e.g., our algorithm first has a user-server communication and then a sever-user one, while each communication round of IFCA starts with a server-user communication and ends with a user-server communication). For our method, the MSE is achieved in a single communication round and remains constant (clearly, our algorithm stops after one communication round, we just plot the horizontal line to facilitate the comparison). We can see on the left figure that, for $n = 400$, IFCA requires around $20$ communication rounds to match the performance of our method, for step-sizes $0.1$ and $0.05$, with the performance not improving significantly with each additional communication round. For the step-size $0.01$, IFCA requires around $70$ communication rounds to match the performance of our method, while improving the overall performance by an order of magnitude after approximately $120$ communication rounds. We note that $n = 400$ represents the phase-transitional regime, i.e., a regime in which our method has moved away from clustering each user to a single cluster, but is not yet order-optimal. On the other hand, from the right figure we can see that, for $n = 600$, even after $1000$ communication rounds, IFCA is unable to match the performance of our method, achieved in a single communication round. We note that $n = 600$ is the order-optimal regime for our method, i.e., it is above the threshold shown in Theorem~\ref{thm:main}, therefore representing the regime in which all the communication beyond the first round is redundant, as verified by the right figure.  

\section{Proof of Lemma~\ref{lm:duchi}}\label{app:lemma}

In this section we prove Lemma~\ref{lm:duchi}. The proof follows the idea in~\cite{zhang-duchi}, with some important distinctions stemming from the different assumptions used in this paper. For the sake of completeness, we provide the full proof. For ease of exposition, we assume a single cluster scenario and drop the subscripts $k \in [K]$.  

\begin{proof}[Proof of Lemma~\ref{lm:duchi}]
We begin with the following decomposition
\begin{equation}\label{eq:init_decomposition}
\begin{aligned}
    \mathbb{E}\|\overline{\theta} - \theta^\star\|^2 &= \frac{1}{m^2}\sum_{i = 1}^m\mathbb{E}\|\theta_i - \theta^\star\|^2 + \frac{1}{m^2}\sum_{i \neq j}\mathbb{E}\langle \theta_i - \theta^\star , \theta_j - \theta^\star \rangle \\ &= \frac{1}{m}\mathbb{E}\|\theta_1 - \theta^\star\|^2 + \frac{m - 1}{m}\|\mathbb{E}\theta_1 - \theta^\star\|^2 \leq \frac{1}{m}\mathbb{E}\|\theta_1 - \theta^\star\|^2 + \|\mathbb{E}\theta_1 - \theta^\star\|^2, 
\end{aligned}
\end{equation} where we used the fact that the samples assigned to each user are IID, hence the models $\theta_i$, $i \in [m]$, are statistically independent and identically distributed. Therefore, we want to show that the quantities $\mathbb{E}\|\theta_1 - \theta^\star\|^2$ and $\|\mathbb{E}\theta_1 - \theta^\star\|^2$ decay sufficiently fast in terms of the number of samples $n$. To that end, define $\delta_{\rho} = \min\{\rho, \frac{\rho\mu}{2H} \}$. Consider the events
\begin{align*}
    E_1 &= \left\{\omega: \: \|\nabla f(\theta^\star) \| \leq \frac{(1 - \rho)\mu\delta_\rho}{2} \right\}, \\
    E_2 &= \left\{\omega: \: \|\nabla^2 f(\theta^\star) - \nabla^2 F(\theta^\star)\| \leq \frac{\rho\mu}{2} \right\},
\end{align*} and let $E = E_1 \cap E_2$. We then have the following result.

\begin{lemma}
    On the event $E$ and set $\{\theta: \: \|\theta - \theta^\star\| \leq \delta_\rho \}$, the following holds
    \begin{equation*}
        \|\theta_1 - \theta^\star \| \leq \frac{2\|\nabla f(\theta^\star)\|}{(1 - \rho)\mu} \text{ and } \nabla^2 f(\theta) \geq (1 - \rho)\mu I.
    \end{equation*}
\end{lemma}
\begin{proof}
    First, we show the strong convexity of $f$ on $\{\theta: \: \|\theta - \theta^\star\| \leq \delta_\rho \} \subseteq U$. To that end, for any $\gamma$ from the said set, we have
    \begin{align*}
        \|\nabla^2 f(\gamma) - \nabla^2 F(\theta^\star) \| &\leq \|\nabla^2 f(\gamma) - \nabla^2 f(\theta^\star)\| + \|\nabla^2 f(\theta^\star) -\nabla^2 F(\theta^\star) \| \\ &\leq H\|\gamma - \theta^\star\| + \frac{\rho\mu}{2} \leq H\delta_\rho + \frac{\rho\mu}{2} \leq \rho\mu,
    \end{align*} which implies
    \begin{equation*}
        \nabla^2 f(\gamma) \succeq \nabla^2 F(\theta^\star) - \rho\mu I \succeq (1 - \rho)\mu I.
    \end{equation*} Next, let $\theta^\prime \in \Theta$. Clearly, if $\theta^\prime \in \{\theta: \: \|\theta - \theta^\star\| \leq \delta_\rho \}$, using strong convexity of $f$, we have
    \begin{equation*}
        f(\theta^\prime) \geq f(\theta^\star) + \nabla f(\theta^\star)^\top(\theta^\prime - \theta^\star) + \frac{(1 - \rho)\mu}{2}\|\theta^\prime - \theta^\star\|^2.
    \end{equation*} Otherwise, let $\theta^\prime \in \Theta$ such that $\|\theta^\prime - \theta^\star\| > \delta_{\rho}$. Let $\gamma$ be such that $\|\gamma - \theta^\star\| = \delta_\rho$ and is on the line segment between $\theta^\star$ and $\theta^\prime$, i.e., $\gamma = (1 - k)\theta^\star + k\theta^\prime$, for some $k \in (0,1)$. It can be readily verified that $k = \frac{\delta_\rho}{\|\theta^\prime - \theta^\star\|}$. Using strong convexity of $f$, we then have
    \begin{align*}
        (1 - k)f(\theta^\star) + kf(\theta^\prime) \geq f(\gamma) \geq f(\theta^\star)^\top(\gamma - \theta^\star) + \nabla f(\theta^\star) + \frac{(1 - \rho)\mu}{2}\|\theta^\star - \gamma \|^2, 
    \end{align*} which, after rearranging, gives
    \begin{align*}
        f(\theta^\prime) &\geq f(\theta^\star) + \nabla f(\theta^\star)^\top(\theta^\prime - \theta^\star) + \frac{(1 - \rho)\mu k}{2}\|\theta^\star - \theta^\prime \|^2 \\ &> f(\theta^\star) + \nabla f(\theta^\star)^\top(\theta^\prime - \theta^\star) + \frac{(1 - \rho)\mu \delta_\rho^2}{2}, 
    \end{align*} where we used the definition of $k$, as well as $\|\theta^\prime - \theta^\star\| > \delta_\rho$. Combining both inequalities, we then have, for any $\theta^\prime \in \Theta$
    \begin{equation*}
        f(\theta^\prime) \geq f(\theta^\star) + \nabla f(\theta^\star)^\top(\theta^\prime - \theta^\star) + \frac{(1 - \rho)\mu}{2}\min\left\{\|\theta^\star - \theta^\prime \|^2,\delta_\rho^2\right\}.
    \end{equation*} Setting $\theta^\prime = \theta_1$ and rearranging, we then get
    \begin{equation*}
        \|\nabla f(\theta^\star)\|\|\theta_1 - \theta^\star\| \geq \nabla f(\theta^\star)^\top(\theta_1 - \theta^\star) + f(\theta_1) - f(\theta^\star) \geq \frac{(1 - \rho)\mu}{2}\min\left\{\|\theta^\star - \theta_1 \|^2,\delta_\rho^2\right\},
    \end{equation*} which implies
    \begin{equation*}
        \frac{2\|\nabla f(\theta^\star)\|}{(1 - \rho)\mu} \geq \min\left\{\|\theta^\star - \theta_1 \|,\frac{\delta_\rho^2}{\|\theta^\star - \theta_1\|}\right\}.
    \end{equation*} Assume that $\delta_\rho < \|\theta^\star - \theta_1 \|$. In that case we have
    \begin{equation*}
        \frac{2\|\nabla f(\theta^\star)\|}{(1 - \rho)\mu} \geq \frac{\delta_\rho^2}{\|\theta^\star - \theta_1\|} > \delta_\rho.
    \end{equation*} Since event $E$ holds, it then follows that 
    \begin{equation*}
        \delta_\rho \geq \frac{2\|\nabla f(\theta^\star)\|}{(1 - \rho)\mu} \geq \frac{\delta_\rho^2}{\|\theta^\star - \theta_1\|} > \delta_\rho,
    \end{equation*} which is a contradiction, hence we can conclude that it must hold that $\delta_\rho \geq \| \theta_1 - \theta^\star\|$, hence
    \begin{equation*}
        \frac{2\|\nabla f(\theta^\star)\|}{(1 - \rho)\mu} \geq \|\theta^\star - \theta_1 \|,
    \end{equation*} which is what we wanted to prove.    
\end{proof}

Note that on $E$, we have $\|\theta_1 - \theta^\star \| \leq \delta_\rho$, which implies that $\theta_1 = \argmin_{\theta \in \Theta}f(\theta)$ belongs to the region of strong convexity of $f$, hence $\nabla f(\theta_1) = 0$. We can then use the following Taylor's approximation
\begin{align*}
    0 &= \nabla f(\theta_1) = \nabla f(\theta^\star) + \nabla^2 f(\theta^\prime)(\theta_1 - \theta^\star) = \nabla f(\theta^\star) + \nabla^2 f(\theta^\star)(\theta_1 - \theta^\star) + \left(\nabla^2 f(\theta^\prime) - \nabla^2 f(\theta^\star) \right)(\theta_1 - \theta^\star) \\ &= \nabla f(\theta^\star) + \nabla^2 F(\theta^\star)(\theta_1 - \theta^\star) + \left(\nabla^2 f(\theta^\star) - \nabla^2 F(\theta^\star)\right)(\theta_1 - \theta^\star) + \left(\nabla^2 f(\theta^\prime) - \nabla^2 f(\theta^\star) \right)(\theta_1 - \theta^\star).
\end{align*} Defining $\Sigma = \nabla^2 F(\theta^\star)$, $P = \nabla^2 F(\theta^\star) - \nabla^2 f(\theta^\star)$ and $Q = \nabla^2 f(\theta^\star) - \nabla^2 f(\theta^\prime)$, we then get 
\begin{align}\label{eq:decomp}
    \theta_1 - \theta^\star = -\Sigma^{-1}\nabla f(\theta^\star) + \Sigma^{-1}\left(P + Q\right)(\theta_1 - \theta^\star).
\end{align} Next, note that
\begin{align*}
    \mathbb{E}\|\theta_1 - \theta^\star \|^2 &= \mathbb{E}I_{\{E\}}\|\theta_1 - \theta^\star \|^2 + \mathbb{E}I_{\{E^c\}}\|\theta_1 - \theta^\star \|^2 \\ &\leq \mathbb{E}\|-\Sigma^{-1}\nabla f(\theta^\star) + \Sigma^{-1}(P + Q)(\theta_1 - \theta^\star)\|^2 + 4R^2\mathbb{P}(E^c).
\end{align*} We then have
\begin{align*}
    \mathbb{P}(E^c) = \mathbb{P}(E_1^c \cup E_2^c) \leq \mathbb{P}(E_1^c) + \mathbb{P}(E_2^c) = \mathbb{P}\left(\|\nabla f(\theta^\star) \| > \frac{(1 - \rho)\mu\delta_\rho}{2}\right) + \mathbb{P}\left(\|\nabla^2 f(\theta^\star) - \nabla^2 F(\theta^\star) \| > \frac{\rho\mu}{2}\right). 
\end{align*} Since $\|\nabla \ell(\theta^\star;x) \| \leq N$ and $\| \nabla^2\ell(\theta^\star;x)\| \leq L$, according to Assumptions~\ref{asmpt:loss_function} and~\ref{asmpt:unif_bdd_ptws}, it then follows that both $\nabla f(\theta^\star)$ and $\nabla^2 f(\theta^\star) - \nabla^2 F(\theta^\star)$ are zero-mean sub-Gaussian random vectors (matrices), hence we have
\begin{align*}
    \mathbb{P}\left(\|\nabla f(\theta^\star) \| > \frac{(1 - \rho)\mu\delta_\rho}{2}\right) &\leq 2d\exp\left(-\frac{n(1 - \rho)^2\mu^2\delta_\rho^2}{8c_1N^2}\right), \\
    \mathbb{P}\left(\|\nabla^2 f(\theta^\star) - \nabla^2 F(\theta^\star) \| > \frac{\rho\mu}{2}\right) &\leq 2d\exp\left(-\frac{n\rho^2\mu^2}{8c_2L^2} \right),
\end{align*} where $c_1,c_2 > 0$ are some global constants. Define $\alpha = \min\left\{\frac{(1 - \rho)^2\mu^2\delta_\rho^2}{8c_1N^2}, \frac{\rho^2\mu^2}{8c_2L^2}\right\}$. We then have
\begin{equation*}
    \mathbb{P}(E^c) \leq 4d\exp\left(-\alpha n\right).
\end{equation*} Next, we want to provide a bound on the remaining term, namely $\mathbb{E}\|-\Sigma^{-1}\nabla f(\theta^\star) + \Sigma^{-1}(P + Q)(\theta_1 - \theta^\star)\|^2$. We have
\begin{align*}
    \mathbb{E}\|-\Sigma^{-1}\nabla f(\theta^\star) &+ \Sigma^{-1}(P + Q)(\theta_1 - \theta^\star)\|^2 \leq 2\mathbb{E}\|\Sigma^{-1}\nabla f(\theta^\star)\|^2  + 2\mathbb{E}\|\Sigma^{-1}(P + Q)(\theta_1 - \theta^\star)\|^2 \\ &\leq 2\mathbb{E}\|\Sigma^{-1}\nabla f(\theta^\star)\|^2  + 4\|\Sigma^{-1} \|^2\left(\mathbb{E}\|P(\theta_1 - \theta^\star) \|^2 + \mathbb{E}\|Q(\theta_1 - \theta^\star)\|^2\right) \\ &\leq 2\mathbb{E}\|\Sigma^{-1}\nabla f(\theta^\star)\|^2  + 4\|\Sigma^{-1} \|^2\left(\sqrt{\mathbb{E}\|P\|^4\mathbb{E}\|\theta_1 - \theta^\star\|^4} + \sqrt{\mathbb{E}\|Q\|^4\mathbb{E}\|\theta_1 - \theta^\star\|^4}\right).
\end{align*} We next focus on bounding $\mathbb{E}\|P \|^4$, $\mathbb{E}\|Q \|^4$ and $\mathbb{E}\|\theta_1 - \theta^\star \|^4$. Since we are conditioning on the event $E$, we have
\begin{align*}
    \mathbb{E}\|\theta_1 - \theta^\star \|^4 \leq \frac{2^4\mathbb{E}\|\nabla f(\theta^\star)\|^4}{(1 - \rho)^4\mu^4}.
\end{align*} We then want to bound $\mathbb{E}\|\nabla f(\theta^\star)\|^4$. To that end, we have
\begin{equation}\label{eq:fourth-moment}
    \mathbb{E}\|\nabla f(\theta^\star) \|^4 = \mathbb{E}\left[\|\nabla f(\theta^\star) \| \pm \mathbb{E}\|\nabla f(\theta^\star)\|\right]^4 \leq 8\mathbb{E}\left|\|\nabla f(\theta^\star) \| - \mathbb{E}\|\nabla f(\theta^\star)\|\right|^4 + 8\left[\mathbb{E}\|\nabla f(\theta^\star)\|\right]^4.
\end{equation} The second term can be bounded as follows
\begin{equation*}
    \left[\mathbb{E}\|\nabla f(\theta^\star)\|\right]^4 \leq \left[\mathbb{E}\|\nabla f(\theta^\star)\|^2\right]^2 \leq \frac{N^4}{n^2},
\end{equation*} where in the first inequality we used Jensen's inequality, while the second inequality follows from the fact that $\nabla f(\theta^\star)$ is the average of $n$ zero-mean IID bounded random variables. To bound the first term, we use the following lemma:
\begin{lemma}\label{lm:bound_abs_diff}
    Let $k \geq 2$ and $X_i$, $i = 1,\ldots,n$ be a sequence of independent random vectors in a separable Banach space with norm $\| \cdot \|$ and $\mathbb{E}[\|X_i\|^k] < \infty$. There exists a finite constant $C_k$ such that
    \begin{equation*}
        \mathbb{E}\left|\left\|\sum_{i = 1}^nX_i \right\| - \mathbb{E}\left\|\sum_{i = 1}^n X_i\right\|\right|^k \leq C_k\left[\left(\sum_{i = 1}^n\mathbb{E}\|X_i\|^2 \right)^\frac{k}{2} + \sum_{i = 1}^n\mathbb{E}\|X_i\|^k \right].
    \end{equation*}
\end{lemma} Applying Lemma~\ref{lm:bound_abs_diff} to the first term in~\ref{eq:fourth-moment}, we then get
\begin{align*}
    \mathbb{E}\left|\|\nabla f(\theta^\star) \| - \mathbb{E}\|\nabla f(\theta^\star)\|\right\|^4 &\leq C_4\left[\left(\frac{1}{n^2}\sum_{i = 1}^n\mathbb{E}\|\nabla \ell(\theta^\star;X_i)\|^2 \right)^2 + \frac{1}{n^4}\sum_{i = 1}^n\mathbb{E}\|\nabla \ell(\theta^\star;X_i)\|^4 \right] \leq \frac{C_4N^2}{n^2} + \frac{C_4N^4}{n^3}.
\end{align*} Combining, we get that
\begin{equation*}
    \mathbb{E}\|\nabla f(\theta^\star)\|^4 \leq \frac{8(C_4 + N^2)N^2}{n^2} + \frac{8C_4N^4}{n^3} = \mathcal{O}\left(\frac{N^2(1 + N^2)}{n^2} \right).
\end{equation*} Without the loss of generality, we assume $N \geq 1$, (otherwise, we can always use a more loose upper bound $\widetilde{N} = N + 1$) which gives that, on $E$
\begin{equation*}
    \mathbb{E}\|\theta_1 - \theta^\star \|^4 = \mathcal{O}\left(\frac{N^4}{(1 - \rho)^4\mu^4n^2} \right).
\end{equation*} Note that this immediately gives a global bound on $\mathbb{E}\|\theta_1 - \theta^\star \|^4$, as follows
\begin{align*}
    \mathbb{E}\|\theta_1 - \theta^\star \|^4 &= \mathbb{E}I_{E}\|\theta_1 - \theta^\star \|^4 + \mathbb{E}I_{E^c}\|\theta_1 - \theta^\star \|^4 \leq  \mathcal{O}\left(\frac{N^4}{(1 - \rho)^4\mu^4n^2} \right) + 2^4R^4\mathbb{P}(E^c) \\ &= \mathcal{O}\left(\frac{N^4}{(1 - \rho)^4\mu^4n^2}\right) + 2^4CR^4d\exp(-\alpha n) = \mathcal{O}\left(\frac{\beta}{n^2}\right),
\end{align*} where $\beta = \nicefrac{N^4}{(1 - \rho)^4\mu^4} + \nicefrac{dR^4}{\alpha}$. Next, we want to bound $\mathbb{E}\|P\|^4$, where we recall that $P = \nabla^2 F(\theta^\star) - \nabla^2 f(\theta^\star)$. To that end, we use the following intermediary result.
\begin{lemma}\label{lm:matrices-fourth-mom}
    Let $k \geq 2$ and $X_i \in \mathbb{R}^{d\times d}$ be independent and symmetrically distributed Hermitian matrices. Then
    \begin{equation*}
        \left(\mathbb{E}\left\| \sum_{i=1}^n X_i\right\|^k\right)^\frac{1}{k} \leq \sqrt{2e\log d}\left\|\left(\sum_{i = 1}^n\mathbb{E}X_i^2\right)^\frac{1}{2}\right\| +2e\log d \left(\mathbb{E}\max_{i = 1,\ldots,n} \|X_i\|^k\right)^\frac{1}{k}.
    \end{equation*}
\end{lemma} Define $Z_i = \frac{1}{n}\left(\nabla^2\ell(\theta^\star;X_i) - \nabla^2 F(\theta^\star)\right)$ and notice that $\varepsilon_iZ_i$ are symmetrically distributed matrices, where $\varepsilon_i \in \{-1,1 \}$ are IID Rademacher random variables, independent of $Z_i$. Therefore, applying a standard symmetrization argument and Lemma~\ref{lm:matrices-fourth-mom} (since $\varepsilon_iZ_i$'s are Hermitian and symmetrically distributed), we get
\begin{align*}
    \mathbb{E}\|P\|^4 &= \mathbb{E}\left\|\sum_{i = 1}^nZ_i\right\|^4 \leq 2^4\mathbb{E}\left\|\sum_{i = 1}^n\varepsilon_iZ_i \right\|^4 \\ &\leq \left(5\sqrt{\log d}\left\|\left(\sum_{i = 1}^n\mathbb{E}(\varepsilon_iZ_i)^2\right)^\frac{1}{2}\right\| +4e\log d \left(\mathbb{E}\max_{i} \left\|\varepsilon_iZ_i\right\|^4\right)^\frac{1}{4}\right)^4 \\ &= \left(5\sqrt{\log d}\left\|\left(\sum_{i = 1}^n\mathbb{E}Z_i\right)^\frac{1}{2}\right\| +4e\log d \left(\mathbb{E}\max_{i} \left\|Z_i\right\|^4\right)^\frac{1}{4}\right)^4.
\end{align*} Since $\|\nabla^2 \ell(\theta^\star;X_i)\| \leq L$, we then have $\|\nabla^2 \ell(\theta^\star;X_i) - \nabla^2 F(\theta^\star)\| \leq 2L$, for all $i$, therefore
\begin{align*}
    \left\|\left(\sum_{i = 1}^n\mathbb{E}Z_i^2\right)^\frac{1}{2} \right\| &= \frac{1}{\sqrt{n}}\left\|\left(\mathbb{E}\left(\nabla^2\ell(\theta^\star;X_1) - \nabla^2F(\theta^\star)\right)^2\right)^\frac{1}{2} \right\| \\ &\leq \frac{1}{\sqrt{n}}\sqrt{\mathbb{E}\left\|\nabla^2\ell(\theta^\star;X_1) - \nabla^2F(\theta^\star)\right\|^2} \leq \frac{2L}{\sqrt{n}}
\end{align*} where the first equality follows from the fact that $Z_i$'s are IID and the second inequality follows from the fact that $\|A^\frac{1}{2}\| \leq \|A \|^\frac{1}{2}$ for semi-definite matrices and Jensen's inequality. Similarly, we have
\begin{equation*}
    \frac{1}{n^4}\mathbb{E}\max_i\left\|\nabla^2\ell(\theta^\star;X_i) - \nabla^2F(\theta^\star) \right\|^4 \leq \frac{64L^4}{n^4}.
\end{equation*} Combining everything, we then get
\begin{equation*}
    \mathbb{E}\|P\|^4 \leq \left(\frac{10L\sqrt{\log d}}{\sqrt{n}} + \frac{8eL\log d}{n} \right)^4 = \mathcal{O}\left(\frac{L^4\log^4d}{n^2} \right).
\end{equation*} What remains is to bound $\mathbb{E}\|Q\|^4$, where we recall that $Q = \nabla^2 f(\theta^\star) - \nabla^2 f(\theta^\prime)$ and $\theta^\prime = (1 - k)\theta^\star + k\theta_1$, for some $k \in (0,1)$. To that end, we have
\begin{align*}
    \mathbb{E}\|Q \|^4 &= \mathbb{E}\left\|\frac{1}{n}\sum_{i = 1}^n\left(\nabla^2\ell(\theta^\prime;X_i) - \nabla^2\ell(\theta^\star;X_i) \right) \right\|^4 \leq \frac{1}{n}\sum_{i = 1}^n\mathbb{E}\|\nabla^2 \ell(\theta^\prime;X_i) - \nabla^2\ell(\theta^\star;X_i) \|^4 \\ & \leq \mathbb{E}\|\nabla^2 \ell(\theta^\prime;X_1) - \nabla^2\ell(\theta^\star;X_1) \|^4 \leq H^4\mathbb{E}\|\theta^\prime - \theta^\star \|^4 \leq H^4\mathbb{E}\|\theta_1 - \theta^\star\|^4,
\end{align*} where we used Jensen's inequality in the first, the fact that $X_i$'s are IID in the second, the fact that $\|\theta^\prime - \theta^\star\| = k\|\theta_1 - \theta^\star\| \leq \delta_\rho$, which implies that $\theta^\prime \in U$, and Assumption~\ref{asmpt:loc_behaviour} in the third inequality. Combining everything, gives
\begin{align*}
    \mathbb{E}&\|-\Sigma^{-1}\nabla f(\theta^\star) + \Sigma^{-1}(P + Q)(\theta_1 - \theta^\star)\|^2 \\ &\leq 2\mathbb{E}\|\Sigma^{-1}\nabla f(\theta^\star)\|^2 + 4\|\Sigma^{-1} \|^2\left(\sqrt{\mathbb{E}\|P\|^4\mathbb{E}\|\theta_1 - \theta^\star\|^4} + H^2\mathbb{E}\|\theta_1 - \theta^\star\|^4\right) \\ &\leq 2\|\Sigma^{-1}\|^2\mathbb{E}\|\nabla f(\theta^\star)\|^2 + 4\|\Sigma^{-1} \|^2\left(\sqrt{\mathcal{O}\left(\frac{L^4\log^4d}{n^2} \right)\mathbb{E}\|\theta_1 - \theta^\star\|^4} + H^2\mathbb{E}\|\theta_1 - \theta^\star\|^4\right) \\ &\leq 2\|\Sigma^{-1}\|\mathcal{O}\left(\frac{N^2}{n} \right) + 4\|\Sigma^{-1} \|^2\left(\sqrt{\mathcal{O}\left(\frac{L^4\log^4d}{n^2}\right)\mathcal{O}\left(\frac{\beta}{n^2} \right)} + \mathcal{O}\left(\frac{H^2\beta}{n^2}\right)\right) \\ &\leq \|\Sigma^{-1}\|^2\mathcal{O}\left(\frac{N^2}{n} + \frac{\beta^{\nicefrac{1}{2}}L^2\log^2 d + H^2\beta}{n^2} \right).
\end{align*} Finally, since $\|\Sigma^{-1}\|^2 \leq \frac{1}{\mu^2}$, we have
\begin{equation}\label{eq:bound-mse-user}
    \mathbb{E}\|\theta_1 - \theta^\star\|^2 = \mathcal{O}\left(\frac{N^2}{n\mu^2} + \frac{\beta^{\nicefrac{1}{2}}L^2\log^2 d + H^2\beta}{n^2\mu^2} \right).
\end{equation} We now move on to bounding the second term of interest, i.e., $\|\mathbb{E}\theta_1 - \theta^\star\|^2$. When $E$ occurs, we can recursively apply~\eqref{eq:decomp}, to get
\begin{align*}
    \theta_1 - \theta^\star &= -\Sigma^{-1}\nabla f(\theta^\star) + \Sigma^{-1}\left(P + Q\right)(\theta_1 - \theta^\star) \\ &= -\Sigma^{-1}\nabla f(\theta^\star) + \Sigma^{-1}\left(P + Q\right)(-\Sigma^{-1}\nabla f(\theta^\star) + \Sigma^{-1}\left(P + Q\right)(\theta_1 - \theta^\star)).
\end{align*} Define $v = -\Sigma^{-1}\nabla f(\theta^\star) + \Sigma^{-1}\left(P + Q\right)(-\Sigma^{-1}\nabla f(\theta^\star) + \Sigma^{-1}\left(P + Q\right)(\theta_1 - \theta^\star))$, to get
\begin{equation*}
    \theta_1 - \theta^\star = I_{\{E\}}v + I_{\{E^c\}}(\theta_1 - \theta^\star) = v + I_{\{E^c\}}(\theta_1 - \theta^\star - v),
\end{equation*} which implies
\begin{equation*}
    \|\mathbb{E}\theta_1 - \theta^\star \| \leq \|\mathbb{E}v\| + \mathbb{E}I_{E^c}\|\theta_1 - \theta^\star - v\| \leq \|\mathbb{E}v\| + 2R\mathbb{P}(E^c) + \mathbb{E}I_{E^c}\|v\|.
\end{equation*} Next, we look at $\|\mathbb{E}v\|$. Noticing that $\mathbb{E}\nabla f(\theta^\star) = 0$, gives
\begin{align*}
    \|\mathbb{E}v\| &= \|\Sigma^{-1}\mathbb{E}\left(P + Q\right)(-\Sigma^{-1}\nabla f(\theta^\star) + \Sigma^{-1}\left(P + Q\right)(\theta_1 - \theta^\star))\| \\ &\leq
    \mathbb{E}\|\Sigma^{-1}\left(P + Q\right)(-\Sigma^{-1}\nabla f(\theta^\star) + \Sigma^{-1}\left(P + Q\right)(\theta_1 - \theta^\star))\| \\ &\leq \mathbb{E}\|\Sigma^{-1}\left(P + Q\right)\Sigma^{-1}\nabla f(\theta^\star)\| + \mathbb{E}\|\Sigma^{-1}\left(P + Q\right)\Sigma^{-1}\left(P + Q\right)(\theta_1 - \theta^\star))\| \\ &\leq \|\Sigma^{-1}\|^2\left( \mathbb{E}\|P + Q\|\|\nabla f(\theta^\star) \| + \mathbb{E}\|P + Q\|^2\|\theta_1 - \theta^\star\| \right) \\ &\leq \|\Sigma^{-1}\|^2\left( \sqrt{\mathbb{E}\|P + Q\|^2\mathbb{E}\|\nabla f(\theta^\star) \|^2} + \sqrt{\mathbb{E}\|P + Q\|^4\mathbb{E}\|\theta_1 - \theta^\star\|^2} \right).
\end{align*} Recalling from the above results that
\begin{align*}
    \mathbb{E}\|\theta_1 - \theta^\star\|^4 &= \mathcal{O}\left(\frac{\beta}{n^2}\right), \\
    \mathbb{E}\|P\|^4 &= \mathcal{O}\left(\frac{L^4\log^4d}{n^2} \right), \\
    \mathbb{E}\|Q\|^4 &= \mathcal{O}\left(\frac{H^4\beta}{n^2}\right), \\
    \mathbb{E}\|\nabla f(\theta^\star)\|^4 &= \mathcal{O}\left(\frac{N^4}{n^2}\right),
\end{align*} we then get
\begin{align*}
    \|\mathbb{E}v\|^2 &\leq 2\|\Sigma^{-1} \|^4\left(\mathbb{E}\|P + Q\|^2\mathbb{E}\|\nabla f(\theta^\star) \|^2 + \mathbb{E}\|P + Q\|^4\mathbb{E}\|\theta_1 - \theta^\star\|^2 \right) \\ &\leq 4\|\Sigma^{-1} \|^4\left(\mathbb{E}\|\nabla f(\theta^\star) \|^2\left(\mathbb{E}\|P\|^2 + \mathbb{E}\|Q\|^2\right) + 4\mathbb{E}\|\theta_1 - \theta^\star\|^2\left(\mathbb{E}\|P\|^4 + \mathbb{E}\|Q\|^4 \right)\right) \\ &= \mathcal{O}\left(\frac{N^2L^2\log^2 d + \beta^{\nicefrac{1}{2}}H^2N^2}{n^2\mu^4} +\frac{\beta^{\nicefrac{1}{2}}L^4\log^4 d + \beta^{\nicefrac{3}{2}}H^4}{n^3} \right).
\end{align*} Next, we look at the remaining term, $\mathbb{E}I_{E^c}\|v\|$. We then have
\begin{align*}
    \mathbb{E}I_{E^c}\|v \| \leq \sqrt{\mathbb{P}(E^c)\mathbb{E}\|v \|^2} \leq 2\sqrt{d}\exp(-\nicefrac{\alpha n}{2})\sqrt{\mathbb{E}\|v\|^2}.
\end{align*} We now bound $\mathbb{E}\|v\|^2$. To that end, we have
\begin{align*}
    \mathbb{E}\|v\|^2 &= \mathbb{E}\|-\Sigma^{-1}\nabla f(\theta^\star) + \Sigma^{-1}\left(P + Q\right)(-\Sigma^{-1}\nabla f(\theta^\star) + \Sigma^{-1}\left(P + Q\right)(\theta_1 - \theta^\star)) \|^2 \\ &\leq 2\|\Sigma^{-1}\|^2\mathbb{E}\|\nabla f(\theta^\star)\|^2 + 2\mathbb{E}\|\Sigma^{-1}\left(P + Q\right)(-\Sigma^{-1}\nabla f(\theta^\star) + \Sigma^{-1}\left(P + Q\right)(\theta_1 - \theta^\star)) \|^2 \\ &\leq 2\|\Sigma^{-1}\|^2\left(\mathbb{E}\|\nabla f(\theta^\star)\|^2 + 2\|\Sigma^{-1}\|^2\left[\mathbb{E}\|P + Q\|^2\|\nabla f(\theta^\star) \|^2 + \mathbb{E}\|P + Q \|^4\|\theta_1 - \theta^\star\|^2\right]\right).
\end{align*} Applying Holder's inequality and the previously established bounds (note that a bound on $\mathbb{E}\|P\|^8$, $\mathbb{E}\|Q\|^8$ can be established similarly to the one of $\mathbb{E}\|P\|^4$, $\mathbb{E}\|Q\|^4$) gives
\begin{align*}
    \mathbb{E}\|v\|^2 = \mathcal{O}\left(\frac{N^2}{n\mu^2} + \frac{\sqrt{N^4L^4\log^4d + \beta N^4H^4}}{n^2\mu^4} + \frac{\sqrt{\beta L^8\log^8d + \beta^3H^8}}{n^3\mu^4} \right) = \mathcal{O}\left(\frac{\tau}{n}\right),
\end{align*} where $\tau = \frac{N^2}{\mu^2} + \frac{\sqrt{N^4L^4\log^4d + \beta N^4H^4}}{\mu^4} + \frac{\sqrt{\beta L^8\log^8d + \beta^3H^8}}{\mu^4}$. This implies 
\begin{equation*}
    \mathbb{E}I_{E^c}\|v\| = \mathcal{O}\left(\sqrt{\frac{\tau d}{n}}\exp(-\nicefrac{\alpha n}{2})\right)
\end{equation*} Combining, we finally get that
\begin{equation}\label{eq:norm-expected}
\begin{aligned}
    \|\mathbb{E}\theta_1 - \theta^\star\|^2 = \mathcal{O}\left(\frac{N^2L^2\log^2 d + \beta^{\nicefrac{1}{2}}H^2N^2}{n^2\mu^4} +\frac{\beta^{\nicefrac{1}{2}}L^4\log^4 d + \beta^{\nicefrac{3}{2}}H^4}{n^3} + \left(R^2d + \frac{\tau }{n}\right)d\exp(-\alpha n) \right).
\end{aligned}
\end{equation} Plugging~\eqref{eq:bound-mse-user} and~\eqref{eq:norm-expected} in~\eqref{eq:init_decomposition}, finally gives
\begin{align*}
    \mathbb{E}\|\overline{\theta} - \theta^\star \|^2 = \mathcal{O}\bigg(\frac{N^2}{nm\mu^2} + \frac{\lambda}{n^2} + \frac{G}{n^2m} + \frac{\nu}{n^3} + \left(R^2d + \frac{\tau}{n}\right)d\exp(-\alpha n)\bigg),
\end{align*} where $G = \frac{\beta^{\nicefrac{1}{2}}L^2\log^2 d + H^2\beta}{\mu^2}$, $\lambda = \frac{N^2L^2\log^2 d + \beta^{\nicefrac{1}{2}}H^2N^2}{\mu^4}$, $\nu = \beta^{\nicefrac{1}{2}}L^4\log^4 d + \beta^{\nicefrac{3}{2}}H^4$.
\end{proof}

\end{document}